\newcommand{\citep}{\cite}
\newcommand{\citet}{\cite}
\def\E{ {\mathbb{E}}}
\global\long\def\11{\mathbbm{1}}
\global\long\def\+{\oplus}
\def\<{\langle}
\def\>{\rangle}
  \renewcommand{\var}{\mathsf{var}}
  \newcommand{\var}{\mathsf{var}}
 \newcommand{\abs}[1]{\lvert#1\rvert}
 \newcommand{\norm}[1]{\lVert#1\rVert}
  \renewcommand{\set}[1]{\left\{#1\right\}}
  \newcommand{\set}[1]{\left\{#1\right\}}
\DeclareMathOperator*{\argmin}{arg\,min}
\DeclareMathOperator*{\tensor}{\otimes}
\providecommand{\tr}{tr}
  \renewcommand{\Tr}[1]{\tr \Big\{#1\Big\}}
  \newcommand{\Tr}[1]{\tr \Big\{#1\Big\}}
\newtheorem{definition}{Definition}
\newtheorem{remark}{Remark}
\newtheorem{theorem}{Theorem}
\newtheorem{corollary}{Corollary}
\newtheorem{example}{Example}
\newtheorem{lemma}{Lemma}
\def\Pr{\mathrm{P}}
\def\C{\mathbb{C}}
\def\Dist{\mathcal{D}}
\def\E{\mathbb{E}}
\def\F{\mathcal{F}}
\def\R{\mathbb{R}}
\def\union{\bigcup}
\def\union{\bigcup}
\def\intersect{\bigcap}
\def\X{\mathcal{X}}
\def\Bernoulli{\mathrm{Bernoulli}}
\def\Hyp{\mathrm{Hyp}}
\def\X{\mathcal{X}}
\def\Y{\mathcal{Y}}
\def\Alg{\mathcal{A}}
\def\DER{\mathrm{DER}}
\def\Rad{\mathfrak{R}}
\def\Part{\mathcal{P}}
\def\CNum{\mathcal{N}}
\def\fat{\mathrm{fat}}
\def\root{\mathfrak{R}}
\def\Fine{\mathfrak{F}}
\def\Out{\mathrm{Out}}
\def\Cl{\mathcal{Y}}
\def\Hilb{\mathcal{H}}
\def\Smooth{\mathbb{S}}
\def\Cplx{\mathbb{C}}
\def\GeneralVar{\mathbb{G}}
\def\JMCovNumber{\mathcal{N}_{JM}}
\def\PNum{\mathcal{M}}
\newcommand{\ceil}[1]{\lceil {#1} \rceil}
\begin{document}

\title{Fat Shattering, Joint Measurability, and PAC Learnability of POVM Hypothesis Classes}

\author{Abram Magner}
\affiliation{Department of Computer Science, University at Albany, SUNY}
\email{amagner@albany.edu}
\author{Arun Padakandla}
\affiliation{Eurecom, Nice, France}
\email{arunpr@umich.edu}
\thanks{You can use the \texttt{\textbackslash{}email}, \texttt{\textbackslash{}homepage}, and \texttt{\textbackslash{}thanks} commands to add additional information for the preceding \texttt{\textbackslash{}author}. If applicable, this can also be used to indicate that a work has previously been published in conference proceedings.}
\maketitle

\begin{abstract}

We characterize learnability for quantum measurement classes by establishing matching necessary and sufficient conditions for their PAC learnability, along with corresponding sample complexity bounds, in the setting where the learner is given access only to prepared quantum states.  
We first probe the results from previous works on this setting.  We show that the empirical risk defined in previous works and matching the definition in the classical theory can fail to satisfy the uniform convergence property enjoyed in the classical learning setting for classes that we can show to be PAC learnable. 
 Moreover, we show that VC dimension generalization upper bounds in previous work are in many cases infinite, even for measurement classes defined on a finite-dimensional Hilbert space. 
 To surmount the failure of the standard ERM to satisfy uniform convergence, we define a new learning rule -- \emph{denoised empirical risk minimization}.  We show this to be a universal learning rule for probabilistically observed concept classes, and the condition for it to satisfy uniform convergence is finite fat shattering dimension of the class. In the quantum setting, in contrast with the classical probabilistically observed case, sampled states are perturbed when a a quantum measurement is applied, according to the Born rule, so that distinct samples in the training data cannot be arbitrarily reused.  We give quantitative sample complexity upper and lower bounds for learnability in terms of finite fat-shattering dimension and a notion of approximate finite partitionability into approximately jointly measurable subsets, which allow for sample reuse.  We show that finite fat shattering dimension implies finite coverability by approximately jointly measurable subsets, leading to our matching conditions.  We also show that every measurement class defined on a finite-dimensional Hilbert space is PAC learnable. 
 We illustrate our results on several example POVM classes.

\end{abstract}

\section{Introduction}
\label{sec:introduction}

The intent of this work is to provide matching necessary and sufficient conditions for learnability in
the following supervised quantum learning\footnote{For background on classical statistical learning theory, see Appendix~\ref{sec:learning-for-quantum}.} scenario: there is an unknown joint probability distribution on prepared quantum states and classical labels.  A hypothesis class consisting of quantum measurements is fixed and known to a learner.  The learner is given access to a training dataset of these state-label pairs, but can only interact with the states by observing the classical outcomes of measuring them.  It then outputs a hypothesis that is as close as possible to minimizing the expected \emph{risk} over all hypotheses.  This learning scenario was first posed in~\cite{Heidari2021} as a quantum version of the classical PAC (Probably Approximately Correct) learning setting (see Appendix~\ref{sec:learning-for-quantum}) in which hypotheses are quantum measurements.  This setting has extensive motivations ranging from building universal quantum state discriminators to classification of unknown quantum processes to classifying quantum phases of multipartite systems (see also~\cite{MohsenQNN,QuantumKernel2023}).  The setting was then further developed in~\cite{ArunAistats}.  In contrast with more well-established quantum learning frameworks~\cite{Arunachalam2018}, which deal with quantum algorithms for learning classical hypotheses (e.g., boolean functions $f:\{0, 1\}^n \to \{0, 1\}$) from superpositions of states corresponding to classical bit strings, our framework covers a distinct scenario in which input data consists of unknown quantum states, and the goal is to learn a measurement that predicts attributes (e.g., a class label) of those states.

More specifically, the authors of~\cite{Heidari2021} formulated the quantum PAC
learning framework that we study as follows: we fix a domain $\X$ consisting of quantum states,
along with a codomain $\Cl$.  Analytically,
quantum states are described by density matrices on a fixed Hilbert space $\Hilb$
over the complex numbers $\C$.  We take the codomain $\Cl = \{0, 1\}$ for binary classification, but our results can be generalized further.
A POVM hypothesis class $\Hyp$ is a set of \emph{positive operator-valued measures}~\cite{BkWilde_2017}\footnote{See the definitions from quantum mechanics, collected in the supplementary material.}, which specify quantum measurements with outcomes in $\Cl$.  Additionally, we fix a loss function $\ell:\Cl\times\Cl\to [0, \infty]$.  For binary classification, we take the misclassification loss $\ell(y_1, y_2) = I[y_1 \neq y_2]$, where $I[\cdot]$ is the indicator function. 
The learning process is as follows: an unknown distribution $\Dist$ on $\X\times \Cl$ is fixed.  To produce a single training example, $(X, Y) \sim \Dist$ is
sampled, and then a quantum register is prepared in state $X$. \textbf{Here and throughout, a quantum register is a collection of qubits prepared in a state that is a density matrix in $\Hilb$, which may be multidimensional}.  The learner is given access to the quantum register and $Y$, and can only interact with the quantum register by measuring it and observing the outcome.
This occurs independently $m$ times to produce a training set of size $m$.  The learner is then allowed to make arbitrarily many measurements of the given quantum registers %
and by an arbitrary procedure then producing a resulting POVM $h$ from the class $\Hyp$.  We note that each measurement alters the state of the register according
to the axioms of quantum mechanics.
The risk of a hypothesis is given by
$R(h) = \E_{(X, Y) \sim \Dist}[\ell(h[X], Y)]$,
where $h[X]$ denotes a random variable whose distribution is that of the outcome of measuring a quantum register
in state $X$ with POVM $h$.  Then the goal of the learner
is to output a hypothesis with risk close enough to the minimal risk achieved by any hypothesis in the class.  We define this setup formally in Definition~\ref{def:povm-learning-problem} below.

The main problems of interest are similar to the ones asked in the classical PAC learning framework: perhaps the most immediate one is, what is a natural necessary and sufficient condition for PAC learnability of a POVM concept class?  Is there a learning rule that is universal, in the sense that it is a PAC learning rule whenever the concept class is learnable?  The present paper answers both of these questions.  In the classical case with deterministic (function) concept classes, one of the fundamental results, which
is sometimes called the \emph{fundamental theorem of concept learning}, gives a necessary and sufficient condition for learnability of a concept class for binary classification under the misclassification loss: namely, learnability is equivalent to finiteness of the Vapnik-Chervonenkis (VC) dimension of the class~\cite{UnderstandingMachineLearning}.
The recent paper~\cite{ArunAistats} gave one possible generalization of VC dimension to the quantum setting, resulting in a sufficient condition for learnability of POVM classes, along with a sample complexity\footnote{The \emph{sample complexity} of a learning rule is the minimum number of samples required to guarantee that with probability at least $1-\delta$, the risk (i.e., expected loss) of the learned hypothesis is within $\epsilon$ of the minimum possible.} upper bound for one particular learning rule.  However, it gave no necessary conditions and did not explore the tightness of the upper bound or the universality of the learning rule.  
The present paper finds that this sufficient condition is substantially weak and that the learning rule is very far from universal.
We provide a new learning rule -- \emph{denoised empirical risk minimization (DERM)}, that we can show to be universal, along with matching necessary and sufficient conditions for learnability.  See Section~\ref{sec:contributions} for a fuller list of our contributions.

\subsection{Prior work}

The literature on statistical problems involving quantum states and measurements is quite broad.  For example, a wealth of quantum state estimation problems have been posed~\cite{Arunachalam2017,arunachalam2020quantum,2021MMNatPhy_AnsAruKuwSol},
wherein the input is a sequence of multiple quantum registers, all prepared in a single unknown state. 
This set of works also includes works on state tomography~\cite{10.1145/2897518.2897544, 10.1145/3055399.3055454,
10.1145/2897518.2897585, 2006PRSMPE_Aar, Rehacek:1391369, 
10.1088/978-0-7503-3063-3ch6, Altepeter2004QubitQS, 
10.1145/3188745.3188802, 10.5555/3327546.3327572}.  The task in such studies is
to glean information about the single, unknown state -- specifically, to \emph{estimate} it.  Estimation is \emph{not} the same thing as learning, and so these are is in contrast with our work,
in which the goal is more analogous to the classical supervised learning problem: i.e., our goal is to learn a statistical association between unknown quantum states sampled according to an unknown distribution and their classical labels.  This statistical association need not reflect any intrinsic physical information about the states.  We also point out that there are various works, such as~\cite{Bshouty1998,Atici_2005} that mix what is called PAC learning with quantum information, but these differ substantially from our setting:  e.g., they assume a uniform distribution on the input, so they are not distribution-free;
or they strongly constrain the input state to correspond to a bit string; or they output a boolean function instead of a POVM.  There is also a large and expanding
body of work in quantum machine learning in which hypothesis classes consist of specially structured POVMs -- as a recent example, \cite{roget2022quantum}.
The focus in such works is different from that of the framework we study,
since they aim to solve \emph{classical} learning problems by suitably encoding
classical input data as quantum states, then choosing a suitable measurement
from the hypothesis class.  In our case, the inputs $\X$ are intrinsically quantum
and are not encodings of known classical inputs.

In the supplementary materials, %
we give a more extended discussion of prior works and how they differ from ours, including, in particular, how works on channel tomography are not applicable to solve our learning problem.

At first glance, the paper~\cite{2016QIP_CheHseYeh} has 
a more related goal to ours -- producing
an optimal POVM from training data.  However,
training samples consist of the density matrices encoding states, rather than quantum registers, as well as the probabilities
of outcomes of measurements by an unknown POVM.
In contrast, in the framework that we consider,
the inputs to a learner are not analytical state descriptions; rather, they are  quantum registers prepared in those states.  Furthermore, we are given, not probabilities of outcomes, but the outcomes themselves.  Finally, the statistical relationship between the state and the label in our case can be arbitrary, whereas in the cited paper, it is governed by a single unknown POVM.

Two recent papers are the most relevant to the present one and, indeed, are the sources of the framework that we study in this paper:~\cite{Heidari2021,ArunAistats}.
The paper~\cite{Heidari2021} formulated the POVM class PAC learning framework, showed that finite-cardinality POVM classes are PAC learnable, and pointed out the usefulness of joint measurability in reducing sample complexity, resulting in the \emph{Quantum Empirical Risk Minimization (QERM)} learning rule.  The QERM rule is a generalization of the classical ERM, which is the cornerstone of classical statistical learning theory.

The paper~\cite{ArunAistats} studied the same setting, extending the sample complexity upper bounds for the QERM rule under the assumption that a partition is given, by formulating one possible generalization of the classical VC dimension of a probabilistically observed concept class.  This implicitly showed that there exist PAC learnable POVM classes with infinite cardinality but left open the problem of giving necessary and sufficient conditions for a given class to be learnable.  For example, no necessary conditions were given, in contrast with the present work.  We will also show in this work that the upper bounds in that work are frequently vacuous.

In the course of proving our results, it will be convenient to define a PAC learning framework for what we call \emph{probabilistically observed concept classes} (POCC), which we study as a technical tool for our quantum results.  In this framework, each concept is a function from $\X$ to the set of probability distributions on $\Cl$, and the task is, as usual to learn a risk-minimizing concept.  However, on any sampled $x \in \X$ from the training set, for any concept $h$, the learner is only allowed to see a sample from the distribution $h(x)$.  We will denote such samples by $h[x]$.  This is in contrast with the theory of \emph{probabilistic concepts} ($p$-concepts) introduced in~\cite{Kearns1994}.  There, concepts are similarly conditional distributions, but the learner is allowed to see the entire distribution $h(x)$.

\subsection{Our contributions}
\label{sec:contributions}

\begin{enumerate}
    \item
        \textbf{Results on failure of ERM and uniform convergence:}
        We first show that the natural ERM learning rule proposed and studied in~\cite{Heidari2021,ArunAistats} can fail for probabilistically observed concept classes that are PAC learnable.  We probe this phenomenon further, showing that the empirical risk can fail to satisfy the uniform convergence property for learnable hypothesis classes $\Hyp$.  That is,
        the supremal deviation of the empirical risk from expected value, where the supremum ranges over all elements of $\Hyp$, does not converge to $0$ as the number of samples tends to $\infty$.
        This implies that in the probabilistically observed and the quantum case, the QERM learning rule cannot be universal in the sense of being a PAC learning rule if and only if the class to which it is applied is learnable.
        
    \item 
        \textbf{Learnability of finite dimensional hypothesis classes: }
        We then show that every POVM class defined on a finite-dimensional Hilbert space is PAC learnable.  This implies that the nontrivial \emph{qualitative} question of learnability/non-learnability only occurs in the infinite-dimensional case.  Furthermore, this implies that recovering classical learning theory from the POVM class framework requires mapping of classical classes to POVM classes over infinite-dimensional Hilbert spaces.  This is an indication that infinite-dimensional Hilbert spaces are of fundamental interest for a complete quantum learning theory.

    \item 
        \textbf{Matching conditions for learnability of infinite-dimensional hypothesis classes: }
        
        We then turn to the problem of characterizing learning in the infinite-dimensional case.
        Motivated by our result on the failure of ERM, we define a new learning rule, 
        which we call \emph{denoised empirical risk 
        minimization} (DERM).  At the heart of this is a partition of the hypothesis class into \emph{approximately jointly measurable} subsets.  Intuitively, approximate joint measurability of a set $S$ of POVMs allows us to reuse samples in the training set to evaluate the denoised empirical risk for every element of $S$.

        We show a sample complexity 
        upper bound for DERM in terms of a suitably defined version of the fat shattering dimension and the \emph{approximate joint measurability (JM) covering number} of the hypothesis class, which implies that finiteness of both of these quantities for a  POVM class is a \emph{sufficient condition} for learnability.
        
        We then exhibit a link between the JM covering number and the fat shattering dimension, thereby connecting the quantum concept with a learning theoretic complexity measure.  Specifically, we show that the JM covering number lower bounds the fat shattering dimension, which implies that finite fat shattering dimension alone is a sufficient condition for learnability.

        We next show a sample complexity \emph{lower} bound, again in terms of the fat shattering dimension.  This implies that finite fat shattering dimension is a \emph{necessary} condition for learnability.  That is, \textbf{finiteness of the fat shattering dimension is necessary and sufficient for POVM classes}.  This constitutes a fundamental theorem of concept learning for these classes.  
        While the main topic of this paper is
        the quantum setting, the result for POCC classes
        may be of independent interest, as they are the
        natural formulation of the learning problem in cases where the value of the loss function depends on unobserved variables whose joint distribution with the input $X$ is known.
        
         Our results improve substantially on prior work, which did not prove any necessary conditions and which provided sample complexity upper bounds that are frequently infinite for finite-dimensional hypothesis classes that we can show to be learnable.  

    \item 
        \textbf{Example POVM classes: }
         We then give examples of learnable and non-learnable POVM classes, including quantum neural networks (learnable) and an infinite-dimensional, nontrivally quantum example of a non-learnable class.

\end{enumerate}
 
\textbf{All proofs are provided in the supplementary material.}

\section{Main results: learnability, uniform convergence, and ERM}

\subsection{Preliminaries}
We first define the learning problems relevant to us.  Definitions from quantum mechanics can be found in~\cite{BkWilde_2017} and in the supplementary material.

\begin{definition}[POVM concept class learning problem~\cite{Heidari2021,ArunAistats}]
    \label{def:povm-learning-problem}
    In the POVM concept/hypothesis class learning problem, we 
    fix a set of possible input mixed states $\X$, which
    are density operators on a common Hilbert space
    $\Hilb$, and a set of possible classical outputs
    $\Cl$.  We fix a \emph{loss function} $\ell:\Cl \times \Cl \to [0, \infty]$.
    
    We fix a POVM concept class $\Hyp$, which is 
    simply a set of POVMs on $\Hilb$ having $|\Cl|$ outcomes.  Informally, a \textbf{learning rule} $\Alg$ in this context
    takes as input a dataset $\{(\rho_j, Y_j)\}_{j=1}^m$ consisting of quantum registers in states $\rho_j \in \X$ and classical outputs $Y_j \in \Cl$.  This dataset is sampled from an unknown joint distribution $\Dist$ on $\X\times \Cl$.  The learning
    rule interacts with the $\rho_j$ via quantum measurements (formally, POVMs). 
    Finally,
    it outputs a POVM $\Phi_* \in \Hyp$ with the goal of minimizing $\E_{(X, Y) \sim \Dist}[\ell(\Phi_*[X], Y)]$, where $\Phi_*[X] \in \Cl$ denotes the random outcome resulting from measuring $X$ with $\rho_*$.  We give a more formal definition of a learning rule in the supplementary materials. %

    We say that a POVM learning rule $\Alg$ is $(\epsilon, \delta)$-probably approximately correct (PAC) for $\Hyp$ if there exists a sample size $m = m(\epsilon, \delta)$ such that for all distributions $\Dist$ on $\X \times \Cl$ with $S \sim \Dist^{m}$, with probability at least $1-\delta$, $\Alg(S)$ outputs a hypothesis
    $h \in \Hyp$ satisfying 
    \begin{align}
        R(h) - \inf_{h_* \in \Hyp} R(h_*) \leq \epsilon.
    \end{align}
    
    We then say that $\Hyp$ is $(\epsilon, \delta)$-PAC learnable if there exists an $(\epsilon, \delta)$-PAC learning rule for $\Hyp$.  Finally, we say that
    $\Hyp$ is PAC learnable if it is $(\epsilon, \delta)$-PAC learnable for all
    $\epsilon > 0, \delta > 0$.
\end{definition}

The learning problem defined in Definition~\ref{def:povm-learning-problem} is related to the problem of \emph{probabilistically observed concept class learning}, which we introduce below.

\begin{definition}[Probabilistically observed concept class learning problem]
    \label{def:probabilistic-observed-concepts}
    
    In the probabilistically observed concept
    class (POCC) learning problem, $\X$ becomes an arbitrary set, and $\Hyp$ consists of functions $f:\X\to\Delta(\Cl)$, where $\Delta(S)$ denotes
    the set of probability distributions on a set $S$.
    
    When a hypothesis $h \in \Hyp$ is applied to an element $x \in \X$, the learning rule only observes a random sample $Z\sim h(x)$, not
    $h(x)$ itself. We denote a generic sample
    from $h(x)$ by $h[x]$. 
    
    Given this setting, the definition of PAC learning remains the same as before.
\end{definition}

\begin{remark}[Probabilistic versus probabilistically observed concept learning]
    We emphasize the important distinction between 
    the probabilistic concepts (also called 
    $p$-concepts) of~\cite{kearnsPConcepts}
    and the probabilistically observed
    concepts in the present paper: 
    in the $p$-concept framework, the output probability distribution itself is observed, rather than just a sample from it.  In our
    setting, in contrast, our learning rules are
    only allowed to see a sample from an unknown output probability distribution.
\end{remark}

\subsubsection{Connecting POVM classes with POCCs}

Here we describe the connection between the POVM 
and POCC frameworks.  The POVM framework is more 
general than the POCC one: we first show how to 
translate
the problem of learning a POCC class to one of 
learning a POVM class, along with translations of 
POCC learning rules to POVM learning rules.

Given a POCC learning problem with domain $\X$ and
hypothesis class $\Hyp$, the quantumization of
this problem is formulated as follows: we introduce
a Hilbert space $\Hilb$ with dimension equal to $|\X|$
(which may be uncountably infinite), and we choose,
arbitrarily, an orthonormal basis $B = \{e_{x}\}_{x\in \X}$.  Each $x \in \X$ corresponds to a basis element
$e_x \in \Hilb$.  The domain of the POVM learning problem
is the basis $B$.
Each hypothesis $h \in \Hyp$ bijectively
maps to a corresponding POVM $\Pi_h$ defined as follows: $\Pi_h$ first measures in the basis $B$,
uniquely identifying the input state $e_x$  with probability $1$, then postprocesses $e_x$ through the
classical channel corresponding to $h(x)$.
(We note that a POVM may be constructed by measurement
of a state with a POVM, then postprocessing the outcome through a classical channel.)

A POCC learning rule is a function of inputs $x$
and samples from an arbitrary set of hypotheses
$h[x]$.  The analogous POVM learning rule is the same
function as in the classical case, applied to the classical outcome of measurement
in the basis $B$, along with results of passing this outcome through channels associated with hypotheses
in $\Hyp$.

Thus, a POCC learning rule can be translated to a 
quantum one with exactly the same error 
characteristics.  The situation becomes more 
complicated
when we generalize to truly quantum learning settings,
because
certain operations that are possible in the classical 
case are not possible in the quantum.  In particular,
in quantum settings, the hypothesis class consists of 
non-orthogonal states, which cannot be almost surely distinguished 
from one another.  Thus, our 
learning rules cannot be functions of the inputs 
themselves, but instead can only be functions of
outcomes of measurements applied to these inputs.

\subsubsection{Jointly measurable sets of POVMs}

We also need to recall the notions of a fine-graining of a POVM and a \emph{jointly measurable} set of POVMs~\cite{Jae_2019}.
Intuitively, joint measurability will allow us to reuse samples to evaluate multiple hypotheses.
\begin{definition}[Fine-graining of a POVM]
    \label{def:fine-graining}
    Let $\Pi$ be a POVM.  We say that $\Pi$ has
    a fine-graining $(\Pi', \alpha)$, where $\Pi'$ is
    a POVM and $\alpha$ is a classical channel, if
    the outcome of $\Pi$ on any state has the same
    distribution as the outcome of $\Pi'$ on the same state, then passed through the channel $\alpha$.
    We call $\Pi'$ the \emph{root} of the fine-graining.
    
    We denote the set of fine-grainings of $\Pi$ by
    $\Fine(\Pi)$.
    
    For a given fine-graining $\Phi$, we denote its root
    by $\root(\Phi)$.
\end{definition}

\begin{definition}[Jointly measurable set of POVMs]
    \label{def:jointly-measurable}
    A set $S$ of POVMs is said to be \emph{jointly measurable} if there exists
    a POVM $\Pi_*$ such that, for every $\Pi \in S$, $\Pi$ has a fine-graining
    with root POVM equal to $\Pi_*$.  We then say that $\Pi_*$ is a root POVM for $S$.
\end{definition}

The consequence of joint measurability of $S$ is that one can obtain a sample outcome from measuring a state $\rho$ with every element of $S$ by first measuring
$\rho$ with a root POVM and then passing this outcome through each of the classical
channels corresponding to the fine-grainings of the different POVMs in $S$.

\subsection{Failure of uniform convergence and ERM for PAC learnable probabilistically observed hypothesis classes}

Having dispensed with preliminaries, we next develop our first main results.
The empirical risk minimization (ERM) rule is a cornerstone of statistical learning theory in the
setting of deterministic concept classes.  For a dataset $S = \{(X_i, Y_i)\}_{i=1}^m$, the empirical risk of a hypothesis $h\in \Hyp$ is given by
\begin{align}
    \hat{R}(h, S)
    = \frac{1}{m} \sum_{j=1}^m \ell(h[X_i], Y_i).
\end{align}
In the deterministic case, a hypothesis class $\Hyp$ being PAC learnable is logically equivalent to ERM
being a PAC learning rule, which is logically equivalent to
it satisfying the following uniform convergence
property: for any hypothesis $h \in \Hyp$ and any data-generating distribution
$\Dist$, 
$ %
    \Pr_{S\sim \Dist}[ | R(h) - \hat{R}(h, S)| \geq \epsilon] \leq \delta.
$ %

The ERM rule has been proposed for use as a 
subroutine in the quantum setting in prior 
work~\cite{Heidari2021} and also adopted in the more recent work~\cite{ArunAistats}.
Both of these works give sample complexity upper bounds for this ERM rule.
Our first main result is 
that uniform convergence and the ERM rule can fail 
for a POCC class $\Hyp$, despite $\Hyp$ being PAC 
learnable.  This is in stark contrast to the deterministic case.  We will see,
in our Theorem~\ref{thm:failure-unif-convergence-povm}, that this has further implications for the quantum setting, and thus for the tightness of the bounds
in~\cite{ArunAistats}.

\begin{theorem}[Failure of uniform convergence and ERM for POCC classes]
    \label{thm:failure-unif-convergence-pocc}
    There exists a POCC class $\Hyp$ that is PAC
    learnable but for which the ERM rule is not PAC and does not satisfy the uniform convergence property.
    
    Furthermore, there exists a POCC class $\widehat{\Hyp}$
    and a choice of $\X, \Cl$, and $\Dist$ for which
    the uniform convergence property is not satisfied,
    but the ERM rule is PAC.
\end{theorem}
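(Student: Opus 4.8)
The plan is to prove both statements by explicit construction, exploiting the single feature that separates the probabilistically observed setting from the deterministic one: the empirical risk $\hat{R}(h,S)$ is itself random, since each term $\ell(h[X_i],Y_i)$ depends on a fresh observation $h[X_i]\sim h(X_i)$, and the observations used to evaluate distinct hypotheses are independent. Consequently, while each individual $\hat{R}(h,S)$ concentrates on $R(h)$ by the law of large numbers, the supremal deviation $\sup_{h\in\Hyp}\abs{\hat{R}(h,S)-R(h)}$ need not vanish once $\Hyp$ is infinite. Both counterexamples are built from a countable family of hypotheses that are all ``$\mathrm{Ber}(1/2)$'' on the support of $\Dist$ but are formally distinct functions, made distinct only through their values on a probability-zero part of $\X$, so that distinctness never affects any risk.

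For the second statement I would take $\X=\{\star\}\cup\NN$, $\Cl=\{0,1\}$ with misclassification loss, and $\Dist$ supported on the single point $\star$ with $Y$ arbitrary. Set $\widehat{\Hyp}=\{b_k\}_{k\in\NN}$ where $b_k(\star)=\mathrm{Ber}(1/2)$ for every $k$, the $b_k$ being pairwise distinct via their values on $\NN$ (say $b_k(k)=\delta_1$, which lies off the support of $\Dist$). Since $b_k[\star]$ is a fair coin independent of $Y$, every hypothesis has $R(b_k)=1/2$ under this $\Dist$, so $\inf_h R(h)=1/2$ is attained by every hypothesis and the ERM output is optimal (excess risk $0$) regardless of tie-breaking; hence ERM is PAC for this choice. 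To see that uniform convergence fails, fix any $m$ and condition on $S$: the variables $\{\hat{R}(b_k,S)\}_k$ are independent with $\hat{R}(b_k,S)\sim\mathrm{Bin}(m,1/2)/m$, so by Borel--Cantelli almost surely some $b_k$ has all $m$ observations disagreeing with the labels, giving $\sup_k\hat{R}(b_k,S)=1$ and $\sup_k\abs{\hat{R}(b_k,S)-1/2}=1/2$, which does not tend to $0$.

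For the first statement I would reuse this family and add one good, deterministic hypothesis. Keep the same $\X,\Cl$, let $\Dist$ be supported on $\star$ with $Y\sim\mathrm{Ber}(1/4)$, and set $\Hyp=\{g\}\cup\{b_k\}_{k\in\NN}$ with $g(\star)=\delta_0$. Then $R(g)=\Pr[Y=1]=1/4$ while $R(b_k)=1/2$, so $\inf_h R(h)=1/4$ is attained only by $g$ and every $b_k$ has excess risk $1/4$. The class is PAC learnable: the induced real-valued class $x\mapsto\Pr[h[x]=1]$ has finite fat-shattering dimension (on the support there are only the two behaviors $\delta_0$ and $\mathrm{Ber}(1/2)$), and explicitly a denoising rule that estimates $\Pr[Y=1]$ for $g$ and uses the known value $1/2$ for the Bernoulli family returns $g$ with high probability. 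The ERM rule, however, fails: $\hat{R}(g,S)=\tfrac1m\sum_i Y_i\to 1/4$, whereas by the independence argument above some $b_k$ almost surely achieves $\hat{R}(b_k,S)=0$. Thus for $m$ large the empirical minimizers are exactly bad hypotheses of true risk $1/2$, so every ERM output has excess risk $1/4$; taking $\epsilon<1/4$ shows ERM is not PAC, and the same computation gives $\sup_h\abs{\hat{R}(h,S)-R(h)}\ge 1/2$, so uniform convergence fails.

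The hard part, and the point requiring the most care, is the modeling of the randomness in the empirical risk: the whole argument rests on the observations used to evaluate distinct hypotheses being drawn with fresh, independent randomness, so that an infinite family yields $\sup_k\hat{R}(b_k,S)=1$, while the class simultaneously stays learnable because it collapses to finitely many distinct conditional distributions on the support. I would therefore state this independence convention explicitly, verify that the device keeping the $b_k$ distinct lies off $\supp(\Dist)$ and so alters no risk, compute the fat-shattering dimension to confirm finiteness (one checks that the unique function with small values is $g$, which forces a fixed pattern and prevents shattering two points), and confirm that the learnable witness respects the POCC model by using only samples $h[x]$ rather than the distributions $h(x)$ themselves.
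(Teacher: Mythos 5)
Your proposal is correct, and its core mechanism is the same as the paper's: build an infinite family of hypotheses whose outputs are fair-coin-like, exploit the fact that each hypothesis is evaluated with fresh, independent observation randomness so that almost surely some hypothesis attains empirical risk $0$ (or $1$) while its true risk is $1/2$ (killing uniform convergence), and then adjoin a single strictly better hypothesis so that ERM almost surely returns a bad hypothesis while the class stays learnable. The differences are in execution, and each buys something. First, you use a \emph{countable} family $\{b_k\}$ and the second Borel--Cantelli lemma; the paper uses an \emph{uncountable} family (binary symmetric channels with crossover $0.1\pm z$, $z\in[-\alpha,\alpha]$) and argues that a product of uncountably many conditional probabilities $c<1$ vanishes -- your version is cleaner, since independence and measurability over uncountable index sets are delicate, whereas your supremum is over a countable set. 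Second, the paper keeps $\X=\{0,1\}^2$ \emph{finite} and makes the hypotheses genuinely distinct on the support, so that learnability of the augmented class follows from an entirely elementary and self-contained rule (estimate the finite joint distribution empirically, then minimize risk under the estimate); you instead take $\X$ countably infinite with the $b_k$ distinguished off the support of $\Dist$, and learnability then rests either on the fat-shattering machinery proved later in the paper (legitimate, since those results do not depend on this theorem, and for POCC classes the whole class is jointly measurable so the trivial partition suffices) or on your sketched denoised rule. On that point, one caution: your parenthetical justification ``on the support there are only the two behaviors'' is not the right argument, because PAC learnability and fat shattering are distribution-free notions and the dimension must be computed over all of $\X$; your later remark does this correctly (every $f_{b_k}$ is at least $1/2$ everywhere and equals $1$ at a single point, while $f_g\equiv 0$, so the patterns $(1,0)$, $(0,1)$, and $(1,1)$ cannot be simultaneously realized on two points, giving fat-shattering dimension $1$), so the gap is cosmetic but should be closed. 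Finally, you are right to insist on stating the independence convention for per-hypothesis observation randomness explicitly; the paper relies on exactly this convention when it factorizes $\Pr[\bigcap_{h}[h(S)\neq y]\,|\,S]$ into a product over $h$.
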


\subsection{Failure of uniform convergence for most finite-dimensional POVM classes}

We next show that the situation regarding ERM is even worse in the quantum case.  In particular, a consequence of what we show next is that the sample complexity upper bounds in~\cite{ArunAistats} are infinite (i.e., vacuous) for a very large class of POVM classes that are learnable.  To do
so, we recall the definition of a deterministic POVM.

\begin{definition}[Deterministic POVM]
    A POVM $\Pi = \{\Pi_0, \Pi_1\}$ is deterministic if either $\Pi_0 = 0$ or $\Pi_1 = 0$.
\end{definition}
That is, the outcome of a deterministic POVM is the same when used to measure any state.  We note that if a POVM is not deterministic, then its outcome is statistically dependent on the state that it is used to measure.

We also define the $L_1$ operator norm for operators on a Hilbert space
$\Hilb$.  For an operator $\Gamma:\Hilb\to\Hilb$, the $L_1$ operator norm is given by
\begin{align}
    \| \Gamma \|_{op,L_1}
    = \sup_{x \in \Hilb} \frac{ \|\Gamma x\|_{1} }{\| x \|_{1}}.
\end{align}
Any norm generates a topology, which allows us to talk about open and closed sets.

\begin{theorem}[Failure of uniform convergence of ERM for most finite-dimensional POVM classes]
    \label{thm:failure-unif-convergence-povm}
    Let $\X$ be a subset of a finite-dimensional Hilbert space $\Hilb$.  
    Consider an $L_1$ operator norm-closed POVM hypothesis class $\Hyp$ satisfying
    the following conditions: 
    \begin{enumerate}%
        \item 
            $\Hyp$ is jointly measurable.
        \item
            $\Hyp$ has infinite cardinality.
    \end{enumerate}
    Then exactly one of the following conclusions holds:
    \begin{enumerate}
        \item
            Uniform convergence for ERM does not hold for $\Hyp$, and ERM is not PAC.
        \item
            The only points of accumulation of $\Hyp$
            are deterministic POVMs.
    \end{enumerate}
\end{theorem}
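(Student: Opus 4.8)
The plan is to prove the stated dichotomy by first observing that it is genuinely exhaustive and then reducing it to a single equivalence. Since $\Hyp$ is a closed subset of the set of all two-outcome POVMs on the finite-dimensional space $\Hilb$, it is compact, and being infinite it must (by Bolzano--Weierstrass) possess at least one $L_1$-operator-norm accumulation point. Hence the two listed conclusions are complementary once we establish the equivalence ``$\Hyp$ has a non-deterministic accumulation point $\iff$ uniform convergence for ERM fails and ERM is not PAC.'' The substantive direction is to show that a \emph{non-deterministic} accumulation point forces the failure of uniform convergence; the converse (all accumulation points deterministic $\Rightarrow$ uniform convergence holds) then yields the ``exactly one'' phrasing and follows from a concentration argument sketched at the end.

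For the main construction, suppose $\Pi^\ast\in\Hyp$ is a non-deterministic accumulation point, so $\Pi^\ast_0\neq 0$ and $\Pi^\ast_1\neq 0$, and fix distinct $\Pi_n\in\Hyp$ with $\Pi_n\to\Pi^\ast$. First I would select a witnessing state $\rho\in\X$ with $r^\ast:=\tr(\Pi^\ast_1\rho)\in(0,1)$; because $r_n:=\tr(\Pi_{n,1}\rho)\to r^\ast$, after discarding finitely many terms the biases satisfy $r_n\in[a,b]$ for constants $0<a\le b<1$. I then take the data distribution $\Dist$ to be concentrated on a single pair $(\rho,Y)$ with a fixed label (perturbed by a small amount of label noise when treating ERM). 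Under this $\Dist$ the true risk $R(\Pi_n)$ is a fixed affine function of $r_n$ and therefore lies in a compact subinterval of $(0,1)$, uniformly bounded away from both endpoints.

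The key step is a Borel--Cantelli argument over the infinitely many near-accumulation hypotheses. Joint measurability lets the learner realize all outcomes $\Pi_n[X_i]$ from a single root measurement followed by the classical post-processing channels associated with the fine-grainings in $\Fine(\Pi_n)$; crucially, these post-processings inject fresh, independent randomness for different $n$, so that conditioned on the root outcomes the empirical risks $\hat R(\Pi_n,S)$ are independent across $n$. For each fixed $m$, the event that $\hat R(\Pi_n,S)$ attains its extreme value $1$ (all $m$ observed outcomes mismatch $Y$) has probability at least $a^m>0$; since these events are independent over the infinitely many admissible $n$, almost surely infinitely many occur, and any such index gives $|\hat R(\Pi_n,S)-R(\Pi_n)|=1-r_n\ge 1-b>0$. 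Hence $\sup_{h\in\Hyp}|\hat R(h,S)-R(h)|\ge 1-b$ for every $m$, so uniform convergence fails by a margin independent of $m$. The symmetric argument at the \emph{minimal} empirical value shows that, with a touch of label noise preventing the optimal hypothesis from achieving zero empirical risk, almost surely some ``bad'' $\Pi_n$ attains empirical risk $0$ while retaining true risk at least $a$; ERM is then forced to output a hypothesis whose excess risk is bounded away from $0$, so ERM is not PAC.

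For the converse and the anticipated obstacle: if every accumulation point is deterministic, then along every convergent subsequence $r_n\to 0$ or $r_n\to 1$, so for each $\eta>0$ only finitely many elements of $\Hyp$ have bias in $[\eta,1-\eta]$; the extreme-value probabilities $r_n^{\,m}$ and $(1-r_n)^m$ are driven to $0$, making the deviation probabilities summable, and a union bound over the effectively finite non-negligible part together with the noise-free concentration of (near-)deterministic POVMs yields uniform convergence, with joint measurability used to reuse samples across that finite subfamily. I expect the main obstacle to be twofold. First, guaranteeing the witnessing state $\rho\in\X$: this is exactly where the geometry of $\X$ enters, and in the degenerate case where $\tr(\Pi^\ast_1\rho)\in\{0,1\}$ for all $\rho\in\X$ the operator $\Pi^\ast$ is effectively deterministic on $\X$, so one must either assume $\X$ witnesses non-determinism or interpret ``deterministic'' relative to $\mathrm{span}(\X)$. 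Second, pinning down the coupling of the observations $\{\Pi_n[X_i]\}$ across hypotheses, since the independence powering the Borel--Cantelli step must be justified from the fine-graining post-processings rather than simply assumed.
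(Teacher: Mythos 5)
Your proposal follows essentially the same route as the paper: compactness of $\Hyp$ (closed and bounded in a finite-dimensional operator space) yields an accumulation point, and a non-deterministic accumulation point supplies infinitely many nearby non-deterministic hypotheses whose conditionally independent observed outcomes almost surely produce some hypothesis with empirical risk bounded away from its true risk, which is exactly the argument the paper imports from Theorem~\ref{thm:failure-unif-convergence-pocc} to kill both uniform convergence and ERM. The two obstacles you flag at the end (exhibiting a witnessing state in $\X$ on which the accumulation point is genuinely non-deterministic, and justifying the cross-hypothesis independence from the fine-graining channels rather than assuming it) are real, but they are gaps shared by the paper's own terse proof rather than defects introduced by your approach, and your added converse direction (all accumulation points deterministic $\Rightarrow$ uniform convergence) addresses the ``exactly one'' phrasing more seriously than the paper does.
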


Theorem~\ref{thm:failure-unif-convergence-povm} effectively says that an infinite-cardinality (but possibly finite-dimensional) POVM 
class can only enjoy the uniform convergence property 
for ERM if it ``clusters'' around deterministic 
measurements.  The only deterministic measurements are the ones whose 
outcomes do not depend on the states being measured.  
This implies that ERM is not a useful learning rule for
a rich enough set of POVM classes.  Since ERM was a core subroutine of~\cite{ArunAistats}, this provides useful insight on prior work: in particular,
in that work, a sample complexity upper bound for the ERM rule is given in the case where one can find a finite-cardinality jointly measurable partition.  Our theorem above implies that this upper bound must be $\infty$ unless almost all of the hypotheses are close to deterministic (and, thus, independent of the input state).  In our subsequent theorems, we will show that the upper bound of infinity is, in infinitely many cases, hopelessly loose.

\section{Main results: Every finite-dimensional POVM class is learnable}
\label{sec:finite-dimension-learnable}

In this section, we give a complete characterization of learnability of POVM classes in the case where $\X$ is a (\textbf{possibly infinite-cardinality}) subset of the set of density operators on a finite-dimensional Hilbert space $\Hilb$.  We call a POVM class defined on $\X$ a finite-dimensional POVM class.  It turns out that \emph{every finite-dimensional POVM class is learnable} -- Theorem~\ref{thm:finite-dimensional-learnable}.

\begin{theorem}[Every finite-dimensional POVM class is learnable]
    \label{thm:finite-dimensional-learnable}
    Let the span of  the domain $\X$ be a finite-dimensional subspace of the space of density operators on a Hilbert space $\Hilb$.
    Let $\Hyp$ be a POVM class all of whose POVMs are defined on $\X$.
    
    Then $\Hyp$ is PAC learnable with the following sample complexity:
    \begin{align}
        n_{\Hyp}(\epsilon, \delta)
        \leq \sum_{r=1}^N \frac{8}{\epsilon^2}\log\frac{ 2N }{\delta}
        = \frac{8N}{\epsilon^2} \log\frac{2N}{\delta},
    \end{align}
    where $N$ is the $\epsilon/4$-total variation covering number of $\Hyp$,
    which is finite.
\end{theorem}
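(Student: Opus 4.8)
The plan is to exhibit a concrete learning rule built from a finite total-variation net of $\Hyp$ combined with data-splitting, and to verify it using Hoeffding's inequality and a triangle inequality in the total-variation metric. The crux of the argument — and the only place where finite-dimensionality is genuinely used — is to show that the covering number $N$ is finite. To set this up, equip $\Hyp$ with the pseudometric $d(\Phi,\Psi)=\sup_{\rho\in\X}\abs{\tr(\Phi_1\rho)-\tr(\Psi_1\rho)}$, which, since $\Cl=\{0,1\}$, is exactly $\sup_{\rho\in\X}$ of the total-variation distance between the output distributions $\Phi[\rho]$ and $\Psi[\rho]$. Each POVM $\Phi$ is sent to the linear functional $L_\Phi(\cdot)=\tr(\Phi_1\,\cdot)$ on the real span $V=\mathrm{span}_{\R}\X$. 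By hypothesis $D:=\dim V<\infty$, and $\abs{L_\Phi(\rho)}\le 1$ for every density operator $\rho\in\X$, so the image $\{L_\Phi:\Phi\in\Hyp\}$ lies in the unit ball of the finite-dimensional normed space $(V^*,\norm{\cdot}_{\X})$, where $\norm{L}_{\X}=\sup_{\rho\in\X}\abs{L(\rho)}$ is a genuine norm because $\X$ spans $V$. Bounded subsets of finite-dimensional normed spaces are totally bounded, so a finite $\epsilon/4$-net exists; taking a maximal $\epsilon/4$-separated subfamily of $\Hyp$ yields representatives $\Phi^{(1)},\dots,\Phi^{(N)}\in\Hyp$ with $N<\infty$, which is the claimed $\epsilon/4$-total-variation covering number.

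Next I would specify the learning rule. Partition the $m=\sum_{r=1}^N n_r$ samples into $N$ disjoint batches, with the $r$-th batch of size $n_r=\frac{8}{\epsilon^2}\log\frac{2N}{\delta}$. On batch $r$, measure each state with the single POVM $\Phi^{(r)}$ and form $\hat R_r=\frac{1}{n_r}\sum_{i}\ell(\Phi^{(r)}[X_i],Y_i)$; then output the representative minimizing $\hat R_r$. The key structural point is that each prepared register is measured by exactly one POVM, so the Born-rule disturbance that otherwise forbids reusing a measured state never arises: this is precisely why the sample complexity carries the factor $N$, one fresh batch per representative.

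For the analysis, each $\hat R_r$ is an average of $n_r$ i.i.d.\ $[0,1]$-valued terms with mean $R(\Phi^{(r)})$, so Hoeffding's inequality gives $\Pr[\abs{\hat R_r-R(\Phi^{(r)})}>\epsilon/4]\le 2\exp(-n_r\epsilon^2/8)=\delta/N$, and a union bound makes all $N$ estimates accurate to within $\epsilon/4$ simultaneously with probability at least $1-\delta$. On that event I would bound the excess risk via the triangle inequality: for any $h\in\Hyp$ with $R(h)\le \inf_{\Hyp}R+\eta$, there is a net point $\Phi^{(r^*)}$ with $d(\Phi^{(r^*)},h)\le\epsilon/4$, and since the conditional misclassification probabilities differ by $\abs{\tr((\Phi_1-\Psi_1)\rho)}$ one has $\abs{R(\Phi)-R(\Psi)}\le\E[\abs{\tr((\Phi_1-\Psi_1)X)}]\le d(\Phi,\Psi)$. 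Chaining the empirical minimization, the estimate accuracy (used twice), and net closeness yields $R(\hat h)\le \inf_{\Hyp}R+3\epsilon/4+\eta$; letting $\eta\to 0$ gives excess risk at most $3\epsilon/4\le\epsilon$.

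The main obstacle is the finiteness of $N$: one must argue that the (possibly uncountably many) POVMs collapse into finitely many total-variation classes, which rests entirely on $\X$ spanning a finite-dimensional operator space and on the resulting functionals being uniformly bounded, hence totally bounded. The remaining care points are handling a non-attained infimum of the risk, which the $\eta$-argument above absorbs, and respecting the quantum no-reuse constraint, which the disjoint-batch construction enforces automatically.
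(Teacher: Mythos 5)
Your proposal is correct, and it shares the paper's overall strategy: build a finite $\epsilon/4$-net of $\Hyp$ in total-variation distance, learn over the finitely many representatives, and transfer the guarantee back through the Lipschitz relation between $d_{TV}$ and risk. It differs in two substantive ways, both worth noting. First, where the paper invokes Theorem~2 of~\cite{Heidari2021} as a black box for agnostic learning of the finite class of ball centers (the trivial singleton joint-measurability partition of that bound giving exactly $\frac{8N}{\epsilon^2}\log\frac{2N}{\delta}$), you inline that argument: disjoint batches of size $\frac{8}{\epsilon^2}\log\frac{2N}{\delta}$, one per representative, Hoeffding plus a union bound, and an explicit chaining step that also absorbs a possibly non-attained infimum via the $\eta$-argument. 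This makes the proof self-contained at no cost in the constants, and it makes explicit why the factor $N$ appears: one fresh batch per representative is what respects the no-reuse constraint imposed by the Born rule. Second, and more interestingly, your finiteness-of-$N$ argument is genuinely more general than the paper's. The paper's covering lemma fixes an orthonormal basis of $\Hilb$ and embeds the outcome-$0$ operators into $\C^{d\times d}$, which requires $\dim\Hilb < \infty$; you instead send each POVM to the functional $\rho\mapsto\tr(\Phi_1\rho)$ on the real span $V$ of $\X$, note that these functionals sit in the unit ball of the finite-dimensional normed space $(V^*,\|\cdot\|_{\X})$, and invoke total boundedness there. This proves the theorem under its literal hypothesis — finite-dimensional $\mathrm{span}(\X)$ with $\Hilb$ possibly infinite-dimensional — whereas the paper's lemma as written needs the stronger assumption that $\Hilb$ itself is finite-dimensional. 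Your observation that the risk comparison holds with constant $1$ rather than the paper's $2$ in this metric is also correct, though immaterial to the conclusion.
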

We note that in the worst case, the covering number in Theorem~\ref{thm:finite-dimensional-learnable} can be exponential in the dimension of the Hilbert space.  However, hypothesis classes of interest, where the POVMs have constrained structure, have a much smaller covering number.  Additionally, in certain cases, one can take advantage of joint measurability in order to tighten this bound.

The above theorem provides infinitely many examples
of POVM classes that are learnable.  Furthermore, this class of examples includes ones such that the sample complexity upper bounds given in~\cite{ArunAistats}
were infinite.  Therefore, this is a substantial
improvement on the previous results. 

Interestingly, the proof involves concocting a learning rule that uses ERM, but in a different way from prior work.  This approach only works for the finite-dimensional case, necessitating yet another learning rule for our subsequent results.

\section{Main results: Matching necessary and sufficient conditions for infinite-dimensional POVM class learnability}

The result in Theorem~\ref{thm:finite-dimensional-learnable} leaves open the questions of
necessary and sufficient conditions for infinite-dimensional POVM classes and POCC classes.
Furthermore, the results in Theorems~\ref{thm:failure-unif-convergence-pocc} and ~\ref{thm:failure-unif-convergence-povm} motivate a search for an alternative learning rule to ERM for both the POCC and POVM cases.

In Section~\ref{sec:denoised-erm}, we present our
learning rule -- the \emph{denoised ERM} -- for POVM classes and how it specializes to the POCC case.  
We then show in Section~\ref{sec:necessary-and-sufficient} necessary and sufficient conditions for
PAC learnability of POCC and POVM classes.  Specifically, Theorem~\ref{thm:pac-povm-necessary-and-sufficient} gives distinct necessary and sufficient conditions.  In Theorem~\ref{thm:fat-shattering-joint-measurability}, we show an inequality relating a quantity called the \emph{approximate joint measurability covering number} to the \emph{fat shattering dimension} of $\Hyp$, which allows us to conclude with Corollary~\ref{corollary:matching-conditions} that the conditions in Theorem~\ref{thm:pac-povm-necessary-and-sufficient} are matching.

\subsection{Rescuing ERM: De-noised empirical risk}
\label{sec:denoised-erm}

We now turn to the definition of our new learning rule, called \emph{denoised empirical risk minimization}.
We first define the denoised empirical risk, which is for a hypothesis in a set of jointly measurable
POVMs.

\begin{definition}[Denoised empirical risk of a hypothesis]
    Let $H$ be a jointly measurable set of POVMs
    with a fine-graining $(\Pi, \{\alpha_{h}\}_{h\in H})$.
    
    Let $h \in H$ be a hypothesis, and let $S = \{ (X_j, Y_j) \}_{j=1}^m \in (\X\times \Cl)^m$ be a dataset.  Let $Z_j$ denote the random outcome of
    measurement of $X_j$ with the POVM $\Pi$.
    We define the \emph{denoised empirical risk of $h$ on input $S$} to be
    \begin{align}
        \DER(h, S)
        = \frac{1}{m} \sum_{j=1}^m \E[ \ell(h[X_j], Y_j) ~|~ \{ (Z_j, Y_j) \}_{j=1}^m ].
    \end{align}
    Note that the denoised empirical risk is a random
    variable.
\end{definition}

\begin{remark}
    It is \textbf{essential} to note in this definition \emph{what} is being conditioned on.  Specifically, one can imagine definitions of the empirical risk that either suffer
    from the same flaws as the ordinary empirical risk or, alternatively, cannot
    be computed by the learner because of the learner's inability to see the $X_j$.
    Intuitively, our definition ``averages out'' the randomness from the classical
    channels, which mitigates the drawbacks of the ordinary empirical risk.
\end{remark}

To state the learning rule, we also need a relaxation of the notion of a jointly measurable
partition introduced in~\cite{ArunAistats}.  
To state this, we also need to define the total variation distance between POVMs.

\begin{definition}[Total variation distance between POVMs]
    \label{def:dtv-povms}
    Let $\Pi_1, \Pi_2$ be two POVMs with common domain $\X$.  We define the total variation distance between $\Pi_1, \Pi_2$ as follows:
    \begin{align}
        d_{TV}(\Pi_1, \Pi_2)
        = \sup_{x \in \X} d_{TV}(\Out(\Pi_1, x), \Out(\Pi_2, x)),
    \end{align}
    where $\Out(\Pi, x)$ denotes the random outcome
    of the POVM $\Pi$ on the mixed state $x$.
\end{definition}

With this in hand, we next define an \emph{approximately jointly measurable class}.

\begin{definition}[Approximately jointly measurable class]
    We say that a collection $S$ of POVMs is $\gamma$-approximately jointly measurable (or just $\gamma$-jointly measurable) if there exists 
    a POVM $\Pi_*$ such that, for every POVM $\Pi \in S$, $\Pi$ has a fine-graining with root $\Pi'$
    satisfying
    $ %
        d_{TV}(\Pi_*, \Pi') \leq \gamma,
    $ %
    where we recall the definition of the total
    variation distance between two POVMs in Definition~\ref{def:dtv-povms}.  We call
    $\Pi_*$ a \emph{center} of $S$.  
\end{definition}

Our proofs will exploit the fact that an approximately jointly measurable class $S$ can be approximated by a jointly measurable one by choosing a center POVM $\Pi_*$ of $S$ and replacing each element of $S$ with a POVM whose root is $\Pi_*$.  This is formalized in the following definition.
\begin{definition}[Joint measurability-smoothed class]
    Let $S$ be a $\gamma$-jointly measurable POVM class, and let $\Pi_*$ be a center for
    $S$.  
    For a POVM $h \in S$, we define $\Smooth_{JM}(h, \Pi_*)$ to be 
    the following POVM:
    \begin{align}
        \Smooth_{JM}(h, \Pi_*)
        = \argmin_{\Pi ~:~ \Pi_* \in \root(\Fine(\Pi))} d_{TV}(\Pi, h).
    \end{align}   
    
    We denote by $\Smooth_{JM}(S, \Pi_*)$ the following set of POVMs:
    \begin{align}
        \Smooth_{JM}(S, \Pi_*)
        = \{ \Smooth_{JM}(h, \Pi_*) ~|~ h \in S \}.
    \end{align}
\end{definition}

We next define joint measurability notions for general hypothesis classes.

\begin{definition}[Approximately jointly measurable partition]
    Let $\Hyp$ be a POVM class. 
    Then a partition $\Part$ of $\Hyp$ is a $\gamma$-approximately jointly measurable partition of $\Hyp$ if each partition element $P_j$ is $\gamma$-approximately jointly measurable.
\end{definition}

\begin{definition}[Approximate joint measurability covering numbers]
    label{def:joint-measurability-covering}
    Let $S$ be a collection of POVMs.  We say that a collection of subsets
    $\{S_j\}_{j\in\Omega}$ of $S$ 
    is a $\gamma$-joint measurability covering of $S$ if $\union_{j \in \Omega} S_j = S$ and every $S_j$ is $\gamma$-jointly measurable.

    We define the $\gamma$-joint measurability covering number $\JMCovNumber(\gamma, S)$ to be the minimum cardinality of a $\gamma$-joint measurability covering of $S$.
\end{definition}

Having defined the denoised empirical risk of a 
hypothesis and the notion of an approximately jointly measurable partition of a hypothesis class, we present the denoised empirical risk 
minimization rule in Algorithm~\ref{alg:DERM}.  This
rule is parametrized by an approximately-jointly measurable partition with finite cardinality and a choice, for each partition element $P_j$, of a number of samples
$n_j$, satisfying $\sum_{j=1}^R n_j = m$, where
$m$ is the size of the input training set.  We will explain how to choose $n_j$ based on the sample complexity bounds that we will derive.  The input training set is partitioned into consecutive subsets
$\hat{S}_1, ..., \hat{S}_R$, of cardinalities
$n_1, ..., n_R$.

\begin{algorithm2e}[th]
    \caption{Denoised empirical risk minimization learning rule 
    \label{alg:DERM}}
    \KwData{ Training set $S = \{ (X_j, Y_j) \}_{j=1}^m$;  Approximately jointly measurable partition $\Part = \{ P_j \}_{j=1}^{ R}$ of $\Hyp$ with centers $\Pi_*^{(j)}$; partitioned training set $\{\hat{S}_i\}_{i=1}^R$ }
    \KwResult{ A hypothesis $h_* \in \Hyp$ meant to have almost minimum risk } 
    
    \For{$j = 1$ to $R$}{
        \Comment{Process partition element $j$.}
        Let $\hat{P}_j = \Smooth(P_j, \Pi_*^{(j)})$\;
        Compute $\hat{\rho}_j = \argmin_{h \in \hat{P}_j} \DER(h, \hat{S}_j)$ 
        and $\hat{R}_j = \DER(\hat{\rho}_j, \hat{S}_j)$\;
    }
    Let $j_* = \argmin_{j\in \{1, ..., R\}} \hat{R}_j$\;
    
    Let $h_*$ be some hypothesis in $\Hyp$ such that $\Smooth(h_*, \Pi_*^{(j)}) = \hat{\rho}_{j_*}$\;
    
    \Return{$h_*$}\; %
\end{algorithm2e}
\begin{remark}[The distinction between learning rules and algorithms]
    A distinction is made in statistical learning theory between a learning rule and an algorithm.
    In the classical case, ERM is a learning rule, \emph{not} an algorithm, because it does not specify
    \emph{how} to achieve the minimization.  Achieving
    the minimization algorithmically requires more 
    refined structural knowledge of the hypotheses.
    Indeed, for many hypothesis classes, ERM is a 
    PAC learning rule, but it is not efficiently
    implementable by an algorithm.  The study
    of efficient algorithmic implementation of
    learning rules is the subject of computational,
    rather than statistical, learning theory.
    
    In our case, the DERM is similarly a learning rule, not an algorithm.  We do not, and
    cannot, prescribe how to construct the jointly measurable partition, for instance.  It is only important that the partition exists.
\end{remark}

\subsection{Necessary and sufficient conditions for PAC learnability}
\label{sec:necessary-and-sufficient}

We next turn to necessary and sufficient conditions
for PAC learnability of POVM and POCC classes.  To
do this, we need to define the fat shattering dimension
of a class of POVMs~\cite{ChervonenkisFestschrift}.  We start
by recalling the fat shattering dimension in the deterministic hypothesis class case.

\begin{definition}[Fat-shattering dimension (classical case)]
    Let $\F$ be a class of functions $f:\X\to\R$.
    We say that a dataset $(x_1, ..., x_m) \in \X^m$ is $\gamma$-fat-shattered by $\F$ if
    there exist \emph{witness numbers} $r_1, ..., r_m$ such that, for any $(b_1, ..., b_m) \in \{ 0, 1\}^m$, there exists some $f \in \F$ such that,
    simultaneously, $f(x_i) \leq r_i - \gamma$ if
    $b_i = 0$ and $f(x_i) \geq r_i + \gamma$ if
    $b_i = 1$.
    
    We define the $\gamma$-fat-shattering dimension 
    of $\F$ to be the largest $m$ for which there
    exists a dataset $X \in \X^m$ that is $\gamma$-fat-shattered by $\F$.
\end{definition}

To define an analogous notion for POVM classes $\Hyp$, we identify each POVM $h$ with its induced function mapping from $\X$ to probability distributions on $\{0, 1\}$, which themselves
can be identified with real numbers in $[0, 1]$.  Thus, each hypothesis induces a function
$f_h:\X\to[0, 1]$, and we define the $\gamma$-fat-shattering dimension of $\Hyp$ to be that
of the induced deterministic function hypothesis class.

\begin{theorem}[Necessary and sufficient conditions for PAC learnability of POVM classes]
    \label{thm:pac-povm-necessary-and-sufficient}
    Let $\Hyp$ be a POVM class (not necessarily finite-dimensional or finite-cardinality).  Then the following statements hold.
    
    \begin{enumerate}
        \item
            \textbf{Necessary condition: }
            If $\Hyp$ is PAC learnable, then for every $\gamma > 0$, $\Hyp$ has
            finite $\gamma$-fat-shattering dimension.

        \item
            \textbf{Sufficient condition: }
            If there exists a finite $d > 0$ such that, for every small enough $\gamma > 0$, $\Hyp$ has $\gamma$-fat-shattering dimension $\leq d$
            and, additionally,
            for every $\alpha > 0$, there exists a finite $\alpha$-almost-jointly-measurable partition $\Part$ of $\Hyp$, then $\Hyp$ is PAC learnable by DERM, with sample complexity
            \begin{align}
                n_{\Hyp}(\epsilon, \delta)
                \leq \inf_{\Part}  O(|\Part|\frac{d + \log(1/\delta)}{\epsilon^2}),
            \end{align}
            where $\Part$ ranges over all finite $\epsilon/8$-almost-jointly-measurable partitions of $\Hyp$.
    \end{enumerate}
\end{theorem}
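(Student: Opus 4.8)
The plan is to prove the two halves by entirely different means: the necessary condition by an information-theoretic lower bound built from a fat-shattered set, and the sufficient condition by a four-step analysis of DERM in which joint measurability licenses sample reuse and the fat-shattering dimension drives uniform convergence of the denoised empirical risk. I will freely use the POCC/POVM translation and the smoothing apparatus ($\Smooth_{JM}$, $d_{TV}$ between POVMs) already established above.

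\textbf{Necessary condition.} Arguing contrapositively, suppose $\fat_\gamma(\Hyp)=\infty$ for some fixed $\gamma>0$; I want to show no learning rule is PAC. Fix an arbitrary target sample size $n$, choose $m\gg n$, and take a $\gamma$-fat-shattered set $x_1,\dots,x_m\in\X$ with witnesses $r_1,\dots,r_m$. I build a family $\{\Dist_{\vec b}\}_{\vec b\in\{0,1\}^m}$ by letting $X$ be uniform on the $x_i$ and, given $X=x_i$, letting $Y$ be Bernoulli with bias $\tfrac12+\eta(2b_i-1)$ for $\eta=\Theta(\gamma)$. The misclassification risk at $x_i$ equals $\tfrac12+\eta(2b_i-1)(1-2f_h(x_i))$, which is affine in $f_h(x_i)$ with slope $\mp 2\eta$; by fat-shattering there is $h_{\vec b}\in\Hyp$ placing each $f_{h_{\vec b}}(x_i)$ on the $b_i$-side of $r_i$ by margin $\gamma$, so putting $f_h(x_i)$ on the wrong side of $r_i$ costs at least $4\eta\gamma/m$ in risk, and low excess risk forces the learner to recover $\operatorname{sign}(b_i)$ at all but an $O(\epsilon/(\eta\gamma))$ fraction of coordinates. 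The decisive point is that $\vec b$ is encoded solely in the classical conditional label law and is independent of the quantum registers, which are identical across all $\Dist_{\vec b}$; hence no measurement of the registers reveals anything about $\vec b$, and even a learner that perfectly identifies each $x_i$ must estimate $m$ independent $\Theta(\eta)$-biased coins from $n$ total samples. An Assouad/averaging argument then finishes: at least half the coordinates receive $\le 2n/m$ samples, and with $m\to\infty$ the sign at each such coordinate is essentially unidentifiable, forcing expected excess risk $\Omega(\eta\gamma)=\Omega(\gamma^2)$, a constant independent of $n$. Taking $\epsilon$ below this constant contradicts PAC learnability at sample size $n$, and $n$ was arbitrary.

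\textbf{Sufficient condition.} I would run DERM with an $\epsilon/8$-jointly-measurable partition $\Part=\{P_1,\dots,P_R\}$ of centers $\Pi_*^{(j)}$ and a per-cell budget $n_j$, and control the error in four movements. (i) \emph{Smoothing bias}: passing from $P_j$ to $\Smooth_{JM}(P_j,\Pi_*^{(j)})$ moves each root to within $d_{TV}\le\epsilon/8$ of the center, hence perturbs each induced $f_h$, and so each risk $R(h)$, by at most $\epsilon/8$; thus it suffices to compete with the best smoothed hypothesis up to this slack. (ii) \emph{Unbiasedness and reuse}: within a smoothed cell all hypotheses share the root $\Pi_*^{(j)}$, so measuring each $X_k\in\hat S_j$ once with this root yields $Z_k$, and for every $h$ the summand $\E[\ell(h[X_k],Y_k)\mid Z_k,Y_k]$ is an explicit $[0,1]$-valued function $g_h(Z_k,Y_k)$ with expectation exactly $R(\hat h)$ — joint measurability is precisely what lets one set of outcomes $\{Z_k\}$ serve every $h$ in the cell, while the conditional expectation analytically erases the post-processing noise that defeats ordinary ERM. (iii) \emph{Within-cell uniform convergence}: I bound $\sup_h|\DER(h,\hat S_j)-R(\hat h)|$ by the uniform Glivenko--Cantelli theorem for $[0,1]$-valued classes, whose rate is governed by the fat-shattering dimension; with $\fat_\gamma\le d$ for small $\gamma$ this needs $n_j=O((d+\log(R/\delta))/\epsilon^2)$ to force deviation $\le\epsilon/4$ with failure probability $\le\delta/R$. (iv) \emph{Assembly}: a union bound over the $R$ cells, plus the fact that some cell contains a hypothesis within $\epsilon/8$ of $\inf_{h_*}R(h_*)$, shows the winning cell's output $h_*$ satisfies $R(h_*)-\inf R\le\epsilon$, and summing $m=\sum_j n_j$ gives the stated $O(|\Part|(d+\log(1/\delta))/\epsilon^2)$ bound, optimized over admissible partitions.

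\textbf{Main obstacle.} The hardest step is (iii): certifying that the complexity of the denoised loss class $\{g_h\}$, which lives on the outcome/label space, is controlled by $\fat_\gamma(\Hyp)$, which is defined through the $f_h$ on the \emph{state} space $\X$. Writing $p_h(z)=\alpha_h(1\mid z)$, one has $g_h(z,y)=|y-p_h(z)|$ and $f_h(x)=\E_{Z\sim\Pi_* x}[p_h(Z)]$, so the uniform deviation is driven by sup-norm covering numbers of $\{p_h\}$ on outcomes, whereas fat-shattering bounds covering numbers of $\{f_h\}$ on states. These coincide in the POCC specialization, where the root is a basis measurement, each $x$ gives a deterministic outcome, and $p_h$ and $f_h$ are the same object so the classical covering bound applies verbatim; but in the genuinely quantum case the averaging over the root's outcome law can only \emph{shrink} shattering, so I must argue the reverse domination. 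The plan is a bridging lemma: since a cell has a common root, a Naimark/reachability argument should let outcomes be probed by (limits of) states, so that a $\gamma$-fat-shattering of $\{p_h\}$ on outcomes induces one of $\{f_h\}$ on corresponding states in the closure of $\X$, yielding $\fat_\gamma(\{p_h\})\le\fat_\gamma(\Hyp)\le d$. Handling roots with infinitely many, only approximately reachable outcomes is the delicate point, and is where I expect the bulk of the technical work to lie.
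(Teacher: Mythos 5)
Your necessity argument is correct but takes a genuinely different route from the paper. The paper proves necessity by a chain of reductions: POVM learnability implies POCC learnability implies $p$-concept learnability (after checking that the misclassification risk equals the $L_1$ risk of the induced $p$-concept), and then it cites the Kearns--Schapire theorem that finite fat-shattering is necessary for $p$-concept learning. You instead reprove the lower bound from scratch with an Assouad-type construction. Your sketch is sound: the two load-bearing points --- that $\vec b$ is encoded only in the classical label law, so the quantum registers carry no information about it, and that averaging over uniform $b_i$ neutralizes coordinates where the learner's hypothesis happens to beat $h_{\vec b}$ (per unsampled coordinate the learner's expected contribution is exactly $\tfrac{1}{2m}$ while the benchmark's is at most $\tfrac{1}{2m}-\tfrac{2\eta\gamma}{m}$) --- both check out. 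What your route buys is self-containedness and an explicit quantitative statement; what the paper's route buys is brevity by outsourcing the hard probabilistic work to a known classical result.

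Your sufficiency proof, by contrast, has a genuine gap, and it is exactly the one you flagged in your ``main obstacle'' paragraph: step (iii) is not established, and the bridging lemma you propose is false. The uniform-convergence step needs to control the complexity of the denoised-loss class $\{g_h\}$, which lives on the root's \emph{outcome} space, whereas the theorem's hypothesis controls $\fat_\gamma$ of $\{f_h\}$ on the \emph{state} space, and no reachability/Naimark argument can dominate the former by the latter. Concretely: let $\Hilb$ be infinite-dimensional, let the root $\Pi$ be measurement in an orthonormal basis $\{\ket{e_z}\}_{z\in\NN}$, let $\X=\{\rho_0\}$ be a single state with spread outcome law $\mu(z)=\bra{e_z}\rho_0\ket{e_z}$, and let $\Hyp$ consist of all POVMs obtained by composing $\Pi$ with a deterministic binary channel $p:\NN\to\{0,1\}$. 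Then $\fat_\gamma(\Hyp)\le 1$ for every $\gamma$ (the domain is a single point) and $\Hyp$ is exactly jointly measurable, so the theorem's hypotheses hold with $d=1$ and $|\Part|=1$; yet the channel class $\{p_h\}$ shatters every finite set of outcomes, no state concentrates near any single outcome, and (taking $Y\equiv 0$) one has $\sup_{h}|\DER(h,S)-R(h)|=\sup_{A\subseteq\NN}|\hat\mu_m(A)-\mu(A)|$, the total-variation distance between the empirical and true outcome distributions. This is at least the missing mass of $\mu$, whose decay in $m$ can be made arbitrarily slow (e.g.\ of order $1/\log m$ for $\mu(z)\propto 1/(z\log^2(z+1))$), so uniform convergence at rate $\sqrt{(d+\log(1/\delta))/m}$ fails, and an empirical minimizer with adversarial tie-breaking has excess risk equal to the missing mass. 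So the domination $\fat(\{p_h\})\le\fat(\Hyp)$ you hoped to prove simply does not hold, even approximately.

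You should know that this is not an artifact of your write-up: the paper's own proof makes the identical unjustified identification. Its Step 4 applies the covering-number/fat-shattering lemma with $d=\fat_{\alpha/4}(\Hyp)$ to bound the Rademacher complexity of the class of variables $\zeta(S_j)=\E[\ell(h[X_j],Y_j)\mid Y_j,Z_j]$, i.e.\ of the denoised-loss class over outcome/label space, never arguing why the fat-shattering dimension of that class is controlled by the fat-shattering dimension of $\Hyp$ over states. (In the POCC specialization the two classes coincide, which is why the classical argument goes through there; the genuinely quantum case is where it breaks.) Your diagnosis of where the technical weight sits is therefore exactly right, but any correct completion must bound the complexity of the channel class $\{p_h\}$ itself --- for instance by constraining which roots and fine-grainings the partition is allowed to use --- rather than inferring it from $\fat(\Hyp)$, and the example above shows the theorem's quantitative bound cannot be recovered without some such additional hypothesis.
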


We next present a theorem linking fat-shattering dimension to approximate joint measurability.

\begin{theorem}[Lower bound on fat-shattering dimension via approximate joint measurability covering number]
    \label{thm:fat-shattering-joint-measurability}
    Let $\Hyp$ be a POVM class with $k \leq \JMCovNumber(\gamma, \Hyp)$ and with $d = \fat_{\gamma/4}(\Hyp)$.  Then for every $\gamma > 0$, if $m \geq {k\choose 2}$, then
    \begin{align}
        k 
        \leq
        2\cdot (m\cdot (2/\gamma + 1)^2)^{\ceil{ d \cdot \log\left( \frac{2em}{d\gamma} \right) }}.
    \end{align}
    In particular, if $\JMCovNumber(\gamma, \Hyp) = \infty$, then $\fat_{\gamma/4}(\Hyp) = \infty$.
\end{theorem}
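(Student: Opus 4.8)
The plan is to pass from the joint-measurability covering number to an ordinary metric covering number for the induced $[0,1]$-valued function class, and then invoke a scale-sensitive (fat-shattering) Sauer--Shelah estimate. Throughout I identify a two-outcome POVM $h$ with the function $f_h(x)=\Pr[h[x]=1]\in[0,1]$, and note that for two such POVMs $d_{TV}(h_1,h_2)=\sup_{x\in\X}\abs{f_{h_1}(x)-f_{h_2}(x)}$, since $\Out(h,x)$ is $\Bernoulli(f_h(x))$.

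First I would observe that every $d_{TV}$-ball of radius $\gamma$ is automatically $\gamma$-jointly measurable: if $d_{TV}(h,\Pi_*)\le\gamma$, then the trivial fine-graining of $h$ (whose root is $h$ itself, so that $\Pi_*\in\root(\Fine(h))$ is not required, only $d_{TV}(\Pi_*,h)\le\gamma$) exhibits $\Pi_*$ as a valid center. Hence any $d_{TV}$-cover of $\Hyp$ at scale $\gamma$ is a $\gamma$-joint-measurability cover, giving $\JMCovNumber(\gamma,\Hyp)\le\CNum_{TV}(\gamma,\Hyp)$, where $\CNum_{TV}$ is the $d_{TV}$ covering number. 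Since a maximal $\gamma$-packing is itself a $\gamma$-cover, the packing number dominates the covering number, so from $k\le\JMCovNumber(\gamma,\Hyp)\le\CNum_{TV}(\gamma,\Hyp)$ I extract a $\gamma$-separated family $h_1,\dots,h_k\in\Hyp$ with $d_{TV}(h_i,h_j)>\gamma$ for all $i\ne j$.

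Next I would localize this separation onto a finite sample. By the formula for $d_{TV}$ above, each pair admits a witness $x_{ij}\in\X$ with $\abs{f_{h_i}(x_{ij})-f_{h_j}(x_{ij})}>\gamma$. Because there are $\binom{k}{2}$ pairs and $m\ge\binom{k}{2}$, I can collect all witnesses into a single sample $x_1,\dots,x_m$; restricted to this sample the vectors $(f_{h_i}(x_1),\dots,f_{h_i}(x_m))$ are pairwise separated by more than $\gamma$ in the $\ell_\infty$ norm. Thus $k$ is at most the cardinality of a maximal $\gamma$-separated subset of the induced class in the $\ell_\infty$ metric on $m$ points.

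The final and hardest step is to bound this last quantity using $d=\fat_{\gamma/4}(\Hyp)$. This is the scale-sensitive Sauer--Shelah estimate: a $[0,1]$-valued class of $\gamma/4$-fat-shattering dimension $d$ admits at most $2\cdot(m\cdot(2/\gamma+1)^2)^{\lceil d\log(2em/(d\gamma))\rceil}$ functions that are pairwise $\gamma$-separated on $m$ points. I would prove this by discretizing $[0,1]$ into $O(1/\gamma)$ levels, reducing the count to the number of distinct traces of the discretized class, and then applying the combinatorial lemma that a class over a finite alphabet whose fat-shattering dimension at scale $\gamma/4$ is $d$ has only quasi-polynomially-in-$m$ many traces, with degree $\approx d\log(2em/(d\gamma))$. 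Chaining the three steps yields the displayed inequality, and the qualitative ``in particular'' claim then follows by contraposition: were $\fat_{\gamma/4}(\Hyp)$ finite, the right-hand side would be finite for every finite $m$, so an infinite $\JMCovNumber(\gamma,\Hyp)$ forces $\fat_{\gamma/4}(\Hyp)=\infty$. I expect the fat-shattering Sauer--Shelah bound, with precisely these constants, to be the main obstacle; the reduction to a metric covering number and the packing extraction are comparatively routine, though one must verify carefully that it is genuinely approximate joint measurability (and not merely agreement of induced functions) that the $d_{TV}$-ball argument controls.
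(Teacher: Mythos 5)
Your proposal is correct and follows essentially the same route as the paper's own proof: your $d_{TV}$-ball/trivial-fine-graining observation is the paper's Lemma~\ref{lemma:dTV-to-jm-covering}, your witness-state localization onto an $m$-point sample is Lemma~\ref{lemma:packing-to-m-sample-covering}, and your final step is Lemma~\ref{lemma:covering-fat-shattering}. The only divergence is that you propose to reprove the scale-sensitive Sauer--Shelah covering bound by discretization, whereas the paper simply cites it from~\cite{ChervonenkisFestschrift}; citing it suffices.
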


Theorem~\ref{thm:fat-shattering-joint-measurability} immediately implies the following
corollary of Theorem~\ref{thm:pac-povm-necessary-and-sufficient}.

\begin{corollary}[Fundamental theorem of concept learning for POVM classes]
    \label{corollary:matching-conditions}
    Let $\Hyp$ be a POVM class.  Then
    $\Hyp$ is PAC learnable if and only if, for every $\gamma > 0$, $\Hyp$ 
    has finite $\gamma$-fat-shattering dimension.  Furthermore, DERM is a PAC learning rule
    for $\Hyp$.  That is, DERM is a universal learning rule.
\end{corollary}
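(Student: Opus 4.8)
The plan is to derive the corollary directly by chaining the two preceding theorems, with Theorem~\ref{thm:fat-shattering-joint-measurability} acting as the bridge that upgrades the two-part sufficient condition of Theorem~\ref{thm:pac-povm-necessary-and-sufficient} into a condition on the fat-shattering dimension alone. The necessity (``only if'') direction requires no new work: if $\Hyp$ is PAC learnable, then part~1 of Theorem~\ref{thm:pac-povm-necessary-and-sufficient} immediately yields $\fat_\gamma(\Hyp) < \infty$ for every $\gamma > 0$.

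For the sufficiency (``if'') direction, I would assume $\fat_\gamma(\Hyp) < \infty$ for every $\gamma > 0$, fix an arbitrary target pair $(\epsilon, \delta)$, and argue at a single scale $\gamma$ tied to $\epsilon$ (concretely $\gamma = \epsilon/8$, matching the partition scale appearing in the DERM sample-complexity bound). The key step is to manufacture a finite approximately-jointly-measurable partition at this scale without assuming one: by hypothesis $\fat_{\gamma/4}(\Hyp) < \infty$, so the contrapositive of the ``in particular'' clause of Theorem~\ref{thm:fat-shattering-joint-measurability} gives $\JMCovNumber(\gamma, \Hyp) < \infty$. A finite $\gamma$-joint-measurability \emph{covering} $\{S_j\}$ is then disjointified into a partition via $P_j = S_j \setminus \bigcup_{i<j} S_i$; since each $P_j \subseteq S_j$ and any subset of a $\gamma$-jointly-measurable set is again $\gamma$-jointly-measurable with the same center, every $P_j$ retains the required property and $|\Part| \le \JMCovNumber(\gamma, \Hyp)$ stays finite. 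With a finite partition in hand and $d := \fat_\gamma(\Hyp) < \infty$, both hypotheses of the sufficient condition of Theorem~\ref{thm:pac-povm-necessary-and-sufficient} hold at this scale, and its sample-complexity bound produces a finite $n_{\Hyp}(\epsilon, \delta)$. As $(\epsilon, \delta)$ was arbitrary, $\Hyp$ is PAC learnable; and since the rule invoked throughout is DERM, DERM is a PAC rule for every learnable $\Hyp$, which is exactly universality.

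I expect the main point requiring care to be one of scale-bookkeeping rather than a deep obstacle: the sufficient condition of Theorem~\ref{thm:pac-povm-necessary-and-sufficient} is phrased with a fat-shattering bound $d$ holding uniformly for all small $\gamma$, whereas the corollary only posits finiteness at each fixed $\gamma$ (permitting $\fat_\gamma(\Hyp) \to \infty$ as $\gamma \to 0$). I would resolve this by noting that PAC learnability only demands a finite sample bound for each fixed $(\epsilon, \delta)$ \emph{separately}, so the argument is always applied at the single scale $\gamma(\epsilon)$ dictated by the accuracy target, where only the finite values $\fat_{\gamma(\epsilon)/4}(\Hyp)$ and $\JMCovNumber(\gamma(\epsilon), \Hyp)$ enter and no uniform-in-$\gamma$ bound is invoked. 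Theorem~\ref{thm:fat-shattering-joint-measurability} is precisely what removes the standalone joint-measurability-partition hypothesis, collapsing the two separate conditions of Theorem~\ref{thm:pac-povm-necessary-and-sufficient} onto finiteness of the fat-shattering dimension and thereby matching the necessary and sufficient conditions.
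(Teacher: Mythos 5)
Your proposal is correct and follows essentially the same route as the paper: necessity is read off from part~1 of Theorem~\ref{thm:pac-povm-necessary-and-sufficient}, and sufficiency chains the contrapositive of the ``in particular'' clause of Theorem~\ref{thm:fat-shattering-joint-measurability} (finite fat-shattering dimension implies finite $\JMCovNumber$) into the sufficient condition of Theorem~\ref{thm:pac-povm-necessary-and-sufficient}, with DERM as the learner. If anything, you are more careful than the paper's own two-sentence justification, which silently passes from a finite $\gamma$-joint-measurability \emph{covering} to a \emph{partition} and does not remark on the uniform-in-$\gamma$ phrasing of the fat-shattering hypothesis in Theorem~\ref{thm:pac-povm-necessary-and-sufficient}; your disjointification step and your single-scale ($\gamma = \gamma(\epsilon)$) argument correctly close both of these small gaps.
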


Specifically, this is because finite fat-shattering dimension implies finiteness of $\JMCovNumber(\gamma, \Hyp)$ for all $\gamma$, rendering the finiteness of $\JMCovNumber(\gamma, \Hyp)$ redundant as a sufficient condition for learnability.

Corollary~\ref{corollary:matching-conditions} constitutes a fundamental theorem of concept learning for POVM classes.
As mentioned in the introduction, this implies the same for POCC classes, which
arise whenever the loss function value depends on unobserved variables.

\subsection{Examples and applications}

Here we present example POVM classes to illustrate our results.
We first present an application of our results to the learnability of variational quantum circuits (sometimes called quantum neural networks).

\begin{definition}[Quantum neural network~\cite{MohsenQNN}]
    Consider a Hilbert space $\Hilb$ with dimension
    $d < \infty$.  An $\ell$-layer quantum neural network is
    parametrized by a sequence of $\ell$ unitary
    operators $U_{1}, U_{2}, ..., U_{\ell}$ and
    a measurement $\Pi$, all mapping $\Hilb$ to $\Hilb$,
    with $\Pi$ having two classical outcomes.
    Each of the unitary operators $U_{j}$ acts
    nontrivially on only a subset of qubits.
\end{definition}

\begin{theorem}[Learnability of quantum neural networks]
    For any $d < \infty$, the class of quantum
    neural networks is PAC learnable.
\end{theorem}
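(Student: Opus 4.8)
The plan is to reduce the quantum neural network (QNN) class to an ordinary finite-dimensional POVM class and then invoke Theorem~\ref{thm:finite-dimensional-learnable} directly. The key observation is that the unitary layers of a QNN can be absorbed into the terminal measurement, so that every QNN has the same input--output behavior as a single two-outcome POVM on $\Hilb$.

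Concretely, let $U = U_\ell \cdots U_1$ be the composite unitary of a QNN with terminal measurement $\Pi = \{\Pi_0, \Pi_1\}$. For any input state $\rho \in \X$, the probability of outcome $i \in \{0,1\}$ is $\tr(\Pi_i U \rho U^\dagger)$, which by the cyclic invariance of the trace equals $\tr((U^\dagger \Pi_i U)\rho)$. Hence the QNN induces exactly the same outcome distribution on every state as the collection $\{U^\dagger \Pi_0 U, U^\dagger \Pi_1 U\}$. Since each $U^\dagger \Pi_i U$ is positive semidefinite and the two operators sum to the identity, this collection is a genuine two-outcome POVM on $\Hilb$. Therefore the absorption map $(U_1, \ldots, U_\ell, \Pi) \mapsto \{U^\dagger \Pi_i U\}_{i}$ sends the QNN class into the set of two-outcome POVMs on $\Hilb$, preserving the outcome distribution on every input and hence the risk under every data-generating distribution; a learner for the induced POVM class thus translates into one for the QNN class by returning the QNN whose induced POVM was selected.

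Next I would verify the hypotheses of Theorem~\ref{thm:finite-dimensional-learnable}. Because $\dim \Hilb = d < \infty$, the space of Hermitian operators on $\Hilb$ has dimension $d^2$, so the span of any domain $\X$ of input density operators is automatically a finite-dimensional subspace of the space of operators on $\Hilb$. Thus the induced POVM class is a finite-dimensional POVM class in the precise sense required by the theorem, and PAC learnability follows immediately, with sample complexity controlled by the $\epsilon/4$-total-variation covering number of the class.

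There is no genuine obstacle; the only point meriting a remark is the finiteness of the relevant covering number, which Theorem~\ref{thm:finite-dimensional-learnable} already guarantees abstractly. If an explicit bound is wanted, I would observe that the induced POVM class is the continuous image, under the absorption map above, of the compact product of unitary groups $U(d)^{\ell}$ with the compact set of two-outcome POVMs on $\Hilb$; it is therefore itself totally bounded in total-variation distance, so its covering number is finite and can be estimated by a standard covering argument in the $O(d^2)$ real parameters describing the effective POVM. Note that this reduction uses only finite-dimensionality of $\Hilb$ and makes no use of the locality constraint that each $U_j$ acts on a subset of qubits, so the same argument would cover arbitrary variational families of this form.
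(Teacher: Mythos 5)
Your proposal is correct and follows essentially the same route as the paper: both reduce the QNN class to a finite-dimensional POVM class (the paper's supplementary material likewise absorbs the unitaries into the terminal measurement so that the TV-covering number of the class is controlled by a covering of the unitary group) and then invoke Theorem~\ref{thm:finite-dimensional-learnable}. The only difference is quantitative rather than structural: where you argue finiteness of the covering number abstractly via compactness of $U(d)^{\ell}$, the paper instantiates an explicit bound of order $(C/\epsilon)^{d+2}\log(1/\delta)$ using covering-number estimates for the unitary group.
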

\textbf{We give much more detailed results on learning variational quantum circuits
in the supplementary material.}  %

As in classical learning theory, one advantage of combinatorial bounds on generalization error, such as those in terms of VC or fat-shattering dimension, is that they are \emph{distribution-free}, meaning that they do not depend on the data generating distribution.  This generality, of course, comes at a cost of tightness, as is well-known to be the case for neural networks.  We have nonetheless included the above example to illustrate the application of our bounds to a concrete hypothesis class.  We emphasize that our bounds can be applied beyond classes of quantum neural networks, and that our focus in this work is on generality, as is the case in the classical works using the fat-shattering dimension.

\paragraph{Non-learnable POVM classes:}

We next turn
to examples of POVM classes that are \emph{not} learnable.  Theorem~\ref{thm:finite-dimensional-learnable}
implies that we must consider input state sets spanning
an infinite-dimensional Hilbert space.  One can
easily concoct an unlearnable class from a classical
hypothesis class with infinite VC dimension.  Our
next example goes further than this to provide intuition
about legitimately quantum POVM classes that are unlearnable.

\begin{example}[A POVM class that is not learnable]
    Consider a domain $\X$ whose span is infinite-dimensional in some Hilbert space $H$
    with a sequence of orthonormal vectors $\{\rho_j\}_{j=1}^\infty$ and a sequence
    of numbers $\{\beta_j\}_{j=1}^\infty$, with $\beta_j \in (1/2 + \gamma, 1)$.  We construct
    a sequence of hypotheses as follows (since each hypothesis has two outcomes, we only
    need to specify for each hypothesis the operator corresponding to the $0$ outcome): letting
    $b = (b_1, b_2, ...) \in \{0, 1\}^{\infty}$,
    \begin{align}
        \Pi_0^{(b)} = \sum_{j=1}^\infty \beta_j^{1-b_j}(1 - \beta_j)^{b_j} \ket{\rho_j}\bra{\rho_j}.
    \end{align}
    
    Finally, let $\Hyp$ be the class of all such POVMs:
    $ %
        \Hyp = \{ \Pi^{(b)} \}_{b \in \{0, 1\}^{\infty}}.
    $ %
    We claim that $\Hyp$ is not PAC learnable.  To show this, by Theorem~\ref{thm:pac-povm-necessary-and-sufficient}, it is sufficient to show that
    $\Hyp$ has infinite $\gamma$-fat shattering dimension.  We show this in the following sequence of steps:
    \begin{enumerate}
        \item
            We exhibit a candidate sequence $x_1, x_2, ...$ of elements of
            $\X$ that we will show to be fat-shattered by hypotheses in $\Hyp$.
            Specifically, we take
            $x_j = \ket{\rho_j}\bra{\rho_j}$.
        \item
            We exhibit a \emph{witness number} $r = 1/2$.
        \item
            We exhibit, for each bit string
            $b = b_1, b_2, ... \in \{0, 1\}^{\infty}$, 
            a hypothesis $h \in \Hyp$ such that
            $\Pr[ h[x_j] = b_j] \geq r + \gamma$
            for every $j$.
            Specifically, we take 
            $h = \Pi^{(b)}$.  We have
            \begin{align}
                &\Pr[ h[x_j] = 0] 
                = \Tr(x_j \Pi^{(b)}) \\
                &= \bra{\rho_j} \Pi^{(b)} \ket{\rho_j}
                = \beta_j^{1-b_j}(1-\beta_j)^{b_j}, 
            \end{align}
            so that
            $ %
                \Pr[ h[x_j] = b_j] = \beta_j
                \geq 1/2 + \gamma,
            $ %
            by assumption.
    \end{enumerate}
    Thus, $\Hyp$ has infinite $\gamma$-fat-shattering
    dimension, which implies that it is not PAC learnable.
\end{example}
We note that the results of prior papers were incapable of showing that \emph{any} POVM class is unlearnable.

\section{Conclusion}

We have provided matching necessary and sufficient conditions for learnability of POVM hypothesis classes in terms of their fat-shattering dimension.  To do so, we connected the learning-theoretic notion of fat-shattering dimension with the quantum concept of approximate joint measurability covering.  The proof of our sufficient condition came via the introduction of a new universal learning rule, the de-noised empirical risk minimization rule.  Additionally, we showed that all finite-dimensional POVM classes are learnable, and we provided quantitative sample complexity bounds for some example hypothesis classes.

There are various possible extensions of our work: for instance, a characterization of the fat-shattering dimension of a hypothesis class in terms of its Hilbert space geometry would be of interest.  
Additionally, our learning rule only makes \emph{separable} measurements.  In quantum hypothesis testing, where the goal is to distinguish between two \emph{known} states with minimal error probability from $m$ copies of one of them, block measurements have a provable advantage in terms of sample complexity.  It would be interesting to understand whether this phenomenon holds in the learning setting.

\section{Acknowledgments}
This research is funded by NSF CCF grant number \#2212327.

\bibliographystyle{quantum}
\bibliography{QuantumLearning,references}

\section{Supplementary material: proofs}

\subsection{Proof of Theorem~\ref{thm:failure-unif-convergence-pocc}}

    Consider $\X = \{0, 1\}^{2}, \Y = \{0, 1\}$,
    so that each $x \in \X$ can be written as $(x_1, x_2)$.
    Furthermore, consider the following hypothesis
    classes  $\widehat{\Hyp}$ and $\Hyp$: fix some small enough $\alpha > 0$, and let $\widehat{\Hyp}$ consist of probabilistically observable concepts that, on input $x \in \X$, pass $x_1$ through a binary symmetric
    channel with crossover probability $0.1 \pm z$,
    where $z \in [-\alpha, \alpha]$.  We then define
    $\Hyp$ to consist of $\widehat{\Hyp}$, with one additional hypothesis: $h_*$, which, on
    input $x$, outputs a $\Bernoulli(1/2)$ random variable with probability $0.99$ and outputs $x_2$ with probability $0.01$.
    
    One can think of $\widehat{\Hyp}$ as a relaxation of the very 
    simple deterministic hypothesis class $\Hyp'$ 
    consisting of a single hypothesis: $f(x) = x_1$.
    Of course, $\Hyp'$ is agnostic-PAC learnable.

    The theorem statement consists of the following claims:
    \begin{enumerate}
        \item
            The uniform convergence property for ERM fails to hold for the hypothesis class $\widehat{\Hyp}$, even for distributions on $\X\times \Cl$ for
            which ERM is PAC.
        \item
            The hypothesis class $\Hyp$ is PAC learnable, but there exist distributions
            for which, simultaneously, the uniform convergence property fails to hold for ERM and ERM is not PAC.
    \end{enumerate}
    
    \textbf{Proof of claim 1: }
    To show that the class $\widehat{\Hyp}$ does not satisfy the uniform convergence property, we will exhibit a data-generating distribution $\Dist$ on $\X\times\Y$ for which, with non-negligible probability, there exist hypotheses in $\widehat{\Hyp}$ whose empirical risks are bounded away from their true risks.
    In particular, let us consider a uniform distribution on $\X$, and a target function
    $f(x) = x_2$.  
    Call the resulting joint distribution $\Dist$.
    Note that the expected risk $\E[R(h, f)] = 1/2$ for all $h \in \widehat{\Hyp}$.  In 
    particular, this implies that we can trivially find a hypothesis whose expected risk is
    arbitrarily close to the minimum possible with probability exactly $1$, so this distribution does not pose any fundamental difficulties from a learning perspective.
    Next, we show that for every $m$, with probability $1$, there exists some hypothesis $h \in \widehat{Hyp}$ whose
    empirical risk on a dataset of length $m$ is bounded away from its expected risk.  Fix $m$
    samples $S = (x^{(1)}, ..., x^{(m)})$ drawn iid from $\Dist$.  We claim that with probability $1$,
    the set of \emph{outputs} of all hypotheses in $\Hyp$ on input $S$ has cardinality $2^{m}$, so
    that it is the set of bit strings of length $m$.  To prove this, we upper bound the probability of the negation of this event: let $B$ be the event that there exists some bit string that is \emph{not} the output of any hypothesis in $\Hyp$:
    \begin{align}
        \Pr[B]
        = \Pr[ \union{y \in \{0, 1\}^m} \intersect_{h \in \Hyp} [h(S) \neq y]] 
        \leq \sum_{y \in \{0, 1\}^m} \Pr[ \intersect_{h \in \Hyp} [h(S) \neq y]].
    \end{align}
    Each term of the remaining sum can be computed by conditioning on the value of $S$.  That is,
    \begin{align}
        \Pr[ \intersect_{h \in \Hyp} [h(S) \neq y]] 
        &= \E[ \Pr[ \intersect_{h \in \Hyp} [h(S) \neq y] ~|~ S] ] 
        = \E[ \prod_{h \in \Hyp} \Pr[ h(S) \neq y ~|~ S]    ]  \\
        &= \frac{1}{2^{nm}} \sum_{\hat{S} \in \{0, 1\}^{n\times m}}  \prod_{h \in \Hyp} \Pr[h(S) \neq y ~|~ S = \hat{S}].
    \end{align}
    Now, note that $\Pr[h(S) \neq y ~|~ S = \hat{S}] \leq c < 1$, by our choice of hypothesis class.  This implies that
    \begin{align}
        \Pr[ \intersect_{h \in \Hyp} [h(S) \neq y]]
        = \frac{1}{2^{nm}} \cdot 2^{nm} \prod_{h \in \Hyp} c
        \leq \prod_{h \in \Hyp} c
        = 0,
    \end{align}
    where the last equality is by the fact that $\widehat{\Hyp}$ has infinite cardinality.  This implies that
    \begin{align}
        \Pr[B]
        \leq \sum_{y \in \{0, 1\}^m} 0
        = 0.
    \end{align}
    Thus, with probability $1$, the set of outputs of all hypotheses in $\widehat{\Hyp}$ in input $S$ has 
    cardinality $2^{m}$.  Now, this means that with probability $1$, there exists a hypothesis $h$ 
    with exactly $0$ misclassification error on $S$, so that its empirical risk is $0$, while its expected risk is $1/2$.
    
    Thus, \textbf{the uniform convergence property does not hold for $\widehat{\Hyp}$}, and we have 
    demonstrated this with a distribution on which empirical risk minimization trivially outputs a
    good hypothesis.  This implies that uniform convergence is not necessary for ERM to be PAC.  This completes the proof
    of Claim~1.

    \textbf{Proof of Claim 2: }
    To show that there exist data-generating distributions for which ERM is not PAC for $\Hyp$ and the uniform convergence property fails to hold,
    we will consider the same input distribution and 
    target function as in the proof of Claim 1: then the expected risk of $h_*$ is as follows:
    \begin{align}
        \E[R(h_*, f)]
        = \Pr[ h_*(x) \neq f(x) ] 
        = \Pr[ h_*(x) \neq x_2] 
        = 0.99 \cdot 1/2.
    \end{align}
    That is, $h_*$ is the unique hypothesis in 
    $\widehat{\Hyp}$ with minimum expected risk for this 
    distribution.  By the analysis of $\widehat{\Hyp}$ above, 
    though, ERM fails to return
    $h_*$ on this hypothesis class asymptotically almost surely as the number of samples tends to $\infty$, because, with probability $1$ over the choice 
    of samples, there exists some \emph{other} hypothesis that
    has empirical risk $0$.  Thus, ERM fails in this case -- it returns a hypothesis with expected risk strictly bounded away from the minimum possible with probability $1$.

    To complete the proof of the claim, and, hence, the proof of Theorem~\ref{thm:failure-unif-convergence-pocc},
    we need to show that $\Hyp$ is PAC learnable.  This can be done in a variety of ways.  We consider
    the learning rule that works as follows: on input
    $S = \{(x_1, y_1), (x_2, y_2), ..., (x_m, y_m)\}$, 
    we perform the following operations:
    \begin{enumerate}
        \item
            Estimate the joint distribution $\Dist$
            empirically:
            \begin{align}
                \hat{p}(x, y)
                = \frac{| \{ j ~:~ (x_j, y_j) = (x, y) \} |}{m}.
            \end{align}
        \item    
            Using the estimate $\hat{p}$ of $\Dist$,
            choose a hypothesis that minimizes the
            expected risk, where the expectation is computed according to $\hat{p}$ instead
            of $\Dist$.  I.e., output a hypothesis
            $\hat{h} \in \Hyp$ such that
            \begin{align}
                \hat{h} = \argmin_{h \in \Hyp} \E_{(X, Y) \sim \hat{p}}[\ell(h[X], Y)].
            \end{align}
    \end{enumerate}
    To show that this learning rule is PAC, we note
    that by the strong law of large numbers and the
    fact that $|\X\times \Cl|$ is finite, $m$
    can be chosen sufficiently large so that the total variation distance between $\hat{p}$ and $\Dist$
    is less than $\epsilon$ with probability arbitrarily close to $1$.  This immediately implies that, for any $h \in \Hyp$, simultaneously,
    \begin{align}
        |\E_{(X, Y) \sim \hat{p}}[\ell(h[X], Y)] - R(h)| < \epsilon.
    \end{align}
    This implies the desired property.
With the proofs of Claims 1 and 2 completed, the proof of Theorem~\ref{thm:failure-unif-convergence-pocc} is complete.

\subsection{Proof of Theorem~\ref{thm:failure-unif-convergence-povm}}
\label{proof:failure-unif-convergence-povm}

We first show that $\Hyp$ is compact in the topology generated by the $L_1$ operator norm on operators on $\Hilb$ (here we note that each hypothesis in $\Hyp$ is given by
$(\Pi_0, I - \Pi_0)$, so that $\Hyp$ may be equated with the set of operators $\Pi_0$ yielding the $0$ outcome).
Because the set of operators on $\Hilb$ is finite-dimensional (since $\Hilb$ itself was assumed to be so), all norms on them are equivalent, in the sense that they generate the same topology.
Furthermore, compactness is equivalent to $\Hyp$
being closed and bounded.
We have assumed that
$\Hyp$ is closed.  To show that it is bounded, 
we note that for any $(\Pi_0, I - \Pi_1) \in \Hyp$,
we can write $\Pi_0$ as a convex combination of orthogonal projections:
\begin{align}
    \Pi_0 
    = \sum_{j=1}^{\dim(\Hilb)} a_j \ketbra{v_j}{v_j},
\end{align}
where $a_j \in [0, 1]$.  Then a loose bound on the
$L_1$ operator norm of $\Pi_0$ is given by
\begin{align}
    \| \Pi_0 \|_{op,L_1}
    \leq \sum_{j=1}^{\dim(\Hilb)} a_j
    \leq \dim(\Hilb).
\end{align}
This implies boundedness of $\Hyp$, which implies
that $\Hyp$ is compact in the topology generated by
the $L_1$ operator norm.

Compactness, in turn, implies that $\Hyp$ must have
at least one accumulation point.  Let us assume that
$\Hyp$ has an accumulation point $\Pi_*$ that is
non-deterministic.  Then we may use the argument
from the proof of Theorem~\ref{thm:failure-unif-convergence-pocc} to show that uniform convergence fails and ERM is not PAC.  In particular, there exists an open neighborhood of $\Pi_*$
containing infinitely many non-deterministic elements in $\Hyp$.  This was the key property used in Theorem~\ref{thm:failure-unif-convergence-pocc}, which allows us to conclude that ERM is not PAC for $\Hyp$, and it does not satisfy the uniform convergence property.

\subsection{Proof of Theorem~\ref{thm:finite-dimensional-learnable}}
\label{proof:thm-finite-dimensional-learnable}

The proof of this theorem consists of the following steps:
\begin{enumerate}
    \item 
        We introduce a statistical distance between POVMs -- the total variation
        distance $d_{TV}$ between them.  This distance has the property that any two hypotheses
        within distance $\gamma$ of each other have expected risks within $\gamma$ of
        each other.  Throughout, we choose $\gamma = \epsilon/4$.
    \item
        We show that for every $\gamma$, the $d_{TV}$ $\gamma$-covering number of a
        finite-dimensional POVM class is finite: i.e., it can be covered by finitely
        many $d_{TV}$ balls of radius less than $\gamma$.
    \item
        Using the finiteness of covering numbers, we define the smoothing of the hypothesis
        class by a given $\gamma$-covering, which is a hypothesis class consisting of the
        centers of the balls in the covering.  This class is necessarily finite-cardinality.
    \item 
        By previous results in the literature~\cite{Heidari2021}, the smoothed class is agnostically PAC learnable because it is finite-cardinality.  The output of a
        $(\epsilon/4, \delta)$-PAC learning rule on this hypothesis class has true risk
        within $\epsilon/2$ of the minimum possible within the smoothed class.  This minimum
        has, by our result on the total variation metric, a true risk that is within $\epsilon/2$ of the infimum of possible true risks in the original hypothesis class.
        Thus, the hypothesis returned by the learning rule on the smoothed class has true risk within $\epsilon$ of the infimum for the original class, with probability at least $1-\delta$.
\end{enumerate}

We now give the details of the above steps.

\paragraph{Step 1: Defining $d_{TV}$ between POVMs}
The definition is given in Definition~\ref{def:dtv-povms}.

We next state and prove the lemma connecting $d_{TV}$
with the expected risks of the two hypotheses.
\begin{lemma}[Connecting $d_{TV}$ with expected risks]
    \label{lemma:dtv-risk}
    Let $\Pi_1, \Pi_2 \in \Hyp$.  Then
    \begin{align}
        | R(\Pi_1) - R(\Pi_2) |
        \leq 2d_{TV}(\Pi_1, \Pi_2).
    \end{align}
\end{lemma}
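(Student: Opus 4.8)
The plan is to pass to a pointwise (per-input) estimate and then invoke the supremum appearing in Definition~\ref{def:dtv-povms}. First I would unfold the risk of a hypothesis $\Pi$ by conditioning on the sampled pair $(X,Y)$ and averaging over the internal randomness of the measurement outcome $\Pi[X]$. Writing $p^{\Pi}_x(c) = \Pr[\Pi[x] = c]$ for the outcome distribution of $\Pi$ on input $x$, one has
\begin{align}
    R(\Pi) = \E_{(X,Y)\sim\Dist}\Big[ \sum_{c\in\Cl} p^{\Pi}_X(c)\, \ell(c, Y) \Big],
\end{align}
and hence, by linearity,
\begin{align}
    R(\Pi_1) - R(\Pi_2) = \E_{(X,Y)\sim\Dist}\Big[ \sum_{c\in\Cl} \big( p^{\Pi_1}_X(c) - p^{\Pi_2}_X(c) \big)\, \ell(c, Y) \Big].
\end{align}

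Next I would bound the inner sum for a fixed realization $(X,Y) = (x,y)$. Since the misclassification loss satisfies $0 \le \ell(c,y) \le 1$, the triangle inequality gives
\begin{align}
    \Big| \sum_{c\in\Cl} \big( p^{\Pi_1}_x(c) - p^{\Pi_2}_x(c) \big)\, \ell(c, y) \Big|
    \le \sum_{c\in\Cl} \big| p^{\Pi_1}_x(c) - p^{\Pi_2}_x(c) \big|.
\end{align}
The right-hand side is exactly the $L_1$ distance between the outcome distributions $\Out(\Pi_1, x)$ and $\Out(\Pi_2, x)$, which equals $2\, d_{TV}(\Out(\Pi_1, x), \Out(\Pi_2, x))$. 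By the definition of the total variation distance between POVMs as a supremum over inputs, this is at most $2\, d_{TV}(\Pi_1, \Pi_2)$, and crucially this bound is \emph{uniform} in $x$. Passing absolute values through the outer expectation and applying this uniform pointwise bound then yields $| R(\Pi_1) - R(\Pi_2) | \le 2\, d_{TV}(\Pi_1, \Pi_2)$, as claimed.

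The only genuinely delicate point — and hence the step I would be most careful to state precisely — is the very first one: the risk involves two independent sources of randomness, the draw of $(X,Y)$ from $\Dist$ and the intrinsic randomness of the measurement outcome $\Pi[X]$, so the argument must first take the conditional expectation over the measurement given $(X,Y)$ before comparing the two hypotheses. Everything after that is the standard identity that $L_1$ distance is twice total variation distance, combined with boundedness of the loss; the supremum built into the POVM total variation distance is precisely what permits the pointwise bound to be pulled outside the outer expectation. I would note that the constant $2$ is not optimal — exploiting $\sum_{c\in\Cl} \big( p^{\Pi_1}_x(c) - p^{\Pi_2}_x(c) \big) = 0$ one could sharpen it to $1$ — but the stated bound suffices for the downstream use, where the choice $\gamma = \epsilon/4$ is made.
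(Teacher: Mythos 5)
Your proof is correct and follows essentially the same route as the paper's: condition on $(X,Y)$, compare the outcome distributions of $\Pi_1$ and $\Pi_2$ pointwise in $x$ using boundedness of the loss, identify the resulting $L_1$ distance as twice the total variation distance, and pull the uniform bound from the supremum in Definition~\ref{def:dtv-povms} through the outer expectation. Your closing remark that the constant $2$ can be sharpened to $1$ (using $\sum_{c\in\Cl}\bigl(p^{\Pi_1}_x(c)-p^{\Pi_2}_x(c)\bigr)=0$) is a correct refinement that the paper does not make, but it is not needed downstream.
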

\begin{proof}
    We have
    \begin{align}
        &|R(\Pi_1) - R(\Pi_2)| \\
        &= |\Pr_{(X,Y) \sim \Dist} [ \Out(\Pi_1, X) \neq Y ] -  \Pr_{(X,Y) \sim \Dist} [ \Out(\Pi_2, X) \neq Y ] | \\
        &= \E[ | \Pr[ \Out(\Pi_1, X) \neq Y ~|~ X, Y ] -  \Pr[ \Out(\Pi_2, X) \neq Y ~|~ X, Y ]  | ] \\
        &= \E[ \sum_{b=0}^1 | \Pr[ \Out(\Pi_1, X) = b ~|~ X] -  \Pr[ \Out(\Pi_2, X) = b ~|~ X]  | \cdot \Pr[Y = b] ]  \label{expr:YCondDisappears} \\
        &\leq \E_X[ \sum_{b=0}^1 | \Pr[ \Out(\Pi_1, X) = b ~|~ X] -  \Pr[ \Out(\Pi_2, X) = b ~|~ X]  | ] \\
        &\leq \sup_{x \in \X}  \sum_{b=0}^1 | \Pr[ \Out(\Pi_1, X) = b ~|~ X] -  \Pr[ \Out(\Pi_2, X) = b ~|~ X] \\
        &= 2 d_{TV}(\Pi_1, \Pi_2)
    \end{align}
    
    The conditioning on $Y$ disappears in equation
    (\ref{expr:YCondDisappears}) because the event
    that $\Out(\Pi_j, X) = b$ is independent of $Y$
    given $X$.  The first inequality is by upper 
    bounding $\Pr[Y = b]$ by $1$.
\end{proof}

\paragraph{Step 2: Finiteness of the $d_{TV}$ covering numbers of $\Hyp$}

To show that the $d_{TV}$ covering numbers of $\Hyp$ are finite, it is sufficient to show an upper bound on the corresponding packing numbers.  This is the content
of the next lemma, Lemma~\ref{lemma:finite-total-variation-packing}.

\begin{lemma}[Finiteness of the total variation packing]
    \label{lemma:finite-total-variation-packing}
    Let $\Hyp$ be a finite-dimensional POVM class with outcomes in $\{0, 1\}$.
    For every $\gamma > 0$,  the $\gamma$-packing number of $\Hyp$ is
    finite.
\end{lemma}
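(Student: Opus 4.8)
The plan is to reduce the packing problem to the total boundedness of a bounded subset of a finite-dimensional normed space. First I would record the explicit form of $d_{TV}$ for binary POVMs. Writing each $h \in \Hyp$ as $(\Pi_0^{(h)}, I - \Pi_0^{(h)})$, the outcome $\Out(h, x)$ is a Bernoulli random variable taking value $0$ with probability $\tr(x\Pi_0^{(h)})$. Since the total variation distance between two Bernoulli laws with parameters $p,q$ equals $|p-q|$, Definition~\ref{def:dtv-povms} yields
\begin{align}
    d_{TV}(h_1, h_2)
    = \sup_{x \in \X} \bigl| \tr\bigl( x\,(\Pi_0^{(h_1)} - \Pi_0^{(h_2)}) \bigr) \bigr|.
\end{align}
Thus $d_{TV}$ is the pullback, under the map $h \mapsto \phi_h$ with $\phi_h(x) := \tr(x\Pi_0^{(h)})$, of the sup norm $\|\phi\|_\infty := \sup_{x\in\X}|\phi(x)|$ on real-valued functions on $\X$.

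The key structural observation is that all the functions $\phi_h$ lie in a fixed finite-dimensional space. Indeed, $\phi_h$ depends linearly on $\Pi_0^{(h)}$, which ranges over (a subset of) the space of Hermitian operators on $\Hilb$; since $\Hilb$ is finite-dimensional, this operator space is finite-dimensional. Equivalently, fixing a basis $x_1,\dots,x_D$ of $V := \mathrm{span}(\X)$ (finite by hypothesis), each $\phi_h$ is determined by the vector $(\phi_h(x_1),\dots,\phi_h(x_D)) \in \R^D$. Hence the image $W := \{\phi_h : h \in \Hyp\}$ is contained in a subspace of dimension at most $D$, on which $\|\cdot\|_\infty$ is a genuine norm (distinct functions on $\X$ differ at some point, giving positive supremum).

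It then remains to invoke boundedness plus finite-dimensionality. For any density operator $x \in \X$ and any effect $\Pi_0^{(h)}$ with $0 \preceq \Pi_0^{(h)} \preceq I$ we have $\tr(x\Pi_0^{(h)}) \in [0,1]$, so $\|\phi_h\|_\infty \le 1$ and $W$ is bounded. A bounded subset of a finite-dimensional normed space is totally bounded (by Heine--Borel equivalences in finite dimensions), hence admits a finite $\gamma$-cover for every $\gamma > 0$. Since a $\gamma$-packing is controlled by the $(\gamma/2)$-covering number (each covering ball contains at most one packing point), the $\gamma$-packing number of $\Hyp$ is finite, as claimed. The argument is essentially routine; the only point requiring care is the identification that the $d_{TV}$ metric descends to the sup norm on the finite-dimensional space $W$ together with the boundedness estimate, after which finite-dimensional total boundedness does the rest.
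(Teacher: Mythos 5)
Your proof is correct, and it rests on the same underlying principle as the paper's: a bounded subset of a finite-dimensional normed space is totally bounded, hence has finite covering and packing numbers at every scale. The difference is where you embed $\Hyp$. The paper represents each hypothesis by the matrix of its outcome-$0$ operator, viewing $\Hyp$ as a bounded subset of $\Cplx^{d\times d}$ in the $L_1$ operator norm (reusing the boundedness computation from the proof of Theorem~\ref{thm:failure-unif-convergence-povm}), invokes finiteness of packing and covering numbers for bounded sets in finite dimension, and then asserts that this ``implies finite packing and covering numbers for $d_{TV}$'' --- a final step which implicitly requires a comparison inequality showing that $d_{TV}$ between two hypotheses is dominated by a norm distance between their outcome-$0$ operators (e.g.\ via $|\mathrm{tr}(x\Delta)| \le \norm{\Delta}_{op,L_1}$ type estimates). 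Your route instead pushes $\Hyp$ forward to the functions $\phi_h(x) = \mathrm{tr}\big(x\Pi_0^{(h)}\big)$ on $\X$, where $d_{TV}$ \emph{is} the sup-norm distance rather than merely being dominated by one, and the image sits in a subspace of dimension at most $\dim \mathrm{span}(\X)$ inside the sup-norm unit ball. This buys three things: the comparison step that the paper leaves implicit becomes an identity by construction; distinct operators that induce identical outcome statistics on $\X$ are harmlessly identified, so no pseudometric issue arises; and the quantitative ingredients are cleaner (radius $1$ rather than $\dim\Hilb$, and dimension $\dim\mathrm{span}(\X)$ rather than that of the full matrix space), which would matter if one wanted explicit covering-number bounds downstream in Theorem~\ref{thm:finite-dimensional-learnable}. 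The one point to state carefully is the packing-versus-covering convention: with packings defined by pairwise distance $\ge \gamma$ rather than $> \gamma$, ``at most one packing point per $\gamma/2$-ball'' has a borderline case, but finiteness survives under any convention (e.g.\ by using $\gamma/3$-covers), so this is cosmetic.
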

\begin{proof}
    Since the Hilbert space $\Hilb$ on which the operators in $\Hyp$ are defined is finite-dimensional, we take an orthonormal basis $\ket{v_1}, ..., \ket{v_d}$ for $\Hilb$.
    We can uniquely encode each $h \in \Hyp$ by its outcome-$0$ operator, which, in turn,
    can be represented by its $d\times d$ matrix over $\Cplx$ with respect to the chosen basis.
    Thus, we can identify $\Hyp$ with a bounded subset $S \subset \Cplx^{d\times d}$ (since
    we already know, from the proof of Theorem~\ref{thm:failure-unif-convergence-povm},
    that finite dimensionality of the POVM class (even if it is not closed) is sufficient
    to conclude boundedness of $\Hyp$ in the $L_{1}$ operator norm).  Bounded subsets of
    finite-dimensional Euclidean spaces are known to have finite packing and covering numbers
    in every norm.  This implies finite packing and covering numbers for $d_{TV}$.
    
\end{proof}

\paragraph{Step 3: Constructing the $TV$-smoothed version of a hypothesis class}

We next define the following \emph{TV-smoothed} hypothesis class.
\begin{definition}[TV-Smoothed version of a POVM class]
    Let $\Hyp$ be a finite-dimensional POVM class with outcomes $0$ and $1$.
    Let $\Part$ be a finite $\gamma$-TV covering of $\Hyp$ by balls with centers
    $\{\Pi^{(j)}\}_{j=1}^{|\Part|}$.  
    We define the $\gamma$-smoothed version of $\Hyp$ to be 
    simply the set of centers.  We denote this by $\Smooth_{TV}(\Hyp, \Part)$.
\end{definition}

\paragraph{Step 4: Learnability of the $TV$-smoothed class implies learnability of the original class}

For any $\gamma$, the $\gamma$-TV-smoothing of a hypothesis class $\Hyp$ has the property 
that its infimum expected loss is less than $\gamma$ away from that of the infimum expected loss of $\Hyp$.  This is the content of the next lemma, Lemma~\ref{lemma:risk-smoothed}.

\begin{lemma}[Infimal expected loss of $\Smooth_{TV}(\Hyp, \Part)$]
    \label{lemma:risk-smoothed}
    Let $\widehat{\Hyp} = \Smooth_{TV}(\Hyp, \Part)$,
    where $\Part$ is a finite $\gamma$-TV covering 
    of $\Hyp$.  Then 
    \begin{align}
        \inf_{h \in \Hyp} R(h)
        \leq \inf_{h \in \widehat{\Hyp}} R(h)
        \leq \inf_{h \in \Hyp} R(h) + 2\gamma.
    \end{align}
\end{lemma}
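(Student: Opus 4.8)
The plan is to establish the two inequalities separately, both leaning on Lemma~\ref{lemma:dtv-risk}, which guarantees $|R(\Pi_1) - R(\Pi_2)| \le 2 d_{TV}(\Pi_1, \Pi_2)$ for any two hypotheses. The left inequality should be essentially free: since $\Part$ is an internal $\gamma$-TV covering, its centers $\{\Pi^{(j)}\}$ are themselves elements of $\Hyp$, so $\widehat{\Hyp} = \Smooth_{TV}(\Hyp, \Part) \subseteq \Hyp$. An infimum taken over a subset can only be larger, giving $\inf_{h \in \Hyp} R(h) \le \inf_{h \in \widehat{\Hyp}} R(h)$ immediately.

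For the right inequality I would argue through an approximate minimizer, to avoid assuming the infimum over $\Hyp$ is attained. Fix an arbitrary $\eta > 0$ and choose $h_0 \in \Hyp$ with $R(h_0) \le \inf_{h\in\Hyp} R(h) + \eta$. Because $\Part$ covers $\Hyp$ with balls of $d_{TV}$-radius at most $\gamma$, there is some center $\Pi^{(j)} \in \widehat{\Hyp}$ with $d_{TV}(h_0, \Pi^{(j)}) \le \gamma$. Applying Lemma~\ref{lemma:dtv-risk} to the pair $(h_0, \Pi^{(j)})$ yields $R(\Pi^{(j)}) \le R(h_0) + 2\gamma \le \inf_{h\in\Hyp} R(h) + \eta + 2\gamma$, and since $\Pi^{(j)} \in \widehat{\Hyp}$ this bounds $\inf_{h\in\widehat{\Hyp}} R(h)$ from above. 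Letting $\eta \to 0$ gives $\inf_{h\in\widehat{\Hyp}} R(h) \le \inf_{h\in\Hyp} R(h) + 2\gamma$, completing the chain.

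There is no deep technical difficulty here; the argument is a standard covering-approximation estimate. The only points requiring care are (i) confirming that the covering is internal, so that the centers lie in $\Hyp$ — this is what makes the lower bound hold, and it is guaranteed by taking the covering induced by a maximal $\gamma$-packing as produced in Lemma~\ref{lemma:finite-total-variation-packing} — and (ii) handling a possibly non-attained infimum via the $\eta$-approximate minimizer rather than a true minimizer. With Lemma~\ref{lemma:dtv-risk} in hand, the constant $2\gamma$ is exactly the Lipschitz factor $2$ times the covering radius $\gamma$, so the stated bound is tight up to this factor.
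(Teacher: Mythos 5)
Your proof is correct and takes essentially the same approach as the paper: both hinge on Lemma~\ref{lemma:dtv-risk}, pairing each hypothesis with a covering center at $d_{TV}$ distance at most $\gamma$ to transfer risks within $2\gamma$. The paper's proof is a compressed version of the same argument; your explicit treatment of the internal covering (justifying $\widehat{\Hyp} \subseteq \Hyp$ for the left inequality) and of the $\eta$-approximate minimizer (handling a possibly non-attained infimum) merely spells out details the paper leaves implicit.
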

\begin{proof}
    For any hypothesis $h_* \in \widehat{\Hyp}$ and $h$ in the element of $\Part$ corresponding
    to $h_*$, we have that
    $d_{TV}(h_*, h) \leq \gamma$.  This implies, by Lemma~\ref{lemma:dtv-risk}, that
    \begin{align}
        |R(h_*) - R(h)| \leq 2\gamma,
    \end{align}
    which implies the desired result.
\end{proof}

Lemma~\ref{lemma:risk-smoothed} implies that if
$\widehat{\Hyp}$ is $(\epsilon/2, \delta)$-PAC learnable,
then with probability at least $1-\delta$, 
we can find a hypothesis $h_* \in \widehat{\Hyp} \subseteq \Hyp$ such
that $R(h_*) \leq \inf_{h \in \widehat{\Hyp}} R(h) + \epsilon/2 \leq \inf_{h \in \Hyp} R(h) + \epsilon/2 + 2\gamma = \epsilon$ (recalling that we set $\gamma = \epsilon/4$), which implies that $\Hyp$ is $(\epsilon, \delta)$-PAC learnable.  In fact, since $\widehat{\Hyp}$ is finite,
it is PAC learnable for every $\epsilon, \delta \to 0$, by the results in~\cite{Heidari2021}.  This completes the proof of learnability.  To provide
a more specific sample complexity bound, we recall the following result of
~\cite{Heidari2021}.

\begin{theorem}[\cite{Heidari2021}, Theorem~2]
    \label{thm:mohsen-bound}
    Any finite POVM class $\Hyp$ is agnostic $(\epsilon, \delta)$-PAC-learnable with sample complexity
    bounded by
    \begin{align}
        n_{\Hyp}(\epsilon, \delta)
        = \min_{\Part = (\Part_1, \Part_2, ..., \Part_{|\Part|})} \sum_{r=1}^{|\Part|} \frac{8}{\epsilon^2}\log \frac{2|\Part| |\Part_r|}{\delta},
    \end{align}
    where $\Part$ ranges over all possible joint measurability partitions of $\Hyp$.
\end{theorem}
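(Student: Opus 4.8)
The plan is to reduce the claim to the standard finite-class agnostic PAC bound, using joint measurability to recycle samples within each partition element. First I would fix an arbitrary joint measurability partition $\Part = (\Part_1, \ldots, \Part_{|\Part|})$ and allocate $n_r = \frac{8}{\epsilon^2}\log\frac{2|\Part||\Part_r|}{\delta}$ of the total samples to element $\Part_r$, so that after minimizing over $\Part$ the total sample size matches the claimed bound. For each $r$, let $\Pi_*^{(r)}$ be a root POVM witnessing joint measurability of $\Part_r$, so that every $h \in \Part_r$ has a fine-graining $(\Pi_*^{(r)}, \alpha_h)$ in the sense of Definition~\ref{def:fine-graining}. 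The key structural step, which I would establish next, is that measuring each of the $n_r$ allocated states once with the single POVM $\Pi_*^{(r)}$ and then passing the outcome through the classical channel $\alpha_h$ produces, for every $h \in \Part_r$ simultaneously, a sample whose marginal law is exactly that of $\Out(h, X)$. This is the precise sense in which joint measurability circumvents Born-rule disturbance: one physical measurement per state suffices to form the empirical risk $\hat{R}(h)$ of every hypothesis in $\Part_r$ at once.

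With sample reuse justified, the concentration argument is routine. For a fixed $h \in \Part_r$, $\hat{R}(h)$ is an average of $n_r$ i.i.d.\ $[0,1]$-valued losses with mean $R(h)$, so Hoeffding's inequality gives $\Pr[|\hat{R}(h) - R(h)| > \epsilon/4] \le 2 e^{-n_r \epsilon^2/8}$. Taking a union bound over the $|\Part_r|$ hypotheses in $\Part_r$ and then over the $|\Part|$ partition elements, the choice of $n_r$ forces the total failure probability to at most $\delta$; hence with probability at least $1-\delta$ the uniform-convergence event $\sup_{h \in \Hyp}|\hat{R}(h)-R(h)| \le \epsilon/4$ holds across the entire class. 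I would then let the learner output $\hat{h} = \argmin_{h \in \Hyp}\hat{R}(h)$ and run the standard ERM argument: for any $h_* \in \Hyp$, $R(\hat{h}) \le \hat{R}(\hat{h}) + \epsilon/4 \le \hat{R}(h_*) + \epsilon/4 \le R(h_*) + \epsilon/2 \le \inf_{h_* \in \Hyp} R(h_*) + \epsilon$. Minimizing the allocation over all joint measurability partitions $\Part$ yields the stated sample complexity.

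The main obstacle is the structural lemma about sample reuse, not the concentration bookkeeping. I need the per-hypothesis estimators to be genuine unbiased averages of i.i.d.\ bounded variables, which requires that post-processing the \emph{same} root outcome through different channels $\alpha_h$ preserves the correct marginal distribution for each $h$ — immediate from the definition of a fine-graining — and, more subtly, that Hoeffding applies per hypothesis even though the estimators for different $h$ are statistically coupled through the shared root measurement. The resolution is that the union bound only ever invokes the \emph{marginal} law of $\hat{R}(h)$ for one $h$ at a time, so the cross-hypothesis coupling is irrelevant to each individual concentration statement. Once this is observed, the entire quantum content is absorbed into the fine-graining construction, and the remainder is identical to the classical finite-class PAC argument.
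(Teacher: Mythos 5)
Your proposal is correct: the sample-splitting across partition elements, the root-POVM-plus-classical-postprocessing step that justifies reusing each measured state for every hypothesis in its element, and the Hoeffding/union-bound/ERM chain (with $t=\epsilon/4$ exactly reproducing the $8/\epsilon^2$ constant) is precisely the Quantum ERM argument underlying this theorem. Note that the paper itself does not reprove this statement but imports it from~\cite{Heidari2021}; your reconstruction matches the approach of that cited proof, including the correct resolution of the coupling issue (the union bound needs only the marginal law of each $\hat{R}(h)$, so sharing the root outcomes is harmless).
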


In our case, we will apply this bound to $\Smooth_{TV}(\Hyp, \Part)$, and hence
to $\Hyp$ itself.  In general, it can be difficult to find a minimal jointly measurable partition,
and so we state the following worst-case bound, which we get from the trivial joint measurability partition whose elements are all singletons.  We let $N$ be the $\gamma$-TV-covering number of $\Hyp$.
\begin{align}
    n_{\Hyp}(\epsilon, \delta)
    \leq n_{\Smooth(\Hyp, \Part)}(\epsilon, \delta)
    \leq \sum_{r=1}^N \frac{8}{\epsilon^2}\log\frac{ 2N }{\delta}
    = \frac{8N}{\epsilon^2} \log\frac{2N}{\delta}.
\end{align}
This completes the proof of Theorem~\ref{thm:finite-dimensional-learnable}.

\subsection{Proof of Theorem~\ref{thm:pac-povm-necessary-and-sufficient}}
\label{proof:thm-pac-povm-necessary-and-sufficient}

\subsubsection{Proof that learnable implies finite fat-shattering dimension}
We first prove the necessary condition.  The chain of logic is as follows:
\begin{enumerate}
    \item
        We first prove that if $\Hyp$ is PAC learnable, then the corresponding
        POCC class is PAC learnable.  We do this by a reduction.
    \item
        We then show that if a POCC class is PAC learnable, then the corresponding
        $p$-concept class is PAC learnable, again by a reduction.
    \item
        It is known that a $p$-concept class is PAC learnable if and only if, for
        every $\gamma$, its $\gamma$-fat shattering dimension is finite.  Since
        all fat-shattering dimensions are the same in our chain of reductions, this
        implies that the $\gamma$-fat shattering dimension of $\Hyp$ must be finite.
\end{enumerate}

\paragraph{Step 1: POVM learnability implies POCC class learnability}

We recall that a POVM class has an associated description as a POCC class.  Namely, every POVM
induces a unique conditional distribution on 
outcomes when used to measure a given state.  

\begin{lemma}
    Suppose that $\Hyp$ is a PAC learnable POVM class.
    Then it is PAC learnable as a POCC class.
\end{lemma}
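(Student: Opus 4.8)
The plan is to exhibit an explicit reduction turning the hypothesized POVM learning rule into a POCC learning rule for the induced class, preserving sample complexity. First I would make the induced POCC class precise: to each POVM $h \in \Hyp$ associate the function $f_h:\X\to\Delta(\Cl)$ given by $f_h(x) = \Out(h, x)$, the outcome distribution of measuring the state $x$ with $h$, and let the induced class be $\{f_h : h \in \Hyp\}$. The routine but essential observation is that risks are preserved under this correspondence: since $f_h[X]$ has, by construction, the same conditional distribution given $X$ as $h[X]$, we get $R(f_h) = R(h)$ for every $h$, and hence $\inf_h R(f_h) = \inf_h R(h)$. Thus a POCC hypothesis with near-optimal risk corresponds to a POVM hypothesis with near-optimal risk and conversely.

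The core of the argument is a simulation. Let $\A$ be an $(\epsilon, \delta)$-PAC POVM learning rule for $\Hyp$ with sample size $m(\epsilon, \delta)$. I would build a POCC rule $\A'$ as follows: on a POCC training set $\{(x_j, Y_j)\}_{j=1}^m$ --- where, crucially, each $x_j \in \X$ is a density operator presented to $\A'$ in full classical description --- $\A'$ runs $\A$ as a subroutine, playing the role of the physical environment. Whenever $\A$ requests a measurement of the register nominally prepared in state $x_j$, $\A'$ computes the outcome probabilities from $x_j$ via the Born rule, samples an outcome, returns it to $\A$, and updates its bookkeeping of the post-measurement state so that the next (possibly adaptive) measurement $\A$ requests is answered consistently with the axioms of quantum mechanics. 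When $\A$ halts and outputs a POVM $h \in \Hyp$, $\A'$ outputs the corresponding $f_h$. Note that this simulation never invokes the ``sample-only'' access to hypotheses: $\A'$ answers all of $\A$'s arbitrary probing measurements directly from the known density operators.

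Because every outcome $\A'$ feeds to $\A$ is drawn from exactly the distribution it would have seen had the registers been physically prepared in the states $x_j$ and measured, the joint law of $\A$'s transcript --- and hence of its output $h$ --- under $\A'$'s simulation is identical to its law in the genuine quantum experiment on $\Dist^m$. Consequently the PAC guarantee of $\A$ transfers verbatim: with probability at least $1-\delta$, $R(h) - \inf_{h_*} R(h_*) \le \epsilon$, and by the risk-preservation observation the same inequality holds for $f_h$ in the POCC problem, with identical sample complexity $m(\epsilon, \delta)$. I expect the main obstacle to lie not in the probabilistic bookkeeping but in justifying that the POCC learner is entitled to this simulation at all: one must argue that the POCC framework grants $\A'$ the full classical description of each $x_j$ (the ``probabilistically observed'' restriction constrains how \emph{hypotheses} may be queried, not access to the inputs), and that classical simulation of an adaptive sequence of destructive measurements on a known density operator is faithful, including post-measurement collapse. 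Once this is granted the reduction is exact, and it preserves the fat-shattering dimension needed downstream, since both classes are identified with the same family of induced functions $f_h:\X\to[0,1]$.
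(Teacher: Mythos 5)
Your proof is correct and takes essentially the same approach as the paper: the paper's entire argument is the one-line observation that the POVM learning rule $A$ can be reused verbatim in the POCC framework, which is exactly the simulation you spell out (the POCC learner, having the classical description of each input $x_j$, reproduces the quantum learner's measurement transcript via the Born rule and inherits its $(\epsilon,\delta)$ guarantee, with risks preserved since $f_h[X]$ and $h[X]$ have the same conditional law). Your version merely makes explicit the bookkeeping---risk preservation, faithfulness of the adaptive simulation, and the POCC learner's entitlement to the inputs---that the paper leaves implicit.
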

\begin{proof}
    Let $A$ be an $(\epsilon, \delta)$-PAC learning rule for $\Hyp$.  We may use exactly the same
    learning rule in the POCC framework, and it
    yields the same guarantees.
\end{proof}

\paragraph{Step 2: POCC class learnability implies $p$-concept class learnability}

We recall that every POCC class has an associated $p$-concept class, trivially.

\begin{lemma}
    Suppose that $\Hyp$ is a PAC learnable POCC class.
    Then it is PAC learnable as a $p$-concept class.
\end{lemma}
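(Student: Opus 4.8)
The plan is to establish the easy direction of the observation hierarchy: a $p$-concept learner is granted strictly more information than a POCC learner, since it sees the entire conditional distribution $h(x)$ rather than a single draw $h[x] \sim h(x)$. I would therefore build the desired $p$-concept learning rule by black-box simulation of the given POCC rule. Concretely, let $A$ be an $(\epsilon, \delta)$-PAC learning rule for $\Hyp$ in the POCC model. I would define a $p$-concept rule $A'$ as follows: both models draw the training sample $\{(X_j, Y_j)\}_{j=1}^m$ from the same unknown $\Dist$ on $\X \times \Cl$, so $A'$ runs $A$ and, to supply $A$ with any observation $h[X_j]$ it requires when applying a hypothesis $h$ to an input $X_j$, $A'$ produces this observation by drawing a fresh sample $Z \sim h(X_j)$ from the distribution $h(X_j)$, which it can read off directly in the $p$-concept model, and returns $Z$ to $A$. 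Finally $A'$ outputs whatever $A$ outputs.

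The crux is that this simulation is distributionally exact. Each observation $A'$ supplies is an independent draw from exactly the distribution $h(X_j)$ that $A$ would have observed in the genuine POCC framework, so the whole interaction between $A'$ and $A$ has the same joint law as a run of $A$ under the POCC model with the same $\Dist$ and the same $m$. Hence $A'(S)$ and $A(S)$ are identically distributed. Because the risk $R(h) = \E_{(X,Y) \sim \Dist}[\ell(h[X], Y)]$ depends only on the hypothesis $h$ and on $\Dist$, never on how the learner observes its inputs, the guarantee that $R(A(S)) - \inf_{h_* \in \Hyp} R(h_*) \leq \epsilon$ with probability at least $1 - \delta$ carries over verbatim to $A'$. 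This shows $\Hyp$ is $(\epsilon, \delta)$-PAC learnable as a $p$-concept class, and with the same sample size $m(\epsilon, \delta)$.

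The only point requiring care, and it is bookkeeping rather than a genuine obstacle, is checking that the risk functional and the data-generating process are literally identical across the two models, so that the PAC inequality can be transported unchanged; the observation model differs, but the objective and the sampling of $(X, Y)$ do not. This is expected to be straightforward, since the reduction runs in the favorable direction, with the $p$-concept learner only ever discarding information. With this link in hand, the chain POVM learnable $\Rightarrow$ POCC learnable $\Rightarrow$ $p$-concept learnable is complete, and invoking the known equivalence between $p$-concept learnability and finiteness of the $\gamma$-fat-shattering dimension delivers the necessary condition of Theorem~\ref{thm:pac-povm-necessary-and-sufficient}.
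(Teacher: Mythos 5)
Your simulation step is fine, and it is in fact how the paper begins: since a $p$-concept learner sees the whole distribution $h(x)$, it can sample from it, so any POCC learning rule $A$ is automatically a legitimate learning rule in the $p$-concept model, with identical output distribution. But the part you dismiss as bookkeeping --- that ``the risk functional \ldots [is] literally identical across the two models'' --- is precisely where the mathematical content of the lemma lies, and as stated it is not true by definition. In the $p$-concept framework of Kearns and Schapire (which is what the paper invokes in the very next step of the reduction chain), a hypothesis is the real-valued function $h:\X\to[0,1]$, and its risk is the $L_1$ risk $\E[\,|h(X)-Y|\,]$. In the POCC framework, the risk is the misclassification risk $\Pr[h[X]\neq Y]$ of a random draw $h[X]\sim h(X)$. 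These are different functionals a priori; your argument transports the guarantee for the latter, whereas the fat-shattering characterization you want to invoke afterwards is stated for the former, so your reduction does not connect to it.

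The lemma therefore requires proving the identity
\begin{align}
\Pr[\,h[X]\neq Y\,] = \E[\,|h(X)-Y|\,],
\end{align}
which is what the paper's proof actually does: conditioning on $X$ and using the conditional independence of $h[X]$ and $Y$ given $X$, both sides reduce to $\E[\,h(X)+\Pr[Y=1\mid X]-2\,h(X)\Pr[Y=1\mid X]\,]$. The computation is short, but it is the whole point of the lemma; it is also exactly where the restriction to binary labels and the misclassification/$L_1$ loss pair enters, so it cannot be waved through as model-independent bookkeeping. With this identity supplied, your proof closes and coincides with the paper's.
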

\begin{proof}
         Let $\Hyp$ be a POCC class, and let $\widehat{\Hyp}$
        denote the corresponding $p$-concept class.
        Note that these have exactly the same
        $\gamma$-fat-shattering dimension, for every
        $\gamma$.  We will show that if $\Hyp$ is
        $(\epsilon, \delta)$-PAC learnable, then
        so is $\widehat{\Hyp}$, with the $L_1$ risk.
        
        Let $A$ be a learning rule for $\Hyp$.
        Then it is also a learning rule for $\widehat{\Hyp}$, and the risks of all hypotheses
        in $\Hyp$ are the same as those in $\widehat{\Hyp}$, as we show next.
        In particular, the misclassification risk for
        $h \in \Hyp$ is given by
        \begin{align}
            R_{\Hyp}(h)
            &= \Pr[ h[X] \neq Y] \\
            &= \Pr[h[X] = 1, Y=0] + \Pr[h[X] = 0, Y=1] \\
            &= \E[ \Pr[ h[X] = 1, Y=0 ~|~ X] + \Pr[h[X]=0, Y=1 ~|~ X]  ] \\
            &= \E[ \Pr[ h[X] = 1~|~X]\Pr[Y=0~|~X] + \Pr[h[X] = 0~|~X]\Pr[Y=1~|~X] ] \\
            &= \E[  h(X) (1 - \Pr[Y=1~|~X]) + (1-h(X))\Pr[Y=1~|~X]    ] \\
            &= \E[ h(X) + \Pr[Y=1~|~X] - 2\Pr[Y=1~|~X]h(X)].
        \end{align}
        Meanwhile, the $L_1$ risk for $h \in \widehat{\Hyp}$ is
        \begin{align}
            R_{\widehat{\Hyp}}(h)
            &= \E[ |h(X) - Y| ] \\
            &= \E[  \E[ |h(X) - Y|    ~|~X]  ] \\
            &= \E[  \Pr[ Y=1 ~|~ X ] 
            (1-h(X)) + (1-\Pr[Y=1~|~X] ) h(X)] \\
            &= \E[ h(X) + \Pr[Y=1~|~X] - 2\Pr[Y=1~|~X]h(X)] \\
            &= R_{\Hyp}(h).
        \end{align}
        This implies that $\Hyp$ being PAC learnable implies that $\widehat{\Hyp}$ is PAC learnable
        with the same parameters.  
\end{proof}

\paragraph{Step 3: $p$-concept class learnability only if fat shattering dimension is finite}
It is shown in ~\cite{Kearns1994} that $p$-concept classes
are learnable with respect to the $L_1$ risk only if 
they have finite $\gamma$-fat-shattering dimension for every $\gamma > 0$.  By steps 1 and 2 of our proof,
this implies that if a POVM class $\Hyp$ is PAC learnable, then it has finite $\gamma$-fat-shattering dimension for every $\gamma$, which completes the proof of the necessary condition of the theorem.

\subsubsection{Proof that finite fat-shattering dimension and finite partitionability implies learnable}

We now show that the stated sufficient conditions imply learnability of a POVM class.
The proof outline when $\Hyp$ is jointly measurable, in which case all that is needed is finite fat shattering dimension, is as follows.
\begin{enumerate}
    \item
        In the preliminaries, 
        we show that the expected value of the denoised empirical risk of a POVM is
        its expected risk (Lemma~\ref{lemma:der-expectation} below).  We also define an appropriate generalization of the
        Rademacher complexity of $\Hyp$.
    \item
        We define $\Phi_{\Hyp}(S)$, for a training set $S$, to be the supremum, over all
        hypotheses $h \in \Hyp$, of the deviation of the denoised empirical risk of $h$
        from the expected risk.  We use McDiarmid's inequality to show that $\Phi_{\Hyp}(S)$
        is well-concentrated around its mean.
    \item
        We upper bound the mean of $\Phi_{\Hyp}(S)$ in terms of the Rademacher complexity of $\Hyp$.
        It is in this step that we use Lemma~\ref{lemma:der-expectation}.
    \item
        We apply known bounds on the Rademacher complexity in terms of covering numbers and
        then a known bound on the covering numbers in terms of the fat shattering dimension.
        This implies an upper bound on the sample complexity of PAC learning $\Hyp$ using
        DERM, which implies learnability for jointly measurable hypothesis classes with finite fat shattering dimension.
\end{enumerate}

To establish that the conditions given in the theorem statement -- namely, that $\Hyp$ has finite fat-shattering dimension and that for every $\alpha > 0$, there exists a finite $\alpha$-approximately jointly measurable partition $\Part$ of $\Hyp$ -- are sufficient, we reason as follows:
\begin{enumerate}
    \item
        We note that the joint measurability smoothing
        of a hypothesis class is a jointly measurable
        class.  Thus, inside the loop of DERM, we are computing the denoised empirical risk $\hat{R}_j$ of a jointly measurable class
        and the denoised empirical risk minimizer
        $\hat{\rho}_j$ of that class.
    \item
        By the reasoning for the jointly measurable
        class case earlier in this proof, with probability at least $1 - \delta/|\Part|$, $\hat{R}_j$ is within $\epsilon/2$
        of the minimal true risk inside the smoothed
        partition element $\hat{P}_j$, provided that
        $|\hat{S}|_{j}$ is chosen to be sufficiently
        large as a function of $\epsilon$, $\delta$, and the number of partition elements $|\Part|$.
        Taking a union bound over all partition elements
        ensures that with probability at least $1-\delta$, this holds for \emph{every} partition element.
    \item
        We show that the $\gamma$-joint measurability smoothing of each partition
        element $P_j$ results in hypotheses whose
        risks are at most $2\gamma$ away from the risks
        of the corresponding unsmoothed hypotheses.
        This is the content of Lemma~\ref{lemma:risk-joint-measurability-smoothed}.  Taking $\gamma = \epsilon/4$, we get
        that the resulting hypothesis $h_*$ returned
        by the learning rule is at most $2\gamma + \epsilon/2 = \epsilon$ away from the infimal risk of the hypothesis class $\Hyp$, with probability at least $1-\delta$.
\end{enumerate}

\paragraph{Step 1: Preliminaries for upper bounding sample complexity for jointly measurable $\Hyp$} %

Our first lemma says that the expected value of the
denoised empirical risk of a POVM hypothesis is the true risk.
\begin{lemma}[Expected value of the denoised empirical risk]
    \label{lemma:der-expectation}
    Let $\Hyp$ be a POVM class, and let $h \in \Hyp$.
    We have the following identity:
    \begin{align}
        \E[ \DER(h, S)]
        = R(h).
    \end{align}
\end{lemma}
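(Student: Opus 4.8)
The plan is to establish the identity by a single application of the tower property (law of total expectation), once the correct joint probability space has been set up. The one conceptual point that must be handled with care is that $Z_j$ and $h[X_j]$ cannot both be literal outcomes of measuring the same quantum register $X_j$, since a measurement disturbs the state. Instead, the fine-graining $(\Pi, \{\alpha_h\}_{h\in H})$ of the jointly measurable set containing $h$ supplies the correct joint law: I would work on a probability space carrying $(X_j, Y_j, Z_j, h[X_j])$ in which $(X_j, Y_j)\sim\Dist$ independently across $j$, $Z_j$ is the outcome of measuring $X_j$ with the root POVM $\Pi$, and, conditioned on $Z_j$, the variable $h[X_j]$ is obtained by passing $Z_j$ through the classical channel $\alpha_h$. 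On this space the conditional expectation $\E[\ell(h[X_j], Y_j)\mid \{(Z_k, Y_k)\}_{k=1}^m]$ appearing in the definition of $\DER$ is well-defined and, crucially, is a function of the data $(Z_j, Y_j)$ that the learner can actually observe.

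First I would use linearity of expectation to write $\E[\DER(h,S)] = \tfrac{1}{m}\sum_{j=1}^m \E\big[\E[\ell(h[X_j], Y_j)\mid \{(Z_k, Y_k)\}_{k=1}^m]\big]$. Applying the tower property to each summand, with the sub-$\sigma$-algebra $\mathcal{G}$ generated by the observed pairs $\{(Z_k, Y_k)\}_{k=1}^m$, collapses the inner conditional expectation and yields $\E\big[\E[\ell(h[X_j], Y_j)\mid\mathcal{G}]\big] = \E[\ell(h[X_j], Y_j)]$. This reduces the claim to showing that $\E[\ell(h[X_j], Y_j)] = R(h)$ for each $j$.

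It then remains to identify the unconditional expectation with the true risk. By Definition~\ref{def:fine-graining}, since $\Pi$ is the root of a fine-graining of $h$ via the channel $\alpha_h$, the outcome obtained by measuring $X_j$ with $\Pi$ and postprocessing through $\alpha_h$ has exactly the same distribution as the outcome of measuring $X_j$ directly with $h$, namely $\Out(h, X_j)$. Hence on our coupling the pair $(h[X_j], Y_j)$ has the same joint law as $(\Out(h, X_j), Y_j)$, and because $(X_j, Y_j)\sim\Dist$ we obtain $\E[\ell(h[X_j], Y_j)] = \E_{(X,Y)\sim\Dist}[\ell(\Out(h,X), Y)] = R(h)$. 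Summing over $j$ and dividing by $m$ gives $\E[\DER(h,S)] = R(h)$, as claimed.

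The main obstacle is not the computation, which is a one-line tower-property argument, but the measure-theoretic bookkeeping in the setup: one must verify that the joint distribution placed on $(X_j, Y_j, Z_j, h[X_j])$ is simultaneously consistent with the fine-graining channel $\alpha_h$ (so that the conditional expectation defining $\DER$ is indeed computable from $(Z_j, Y_j)$ alone) and has the correct marginal $(h[X_j], Y_j)\sim(\Out(h, X), Y)$ (so that its unconditional expectation is $R(h)$). Once this coupling is fixed, the remaining steps are routine.
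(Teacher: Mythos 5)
Your proof is correct and follows essentially the same route as the paper's: a direct application of the tower property of conditional expectation, together with the fact that the fine-graining guarantees $h[X_j]$ (as reconstructed from $Z_j$ via $\alpha_h$) has the same law as the outcome of measuring $X_j$ with $h$, so each summand has unconditional expectation $R(h)$. The paper's proof is terser and leaves the coupling/fine-graining bookkeeping implicit; your explicit treatment of it is a faithful elaboration rather than a different argument.
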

\begin{proof}
    This is a result of the tower property of conditional expectation:
    \begin{align}
        \E[\DER(h, S)]
        &= \E[ \E[ \frac{1}{m} \sum_{j=1}^m \ell(h[X_j],   Y_j) ~|~ Y, Z ]  ] 
        = \frac{1}{m} \sum_{j=1}^m \E[ \E[ \ell(h[X_j],  Y_j ~|~ Y_j, Z] ] \\
        &= \frac{1}{m} \sum_{j=1}^m R(h) 
        = R(h).
    \end{align}
\end{proof}

We next define an analogue of the Rademacher complexity~\cite{UnderstandingMachineLearning}
of a POVM class.

\begin{definition}[Rademacher complexity of a jointly measurable POVM class]
    \label{def:rademacher-jointly-measurable}
    Let $\Hyp$ be a POVM class consisting of jointly measurable POVMs with root POVM $\rho$, and let
    $\Dist$ be some distribution on $\X \times \Cl$. 
    
    We define the $m$th Rademacher complexity (with respect to the specified fine graining of which $\rho$ is the root) of $\Hyp$ to be 
    \begin{align}
        \Rad_m(\Hyp)
        = \frac{1}{m} \E_{S,\sigma}[ \sup_{h\in \Hyp} \sum_{j=1}^m \sigma_j \zeta(S_j)]
    \end{align}
    where
    $S \sim \Dist^m$,
    $\zeta(S_j)$ denotes the random variable
    $\E[ \ell(h[X_j], Y_j) ~|~ Y_j, Z_j]$, and
    $Z_j$ is the random outcome of measuring $X_j$
    with $\rho$.
\end{definition}

\paragraph{Step 2: Defining $\Phi_{\Hyp}(S)$ -- the point generalization gap, and applying McDiarmid}

        We define the following function, called the \emph{point generalization gap} for DERM of $\Hyp$ on the dataset $S \in (\X\times\Cl)^m$:
        \begin{align}
            \Phi_H(S)
            = \Phi_{\Hyp}(S)
            = \sup_{h \in \Hyp} ( \DER(h, S) - R(h)).
        \end{align}
        
        Next, we apply McDiarmid's inequality to show that $\Phi(S)$ is close to its expected value with high probability.  To do this, we need to upper bound the
        maximum possible value of $|\Phi(S) - \Phi(\hat{S})|$, where $\hat{S}$ differs from $S$ in exactly one coordinate (say, coordinate $j$).
        We have
        \begin{align}
            &|\Phi(S) - \Phi(\hat{S})| \\
            &= |\sup_{h\in \Hyp} (\DER(h,S) - R(h))
                - \sup_{h\in \Hyp} (\DER(h,\hat{S}) - R(h))| \\
            &\leq |\sup_{h \in \Hyp} (\DER(h, S) - \DER(h, \hat{S}))| \\
            &= \frac{1}{m}|\sup_{h \in \Hyp} \sum_{k \neq j} \E[\ell(h[x_k], y_k) ~|~ (z_k, y_k)] - \E[\ell(h[x_k], y_k) ~|~ (z_k, y_k)] \\
            &~~~~+ \E[\ell(h[x_j], y_j) ~|~ (z_j, y_j)] - \E[\ell(h[\hat{x}_j], \hat{y}_j) ~|~ (\hat{z}_j, \hat{y}_j)] | \\
            &= \frac{1}{m} |\sup_{h\in \Hyp}  |\E[\ell(h[x_j], y_j) ~|~ (z_j, y_j)] - \E[\ell(h[\hat{x}_j], \hat{y}_j) ~|~ (\hat{z}_j, \hat{y}_j)] | \\
            &\leq 1/m.
        \end{align}
        The first inequality is because the difference between suprema is less than or equal to the supremum of the difference.  The final inequality is because the loss is bounded between $0$ and $1$.
        This is essentially exactly the same as in the classical case.
        
        Applying McDiarmid's inequality, we then get the following intermediate bound.
        \begin{lemma}[Application of McDiarmid's inequality to $\Phi(S)$]
            \label{lemma:mcdiarmid-inequality}
            For every $\gamma > 0$, we have
            \begin{align}
                \Pr[ |\Phi(S) - \E_S[\Phi(S)]| \geq \gamma ]
                \leq \exp\left( -2m\gamma^2 \right).
            \end{align}
            Since we want this probability to be at most $\delta$, we choose $\gamma$ such that
            \begin{align}
                e^{-2m\gamma^2} = \delta,
            \end{align}
            which implies
            \begin{align}
                \log \delta = -2m\gamma^2
                \implies \gamma = \sqrt{ \frac{\log(1/\delta)}{2m} }.
            \end{align}
            That is, with probability at least $1-\delta$, we have
            \begin{align}
                \Phi(S) 
                \leq  \E_S[\Phi(S)] +  \sqrt{ \frac{\log(1/\delta)}{2m} }.
            \end{align}
        \end{lemma}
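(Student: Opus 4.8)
The plan is to invoke McDiarmid's bounded-differences inequality directly, treating $\Phi(S)$ as a function of $m$ independent random quantities. First I would recall the statement of McDiarmid's inequality: if $W_1, \ldots, W_m$ are independent random variables and a function $g$ satisfies the bounded-differences property with constants $c_1, \ldots, c_m$ (meaning that altering the value of the $j$-th argument alone changes $g$ by at most $c_j$), then $\Pr[g - \E[g] \geq \gamma] \leq \exp(-2\gamma^2 / \sum_j c_j^2)$. The crucial preliminary observation is that, although $\DER(h, S)$ is defined through a conditional expectation, as a random variable it is measurable with respect to $\sigma(\{(Z_j, Y_j)\}_{j=1}^m)$, hence a genuine function of the tuple $((Z_1, Y_1), \ldots, (Z_m, Y_m))$, where $Z_j$ is the outcome of measuring $X_j$ with the common root POVM $\rho$. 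Since the samples are drawn iid from $\Dist$ and each $Z_j$ is produced from $X_j$ by the same fixed measurement, the pairs $\{(Z_j, Y_j)\}_{j=1}^m$ are themselves iid. Thus $\Phi(S) = \sup_{h \in \Hyp}(\DER(h, S) - R(h))$ is a legitimate function of $m$ independent quantities, and McDiarmid applies with $W_j = (Z_j, Y_j)$.

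Next I would supply the bounded-differences constant, which is exactly the computation already carried out immediately before the lemma statement: replacing the $j$-th sample changes $\Phi$ by at most $1/m$, since all other summands cancel in the difference of the two suprema and the per-sample conditional expectations of the loss lie in $[0, 1]$. Hence I take $c_j = 1/m$ for every $j$, giving $\sum_{j=1}^m c_j^2 = m \cdot (1/m)^2 = 1/m$, and McDiarmid then yields $\Pr[\Phi(S) - \E_S[\Phi(S)] \geq \gamma] \leq \exp(-2m\gamma^2)$. This one-sided form is all that the conclusion requires; the two-sided event is controlled by the same exponent (incurring at most a harmless factor of $2$ that does not affect the subsequent argument).

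Finally, to obtain the high-probability bound I would set $\exp(-2m\gamma^2) = \delta$ and solve for $\gamma$, giving $\gamma = \sqrt{\log(1/\delta)/(2m)}$, which rearranges into the statement that with probability at least $1 - \delta$ one has $\Phi(S) \leq \E_S[\Phi(S)] + \sqrt{\log(1/\delta)/(2m)}$. I do not anticipate any serious obstacle here, as essentially all of the work lives in the already-verified bounded-differences estimate $|\Phi(S) - \Phi(\hat{S})| \leq 1/m$. The only point genuinely demanding care is the observation that $\Phi$ factors through the iid outcome-label pairs $(Z_j, Y_j)$, so that the independence hypothesis of McDiarmid is legitimately satisfied despite the defining conditional expectation; once that is in place, the remainder is a direct substitution.
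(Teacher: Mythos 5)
Your proposal is correct and takes essentially the same route as the paper: the paper likewise applies McDiarmid's inequality with the bounded-differences constant $c_j = 1/m$ established in the computation immediately preceding the lemma, obtaining $\exp(-2m\gamma^2)$ and then solving $\exp(-2m\gamma^2)=\delta$ for $\gamma$. Your two added points of care---that $\Phi(S)$ factors through the iid pairs $(Z_j, Y_j)$ so that McDiarmid's independence hypothesis genuinely holds despite the conditional expectation in the definition of $\DER$, and that only the one-sided bound is needed (the two-sided form as stated would strictly carry a factor of $2$)---are refinements the paper leaves implicit, not a different argument.
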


\paragraph{Step 3: Upper bounding $\E[\Phi(S)]$ via Rademacher complexity}
        Next, we need an upper bound on the expected value of $\Phi(S)$ in terms of the Rademacher complexity defined in Definition~\ref{def:rademacher-jointly-measurable}.  
        \begin{lemma}[Rademacher complexity upper bound]
            \label{lemma:rademacher-upper-bound}
            We have the following:
            \begin{align}
                \E[\Phi_{\Hyp}(S)]
                \leq 2\Rad(\Hyp).
            \end{align}
        \end{lemma}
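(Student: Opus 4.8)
The plan is to reduce this to the classical symmetrization bound for the Rademacher complexity. The key observation is that, writing $\zeta_h(S_j) = \E[\ell(h[X_j], Y_j) \mid Y_j, Z_j]$ as in Definition~\ref{def:rademacher-jointly-measurable}, we have $\DER(h, S) = \frac{1}{m}\sum_{j=1}^m \zeta_h(S_j)$, and by Lemma~\ref{lemma:der-expectation} (equivalently, the tower property) $R(h) = \E[\zeta_h(S_j)]$ for each $j$. Since the pairs $(Z_j, Y_j)$ are i.i.d.\ across $j$ and $\zeta_h$ is a fixed $[0,1]$-valued measurable function of the observable $(Z_j, Y_j)$, the quantity $\Phi_{\Hyp}(S) = \sup_{h\in\Hyp}(\DER(h,S) - R(h))$ is exactly the supremal deviation of an empirical mean of i.i.d.\ bounded random variables from its expectation, taken over the induced function class $\{\zeta_h : h \in \Hyp\}$. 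This places us squarely in the setting where the classical symmetrization argument applies.

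First I would introduce a ghost sample $S' = \{(X_j', Y_j')\}_{j=1}^m \sim \Dist^m$, independent of $S$, with $Z_j'$ the outcome of measuring $X_j'$ with the root POVM $\rho$. Using $R(h) = \E_{S'}[\frac{1}{m}\sum_j \zeta_h(S_j')]$ together with convexity of the supremum (Jensen's inequality, to pull $\E_{S'}$ outside the $\sup_h$), I obtain
\begin{align}
    \E_S[\Phi_{\Hyp}(S)]
    \leq \E_{S, S'}\Big[ \sup_{h\in\Hyp} \frac{1}{m}\sum_{j=1}^m (\zeta_h(S_j) - \zeta_h(S_j')) \Big].
\end{align}

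Next I would symmetrize: introducing i.i.d.\ Rademacher signs $\sigma_j \in \{-1, +1\}$ independent of everything else, the fact that $S_j$ and $S_j'$ are i.i.d.\ means that $\zeta_h(S_j) - \zeta_h(S_j')$ has a symmetric distribution, so inserting $\sigma_j$ does not change the joint law of the summands. Splitting the supremum of the difference into a sum of two suprema, and noting that each resulting term equals $\Rad_m(\Hyp)$ (using the symmetry of $\sigma_j$ and $-\sigma_j$), yields the factor of $2$:
\begin{align}
    \E_S[\Phi_{\Hyp}(S)]
    \leq 2 \E_{S, \sigma}\Big[ \sup_{h\in\Hyp} \frac{1}{m}\sum_{j=1}^m \sigma_j \zeta_h(S_j) \Big]
    = 2\Rad_m(\Hyp).
\end{align}

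The main obstacle is not the symmetrization itself, which is textbook, but rather justifying the reduction in the first paragraph: one must verify that $\DER$ genuinely is an empirical average of i.i.d.\ \emph{observable} quantities $\zeta_h(Z_j, Y_j)$, and that conditioning on $\{(Z_j, Y_j)\}_{j=1}^m$ does not couple distinct samples, so that the i.i.d.\ structure the classical argument relies upon is preserved. Once this is established, the remaining steps are the standard ghost-sample-and-Rademacher-signs manipulation, with the only care being the use of Jensen's inequality for the supremum and the symmetry of the Rademacher variables.
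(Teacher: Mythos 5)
Your proposal is correct and takes essentially the same route as the paper's proof: both reduce $\DER(h,S)$ to an empirical average of the observable quantities $\zeta_h(Z_j,Y_j)$ with mean $R(h)$, introduce a ghost sample, apply Jensen's inequality, and symmetrize with Rademacher signs to obtain the factor $2\Rad_m(\Hyp)$. Your explicit check that conditioning on the full collection $\{(Z_j,Y_j)\}_{j=1}^m$ does not couple distinct samples (so the i.i.d.\ structure survives) is the one point the paper leaves implicit, and it is verified correctly.
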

        \begin{proof}
            We follow the pattern as in classical statistical learning theory:
            we first rewrite $R(h)$ in $\E[\Phi(S)]$ as the expected value of an empirical
            expectation.  
            \begin{align}
                \E_S[\Phi(S)]
                = \E[ \sup_{h \in \Hyp} (\DER(h, S) - R(h))  ] 
                = \E[ \sup_{h \in \Hyp} ( \frac{1}{m} \sum_{j=1}^m \E[ \ell(h[X_j], Y_j) ~|~ Y_j, Z_j] - R(h) ) ],
                \label{expr:ExpectedPhi1}
            \end{align}
            where we recall that $Z_j$ is the random
            outcome of measuring $X_j$ with the root
            measurement $\rho$.
            
            We rewrite $R(h)$ as
            \begin{align}
                R(h)
                = \E_{\hat{S}}[ \DER(h, \hat{S})],
            \end{align}
            where $\hat{S}$ is independent and equal in distribution to $S$.  Plugging this into (\ref{expr:ExpectedPhi1}),
            \begin{align}
                \E_S[\Phi(S)]
                = \E_{S,\hat{S}}[ \sup_{h \in \Hyp} ( \frac{1}{m} \sum_{j=1}^m \E[ \ell(h[X_j], Y_j) ~|~ Y_j, Z_j] -  \E[ \ell(h[\hat{X}_j], \hat{X}_j) ~|~ \hat{Y}_j, \hat{Z}_j]) ].
            \end{align}
            The remaining steps can be carried out \emph{exactly} as in the classical case, and we will end up with an upper bound by the Rademacher complexity of the set of denoised empirical risks of hypotheses in the class.  Let $\zeta(S_j)$ denote
            the random variable $\E[\ell(h[X_j], Y_j) ~|~ Y_j, Z_j]$.
            Then the above can be more succinctly written as
            \begin{align}
                &\E_{S,\hat{S}}[ \sup_{h \in \Hyp} ( \frac{1}{m} \sum_{j=1}^m \E[ \ell(h[X_j], Y_j) ~|~ Y_j, Z_j] -  \E[ \ell(h[\hat{X}_j], \hat{Y}_j) ~|~ \hat{Y}_j, \hat{Z}_j]) ] \\
                &= \frac{1}{m} \E_{S,\hat{S}}[ \sup_{h \in \Hyp} \sum_{j=1}^m \zeta(S_j) - \zeta(\hat{S}_j)].
            \end{align}
            Introducing Rademacher random variables $\sigma_j \in \{-1, 1\}$, for $j \in [m]$, this is equal to
            \begin{align}
                \frac{1}{m} \E_{S,\hat{S}}[\sup_{h\in\Hyp} \sum_{j=1}^m \zeta(S_j) - \zeta(\hat{S}_j)]
                = \frac{1}{m} \E_{S,\hat{S},\sigma}[ \sup_{h \in \Hyp} \sum_{j=1}^m \sigma_j \zeta(S_j) - \sigma_j \zeta(\hat{S}_j)    ].
            \end{align}
            Note that this is because if two random variables $\zeta(S_j)$ and $\zeta(\hat{S}_j)$ are independent
            and identically distributed, their difference is equal in distribution to $\sigma_j \cdot (\zeta(S_j) - \zeta(\hat{S}_j))$.
            Finally, this is less than or equal to 
            \begin{align}
                \frac{1}{m} \E_{S,\hat{S},\sigma}[ \sup_{h \in \Hyp} \sum_{j=1}^m \sigma_j \zeta(S_j) - \sigma_j \zeta(\hat{S}_j)    ]
                \leq \frac{2}{m} \E_{S,\hat{S},\sigma}[ \sup_{h\in \Hyp} \sum_{j=1}^m \sigma_j \zeta(S_j)]
                = 2\Rad_m(\Hyp).
            \end{align}
        \end{proof}

\paragraph{Step 4: Upper bounding the Rademacher complexity via covering numbers and fat shattering dimension}
 
         By Step 3, if we want to show that a hypothesis class is learnable, then we should show that
        the Rademacher complexity above is $o(1)$ as 
        $m\to\infty$.  We will show that the 
        \emph{fat-shattering dimension} is the relevant quantity, 
        and that it is sufficient for this quantity to be 
        finite.  Lemmas~\ref{lemma:rademacher-covering}
        and~\ref{lemma:covering-fat-shattering} below establish
        the connection between the Rademacher complexity
        and the fat shattering dimension.

        \begin{definition}[Covering numbers $\CNum(\alpha, \F, m)$]
            Let $\F$ be a metric space with metric $d$.  We say that a subset $\hat{\F} \subseteq \F$
            is an $\alpha$-covering of $\F$ with respect to $d$ if for every $f \in \F$, there exists $g \in \hat{\F}$ such that $d(f, g) \leq \alpha$.  The $\alpha$-covering number of $\F$ with respect to $d$ is defined to be the minimum cardinality of any $\alpha$-covering of $\F$ and is denoted by $\CNum_{d}(\alpha, \F)$.  In place of $d$ in the subscript, we may put a norm, in which case the relevant metric is the one induced by that norm.

            For a class $\F$ of functions $f:\X\to\Y$, 
            we define the following norm %
            with respect to a set $S = (x_1, ..., x_m) \in \X^m$:
            \begin{align}
                \|f - g\|_{\infty,S}
                = \sup_{j \in \{1, 2, ..., m\}} |f(x_j) - g(x_j)|.
            \end{align}
            The covering numbers of $\F$ with respect to this norm on the set $S$
            are well-defined.
            We then define the covering number
            $\CNum(\alpha, \F, m) = \sup_{S \in \X^m} \CNum_{\|\cdot\|_{\infty,S}}(\alpha, \F, S)$.  %
        \end{definition}

        \begin{lemma}[Upper bound on Rademacher complexity via covering numbers~\cite{ChervonenkisFestschrift}]
            \label{lemma:rademacher-covering}
            Let $\F$ be a function class.  Then
            \begin{align}
                \Rad_m(\F)
                \leq \inf_{\alpha} \left(\alpha 
                    + \sqrt{\frac{2\log(\CNum(\alpha, \F, m))}{m}  } \right).
            \end{align}
        \end{lemma}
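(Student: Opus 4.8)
The plan is to prove this by the standard one-scale discretization of $\F$ followed by the finite-class (Massart) bound. First I would fix an arbitrary scale $\alpha > 0$ and condition on the sample $S = (x_1, \dots, x_m)$, so that it suffices to bound the conditional Rademacher average uniformly over $S$ and then take the outer expectation $\E_S$. For the fixed $S$, I would take a minimal $\alpha$-cover $\hat{\F}$ of $\F$ with respect to the norm $\norm{\cdot}_{\infty,S}$; by the definition of $\CNum(\alpha, \F, m)$ as a supremum over samples, we have $|\hat{\F}| \leq \CNum(\alpha, \F, m)$ regardless of the realization of $S$.

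The second step is a pathwise decomposition. For each $f \in \F$, choose $g = g(f) \in \hat{\F}$ with $\norm{f - g}_{\infty,S} \leq \alpha$, i.e.\ $|f(x_j) - g(x_j)| \leq \alpha$ for every $j$. Writing $\sum_j \sigma_j f(x_j) = \sum_j \sigma_j g(x_j) + \sum_j \sigma_j (f(x_j) - g(x_j))$, the residual term is bounded in absolute value by $\sum_j |f(x_j) - g(x_j)| \leq m\alpha$ \emph{for every realization of} $\sigma$. Hence, taking the supremum over $f$ and then $\E_\sigma$, I obtain
\begin{align}
    \frac{1}{m} \E_\sigma\Big[ \sup_{f \in \F} \sum_{j=1}^m \sigma_j f(x_j) ~\Big|~ S \Big]
    \leq \alpha + \frac{1}{m} \E_\sigma\Big[ \max_{g \in \hat{\F}} \sum_{j=1}^m \sigma_j g(x_j) ~\Big|~ S \Big].
\end{align}

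The remaining step reduces the finite maximum via Massart's lemma. The vectors $(g(x_1), \dots, g(x_m))$ for $g \in \hat{\F}$ form a finite subset of $\R^m$, and since the induced POVM functions take values in $[0, 1]$, each such vector has Euclidean norm at most $\sqrt{m}$. The standard exponential-moment argument (bounding $\E_\sigma[\max_g \sum_j \sigma_j g(x_j)] \leq \frac{1}{\lambda}\log \sum_g \E_\sigma[e^{\lambda \sum_j \sigma_j g(x_j)}]$, applying the Hoeffding bound $\E[e^{\lambda \sigma_j t}] \leq e^{\lambda^2 t^2/2}$ coordinatewise, and optimizing over $\lambda$) then gives $\E_\sigma[\max_{g \in \hat{\F}} \sum_j \sigma_j g(x_j)] \leq \sqrt{m}\cdot\sqrt{2\log|\hat{\F}|}$. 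Dividing by $m$ and using $|\hat{\F}| \leq \CNum(\alpha, \F, m)$ yields the conditional bound $\alpha + \sqrt{2\log \CNum(\alpha, \F, m)/m}$; since it holds for every $S$, taking $\E_S$ preserves it, and taking $\inf_\alpha$ gives the lemma.

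There is no deep obstacle here, as this is a classical result; the care lies entirely in the bookkeeping. The two points to handle cleanly are (i) that the cover $\hat{\F}$ depends on $S$, which is why I condition on $S$ before covering and then invoke the worst-case definition of $\CNum(\alpha, \F, m)$, and (ii) that Massart's lemma needs a uniform Euclidean-norm bound on the cover vectors, which is exactly where I use that the functions induced by POVMs are probabilities lying in $[0,1]$. The only subtlety worth flagging is ensuring the residual bound $m\alpha$ is pathwise in $\sigma$, so that the $\alpha$ term factors out before any expectation is taken.
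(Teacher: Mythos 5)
Your proof is correct, and it is the standard argument: a single-scale discretization of $\F$ at radius $\alpha$ in the $\norm{\cdot}_{\infty,S}$ norm, followed by Massart's finite-class lemma applied to the cover, with the residual term bounded pathwise in $\sigma$ by $m\alpha$. Note, however, that the paper does not prove this lemma at all --- it is imported as a known result from the literature (the citation to the Chervonenkis Festschrift), so there is no internal proof to compare against; your writeup would in fact make the paper more self-contained. Two remarks on the bookkeeping, both of which you handle or should handle explicitly. First, as stated in the paper the lemma says only ``let $\F$ be a function class,'' but the bound is false without a range restriction (e.g.\ for $\F = \{f, -f\}$ with $f$ unbounded, the covering number is $2$ while the Rademacher complexity is unbounded); your invocation of the $[0,1]$ range to get the Euclidean norm bound $\sqrt{m}$ in Massart's lemma is therefore not merely contextual convenience but a necessary hypothesis, and it is legitimate here because the functions the paper feeds into this lemma (conditional expectations of a $\{0,1\}$-valued loss) do lie in $[0,1]$. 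Second, your handling of the $S$-dependence of the cover is right: since the paper's definition of $\CNum(\alpha,\F,m)$ takes a supremum over samples $S \in \X^m$, and its covers are internal ($\hat{\F} \subseteq \F$, so cover elements inherit the $[0,1]$ range), conditioning on $S$, covering, and then invoking the worst-case covering number is exactly the clean way to make the argument rigorous before taking $\E_S$ and then $\inf_\alpha$.
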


        \begin{lemma}[Upper bound on covering numbers via fat-shattering dimension~\cite{ChervonenkisFestschrift}]
            \label{lemma:covering-fat-shattering}
            Let $\F$ be a class of functions with range $[0, 1]$.  Then the following upper bound on the covering numbers holds: let $\alpha \geq 0$ and $d = \fat_{\alpha/4}(\F)$.  Then
            \begin{align}
                \CNum(\alpha, \F, m)
                \leq 2 \cdot \left( m \cdot (2/\alpha + 1)^2\right)^{\ceil{ d \log\left(\frac{2em}{d\alpha}\right) }}
            \end{align}
        \end{lemma}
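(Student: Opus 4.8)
The plan is to reduce the covering-number bound to a purely combinatorial counting problem over a discretized range, and then to invoke a scale-sensitive generalization of the Sauer--Shelah lemma. First I would fix an arbitrary sample $S = (x_1, \ldots, x_m) \in \X^m$ and study the restrictions $f|_S \in [0,1]^m$ of functions $f \in \F$; by the definition of $\CNum(\alpha, \F, m)$ as a supremum over $S$, it suffices to bound the $\alpha$-covering number of this set of vectors in the $\|\cdot\|_{\infty,S}$ norm uniformly over $S$. I would discretize the range $[0,1]$ into levels of width $\alpha/2$, producing an alphabet of size roughly $2/\alpha + 1$, and map each $f|_S$ to its quantized vector in $\{0, 1, \ldots, b\}^m$ with $b+1 \approx 2/\alpha + 1$. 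Two functions whose quantizations agree on all of $S$ are within $\alpha$ in $\|\cdot\|_{\infty,S}$; hence selecting one representative per realized quantization pattern yields an $\alpha$-cover, so $\CNum_{\|\cdot\|_{\infty,S}}(\alpha, \F, S)$ is at most the number of distinct quantization patterns, and the problem becomes one of counting these patterns.

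The core of the argument is then to bound this pattern count in terms of the fat-shattering dimension. Here I would apply the scale-sensitive Sauer--Shelah lemma of \cite{ChervonenkisFestschrift}: if a set of vectors in $\{0, \ldots, b\}^m$ has integer-scale shattering dimension at most $d$, then the number of distinct vectors is at most $2(m(b+1)^2)^{\lceil d \log(2em(b+1)/d) \rceil}$, which after substituting $b+1 \approx 2/\alpha + 1$ yields the stated bound with exponent $\lceil d \log(2em/(d\alpha)) \rceil$ and the factor $(2/\alpha + 1)^2$. The logarithmic factor in the exponent is intrinsic to the scale-sensitive regime and is exactly what distinguishes this bound from the cleaner polynomial-in-$m$ bound one obtains from the scale-free pseudo-dimension.

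The remaining and most delicate link is the passage between the integer-scale shattering of the quantized class and the $(\alpha/4)$-fat-shattering of the original real-valued class $\F$. The point is that a set of coordinates shattered by the quantized vectors with integer margin corresponds to a set that is $\gamma$-fat-shattered by $\F$ with $\gamma = \alpha/4$: quantizing at width $\alpha/2$ displaces each value by at most $\alpha/4$, so integer-margin separation after quantization forces $\alpha/4$-margin separation before it, against a common witness threshold. Thus the combinatorial shattering dimension of the quantized class is at most $\fat_{\alpha/4}(\F) = d$, licensing the substitution above. I expect the main obstacle to be the generalized Sauer--Shelah lemma itself, whose proof proceeds by a double induction on $m$ and on the alphabet size $b$ and requires setting up shattering-with-witnesses at integer scale so that the inductive shifting (contraction) argument preserves it; by comparison the scale-conversion bookkeeping that tracks the factor of $4$ is routine, though it must be carried out consistently to land the exact $\alpha/4$ threshold in the statement.
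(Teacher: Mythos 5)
This lemma is not proved in the paper at all; it is imported verbatim as a known result from~\cite{ChervonenkisFestschrift}, so the only meaningful comparison is against that cited source, and your sketch correctly reconstructs its standard argument: reduce to a fixed sample, quantize the range at width $\alpha/2$ so that covering reduces to counting realized patterns, bound the pattern count by the multi-valued (scale-sensitive) Sauer--Shelah lemma proved by double induction, and convert integer-margin shattering of the quantized class into $\alpha/4$-fat-shattering of $\F$ to justify substituting $d = \fat_{\alpha/4}(\F)$. The structure, the identification of the combinatorial lemma as the real technical core, and the scale-conversion step all match the canonical proof, with only routine constant bookkeeping (e.g.\ the exact argument of the logarithm in the exponent) left imprecise.
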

    
    Plugging the upper bound in Lemma~\ref{lemma:covering-fat-shattering} into
    the one in Lemma~\ref{lemma:rademacher-covering},
    and then plugging that into Lemma~\ref{lemma:rademacher-upper-bound} and, finally, using Lemma~\ref{lemma:mcdiarmid-inequality} yields the following statement:
    with probability at least $1-\delta$,
    \begin{align}
        \Phi(S)
        \leq 2\Rad_m(\Hyp)
        \leq 2\inf_{\alpha}\left( \alpha +
            \sqrt{ 
                \frac{2\log 2 + 2 \ceil{ d \log\left(\frac{2em}{d\alpha}\right)} \log(m(2/\alpha + 1 )^2)}
                {m}
            } \right),
    \end{align}
    which can be upper bounded by setting $\alpha = \Theta(\frac{\log m}{\sqrt{m}})$.
    This results in the following bound:
    \begin{align}
        \E_{S}[\Phi(S)]
        \leq O(\frac{\sqrt{d}\log m}{\sqrt{m}})
        \implies
        \Phi(S)
        \leq O(\frac{\sqrt{d}\log m + \sqrt{\log(1/\delta)}}{\sqrt{m}})
    \end{align}
    
    This directly translates to a finite upper bound
    on the number of samples $n_{\Hyp}(\epsilon, \delta)$ required to achieve
    a risk within $\epsilon$ of the infimum, provided
    that the fat shattering dimension $d$ is finite.  Specifically,
    \begin{align}
        n_{\Hyp}(\epsilon, \delta)
        \leq
        \inf_{\Part}  O(|\Part|\frac{d + \log(1/\delta)}{\epsilon^2}),
    \end{align}
    where $\Part$ ranges over all finite $\epsilon/8$-almost-jointly-measurable partitions of $\Hyp$.
    
    This completes the proof of sufficiency of finite fat shattering dimension for learnability in the case where $\Hyp$ is a jointly measurable class.

\paragraph{Completing the proof of the theorem}
It remains to prove the following lemma, which is the remaining detail for completing the argument in the case where $\Hyp$ can be partitioned into finitely many approximately jointly measurable classes:

\begin{lemma}[Risk of joint measurability-smoothed hypothesis classes]
    \label{lemma:risk-joint-measurability-smoothed}
    Let $\hat{\Hyp} = \Smooth_{JM}(\Hyp, \Part)$,
    where $\Part$ is a finite $\gamma$-joint measurability smoothing of $\Hyp$.  Then
    \begin{align}
        \inf_{h \in \Hyp} R(h) - 2\gamma \leq \inf_{h \in \hat{\Hyp}} R(h) \leq \inf_{h \in \Hyp} R(h) + 2\gamma.
    \end{align}
\end{lemma}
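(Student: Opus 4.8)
The plan is to reduce the entire lemma to a single pointwise estimate: that joint-measurability smoothing relocates any hypothesis by at most $\gamma$ in total variation distance. Writing $\hat{\Hyp}$ for the union of the smoothed partition elements $\Smooth_{JM}(P_j, \Pi_*^{(j)})$ over $\Part = \{P_j\}$ with centers $\Pi_*^{(j)}$, I would first fix a partition element $P_j$ and a hypothesis $h \in P_j$. Since $P_j$ is $\gamma$-approximately jointly measurable with center $\Pi_*^{(j)}$, the hypothesis $h$ admits a fine-graining $(\Pi', \alpha)$ whose root $\Pi'$ satisfies $d_{TV}(\Pi_*^{(j)}, \Pi') \leq \gamma$; that is, $h$ equals $\Pi'$ post-processed through the classical channel $\alpha$, which I write as $h = \alpha \circ \Pi'$. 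The natural competitor that is rooted at $\Pi_*^{(j)}$ and close to $h$ is $\alpha \circ \Pi_*^{(j)}$, obtained by feeding $\Pi_*^{(j)}$ through the \emph{same} channel $\alpha$. This competitor lies in the feasible set $\{\Pi : \Pi_*^{(j)} \in \root(\Fine(\Pi))\}$ of the $\argmin$ defining $\Smooth_{JM}(h, \Pi_*^{(j)})$, because $(\Pi_*^{(j)}, \alpha)$ is by construction a fine-graining of $\alpha \circ \Pi_*^{(j)}$ with root $\Pi_*^{(j)}$. Hence
\begin{align}
    d_{TV}(\Smooth_{JM}(h, \Pi_*^{(j)}), h)
    &\leq d_{TV}(\alpha \circ \Pi_*^{(j)}, h)
    = d_{TV}(\alpha \circ \Pi_*^{(j)}, \alpha \circ \Pi') \\
    &\leq d_{TV}(\Pi_*^{(j)}, \Pi')
    \leq \gamma,
\end{align}
where the first step is optimality of the smoothing, the second is $h = \alpha \circ \Pi'$, the third is the data-processing inequality for total variation distance applied state-by-state (post-processing both outcome distributions through the common channel $\alpha$ cannot increase their distance, and this survives the supremum over states in Definition~\ref{def:dtv-povms}), and the last is approximate joint measurability.

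With this estimate in hand, Lemma~\ref{lemma:dtv-risk} immediately yields $|R(h) - R(\Smooth_{JM}(h, \Pi_*^{(j)}))| \leq 2\, d_{TV}(h, \Smooth_{JM}(h, \Pi_*^{(j)})) \leq 2\gamma$ for every $h \in \Hyp$. Here I would note that the risk bound of Lemma~\ref{lemma:dtv-risk} is a general consequence of the total variation distance between two POVMs and does not require the smoothed POVM itself to belong to $\Hyp$, so it applies even though $\Smooth_{JM}(h, \Pi_*^{(j)})$ need not be an element of the original class.

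The passage from this pointwise risk bound to the stated bound on infima is then routine. For the upper bound $\inf_{h \in \hat{\Hyp}} R(h) \leq \inf_{h \in \Hyp} R(h) + 2\gamma$, I would take an arbitrary $h \in \Hyp$, locate the partition element $P_j \ni h$, and observe that its image $\Smooth_{JM}(h, \Pi_*^{(j)}) \in \hat{\Hyp}$ has risk at most $R(h) + 2\gamma$; taking the infimum over $h \in \Hyp$ gives the claim. For the lower bound $\inf_{h \in \Hyp} R(h) - 2\gamma \leq \inf_{h \in \hat{\Hyp}} R(h)$, every element of $\hat{\Hyp}$ is by construction the smoothing of some $h \in \Hyp$, so its risk is at least $R(h) - 2\gamma \geq \inf_{h' \in \Hyp} R(h') - 2\gamma$; taking the infimum over $\hat{\Hyp}$ completes the argument.

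The main obstacle is the pointwise distance estimate above, and within it the justification that one may reuse the \emph{same} post-processing channel $\alpha$ on the center $\Pi_*^{(j)}$: this is exactly what converts the approximate-joint-measurability guarantee (closeness of the \emph{roots}) into closeness of the resulting full POVMs, via the data-processing inequality. The only subtlety to verify carefully is that the feasible set of the $\argmin$ is nonempty and contains the competitor $\alpha \circ \Pi_*^{(j)}$, which I noted holds by definition of fine-graining; once this is settled, everything else reduces to applying Lemma~\ref{lemma:dtv-risk} and the standard infimum manipulation.
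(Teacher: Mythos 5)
Your proof is correct and follows essentially the same route as the paper's: a pointwise total-variation estimate between each hypothesis and its smoothed version via the data-processing inequality, followed by Lemma~\ref{lemma:dtv-risk} and the standard infimum manipulation. Your explicit introduction of the competitor $\alpha \circ \Pi_*^{(j)}$ and appeal to optimality of the $\argmin$ is just a slightly more careful rendering of the paper's assertion that the smoothing reuses the same classical channel $\alpha$; the substance is identical.
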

\begin{proof}
    It suffices to prove the analogous chain of inequalities for any single hypothesis $h$ and its
    smoothed version $\hat{h}$.  For this, we use
    the data processing inequality for total variation distance: let $(\Pi, \alpha)$ and $(\hat{\Pi}, \alpha)$ be respective fine grainings of
    $h$ and $\hat{h}$, noting that by definition of
    the smoothing operation that the two classical channels are the same.  Then for any $x \in \X$,
    \begin{align}
        d_{TV}(\alpha\circ \Out(\Pi, x), \alpha \circ\Out(\hat{\Pi}, x))
        \leq d_{TV}(\Out(\Pi, x), \Out(\hat{\Pi}, x))
        \leq \gamma.
    \end{align}
    This implies, by Lemma~\ref{lemma:dtv-risk},
    that $|R(h) - R(\hat{h})| \leq 2\gamma$, which implies the stated result.
\end{proof}

This completes the proof of the sufficient condition
part of Theorem~\ref{thm:pac-povm-necessary-and-sufficient}.

\subsection{Proof of Theorem~\ref{thm:fat-shattering-joint-measurability}}
\label{proof:thm-fat-shattering-joint-measurability}

\begin{lemma}[Relating $d_{TV}$ covering and packing numbers to $\JMCovNumber(\gamma, S)$]
    \label{lemma:dTV-to-jm-covering}
    Let $S$ be a collection of POVMs
    We have 
    \begin{align}
        \JMCovNumber(\gamma, S) \leq \CNum(\gamma, S, d_{TV}) \leq \PNum(\gamma, S, d_{TV}).
    \end{align}
\end{lemma}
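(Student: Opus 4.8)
The plan is to establish the two inequalities independently, as each reduces to a short observation. For the left inequality $\JMCovNumber(\gamma, S) \le \CNum(\gamma, S, d_{TV})$, the crucial point is that every POVM is \emph{trivially} a fine-graining of itself—the pair $(\Pi, \mathrm{id})$, with $\mathrm{id}$ the identity classical channel, is a fine-graining of $\Pi$ with root $\Pi$ in the sense of Definition~\ref{def:fine-graining}. Consequently, a $d_{TV}$-ball is automatically $\gamma$-approximately jointly measurable, with the ball's center serving as the center POVM. For the right inequality $\CNum(\gamma, S, d_{TV}) \le \PNum(\gamma, S, d_{TV})$, I would invoke the standard metric-space fact that a maximal packing is a covering.

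For the left inequality, I would first fix a minimum-cardinality $\gamma$-covering of $S$ in the $d_{TV}$ metric, say with centers $c_1, \dots, c_N \in S$ where $N = \CNum(\gamma, S, d_{TV})$, and set $B_i = \{\Pi \in S : d_{TV}(\Pi, c_i) \le \gamma\}$. By the definition of a covering, $\bigcup_{i=1}^N B_i = S$. I then claim each $B_i$ is $\gamma$-jointly measurable with center POVM $\Pi_* = c_i$: for any $\Pi \in B_i$, the trivial fine-graining $(\Pi, \mathrm{id})$ has root $\Pi$ itself, and $d_{TV}(c_i, \Pi) \le \gamma$ by construction of $B_i$, which is exactly the defining condition for $\gamma$-approximate joint measurability. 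Since each $c_i \in S$ is a genuine POVM, it is an admissible witness, so $\{B_i\}_{i=1}^N$ is a $\gamma$-joint measurability covering of $S$, yielding $\JMCovNumber(\gamma, S) \le N$.

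For the right inequality, I would take a maximum $\gamma$-packing $P \subseteq S$, so that $|P| = \PNum(\gamma, S, d_{TV})$ and the elements of $P$ are pairwise at $d_{TV}$-distance exceeding $\gamma$. Maximality forces every $x \in S$ to lie within $d_{TV}$-distance $\gamma$ of some element of $P$; otherwise $P \cup \{x\}$ would be a strictly larger packing, contradicting maximality. Hence $P$ is itself a $\gamma$-covering, and $\CNum(\gamma, S, d_{TV}) \le |P| = \PNum(\gamma, S, d_{TV})$. This step is purely combinatorial and relies only on the symmetry of $d_{TV}$, not on the triangle inequality.

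I do not expect a genuine obstacle; the only care needed is bookkeeping. Specifically, one must confirm that the trivial fine-graining satisfies Definition~\ref{def:fine-graining}, and that the covering centers, being elements of $S$, qualify as joint-measurability centers—which they do, since the latter are permitted to be arbitrary POVMs. One should also fix the strict-versus-nonstrict conventions in the definitions of covering (separation $\le \gamma$) and packing (separation $> \gamma$) so that the maximal-packing-is-a-covering step goes through without a gap at the boundary.
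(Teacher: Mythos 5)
Your proposal is correct and follows essentially the same route as the paper: the left inequality is proved by observing that each $d_{TV}$-ball in a covering is $\gamma$-approximately jointly measurable, using the ball's center as the center POVM and the trivial fine-graining $(\Pi, \mathrm{id})$ of each element, while the right inequality is the standard maximal-packing-is-a-covering fact (which the paper simply cites as well-known and you spell out). No gaps.
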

\begin{proof}
    The second inequality is well-known, so we focus on the first.  Let
    $\hat{S}$ be a $\gamma$-covering of $S$.  We will show that $\hat{S}$ is also
    a $\gamma$-jm covering of $S$, which immediately implies the inequality.

    Let $\Pi$ be the center of one element of $\hat{S}$.  We claim that the closed $d_{TV}$ ball $B_{TV}(\Pi, \gamma)$ centered at $\Pi$ with radius $\gamma$ is $\gamma$-jointly measurable.  We choose the root POVM to be $\Pi$ itself.  Now, for every $\Pi' \in B_{TV}(\Pi, \gamma)$, we consider the trivial fine-graining $(\Pi', Id)$, where $Id$ is the identity channel.  Trivially, $d_{TV}(\Pi, \Pi') \leq \gamma$.  This completes the proof of the claim.
\end{proof}

The next lemma relates the $d_{TV}$-packing number of a set of POVMs to the $m$-sample $d_{TV}$-covering number, which we recall is upper bounded by a function of the fat-shattering dimension.

\begin{lemma}[Relating the $d_{TV}$-packing number to the $m$-sample $d_{TV}$-covering number]
    \label{lemma:packing-to-m-sample-covering}
    Let $\Hyp$ be a class of POVMs.
    Suppose that $k$ is some number satisfying
    $k \leq \PNum(\gamma, \Hyp, d_{TV})$.
    If $m \geq {\PNum(\gamma, \Hyp, d_{TV})\choose 2}$, then
    \begin{align}
        \PNum(\gamma, \Hyp, d_{TV})
        \leq \CNum(\gamma, \Hyp, m).
    \end{align}
\end{lemma}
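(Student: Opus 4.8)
The plan is to reduce the statement to a claim about the real-valued functions induced by the POVMs, and then to build an \emph{adversarial} sample $S$ on which a maximal $d_{TV}$-packing of $\Hyp$ survives. Recall that each two-outcome POVM $\Pi$ induces a function $f_\Pi:\X\to[0,1]$ with $f_\Pi(x) = \Pr[\Out(\Pi,x)=0]$, and that for binary outcomes the total variation distance between $\Out(\Pi_1,x)$ and $\Out(\Pi_2,x)$ is exactly $|f_{\Pi_1}(x)-f_{\Pi_2}(x)|$. Hence $d_{TV}(\Pi_1,\Pi_2)=\sup_{x\in\X}|f_{\Pi_1}(x)-f_{\Pi_2}(x)|$; that is, $d_{TV}$ is the sup-norm distance over the \emph{full} domain, whereas $\CNum(\gamma,\Hyp,m)$ is the covering number in the empirical sup-norm $\|\cdot\|_{\infty,S}$ maximized over samples $S$ of size $m$. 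Since $\|\cdot\|_{\infty,S}\le d_{TV}$ always holds, the content of the lemma is that a worst-case choice of $S$ of size $m$ suffices to recover the full packing number.

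First I would fix a maximal $\gamma$-packing $\{h_1,\dots,h_P\}$ of $\Hyp$ under $d_{TV}$, where $P=\PNum(\gamma,\Hyp,d_{TV})$ is taken finite (the infinite case is handled by truncating to any $k$ elements that are pairwise $d_{TV}$-separated). For each pair $i<j$ we have $\sup_{x}|f_{h_i}(x)-f_{h_j}(x)|=d_{TV}(h_i,h_j)>\gamma$, so there exists a \emph{witness point} $x_{ij}\in\X$ with $|f_{h_i}(x_{ij})-f_{h_j}(x_{ij})|>\gamma$; note the strict packing inequality guarantees such a point exists even when the supremum is not attained. There are at most $\binom{P}{2}$ such witnesses, and since $m\ge\binom{P}{2}$, I can collect them into a sample $S=(x_1,\dots,x_m)\in\X^m$, padding with arbitrary domain elements if fewer than $m$ distinct witnesses arise.

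On this particular $S$ every pair remains separated: $\|f_{h_i}-f_{h_j}\|_{\infty,S}\ge|f_{h_i}(x_{ij})-f_{h_j}(x_{ij})|>\gamma$. Thus the restrictions $\{f_{h_1},\dots,f_{h_P}\}$ form a $\gamma$-separated family of $P$ points in the $\|\cdot\|_{\infty,S}$ metric. The goal is then to conclude that the empirical covering number of $\Hyp$ at scale $\gamma$ on this $S$ is at least $P$, after which taking the supremum over samples yields $\CNum(\gamma,\Hyp,m)\ge\CNum_{\|\cdot\|_{\infty,S}}(\gamma,\Hyp)\ge P=\PNum(\gamma,\Hyp,d_{TV})$, as claimed.

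The main obstacle is exactly this last implication: passing from a $\gamma$-separated family to a lower bound on the $\gamma$-covering number. A single closed $\gamma$-ball can in principle contain two points that are strictly more than $\gamma$ but less than $2\gamma$ apart, so the clean lower bound of the covering number by the packing size really wants pairwise separation exceeding $2\gamma$. This is where one must be careful about the packing/covering scale, either by sharpening the witness separation (e.g. choosing the $h_i$ from a $2\gamma$-separated subfamily so that the witnesses inherit separation above $2\gamma$) or by absorbing the resulting constant factor into the radius, which only affects the logarithmic exponent in the subsequent fat-shattering bound. I would resolve it by invoking the standard packing–covering comparison at the appropriate scale together with the strict separation established above. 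The remaining bookkeeping — that surplus sample slots can be padded harmlessly and that restriction to $S$ does not collapse distinct packing elements — is routine.
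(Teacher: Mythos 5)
Your proposal follows the same route as the paper's own proof: fix a maximal $d_{TV}$-packing, extract a witness state for each of the $\binom{P}{2}$ pairs, assemble the witnesses (plus arbitrary padding) into a sample $S$ of size $m$, and observe that the pairwise separation survives restriction to $S$. Where you differ is the final step, and your caution there is warranted: the paper's proof simply asserts that ``it is easily checked that $\CNum(\gamma, \Hyp, S) \geq k$,'' but this is exactly the packing-versus-covering scale mismatch you identify, and at scale $\gamma$ the assertion is false in general. Concretely, consider the ``noisy coin'' POVMs $\Pi_0^{(c)} = cI$, $\Pi_1^{(c)} = (1-c)I$ for $c \in [0,1]$: these induce constant functions on $\X$, so restriction to any sample preserves all distances, yet the $\gamma$-packing number is roughly $1/\gamma$ while the (even internal) $\gamma$-covering number is roughly $1/(2\gamma)$, so the claimed inequality $\PNum(\gamma, \Hyp, d_{TV}) \leq \CNum(\gamma, \Hyp, m)$ fails. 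The lemma therefore needs exactly the repair you describe --- either conclude $\PNum(2\gamma, \Hyp, d_{TV}) \leq \CNum(\gamma, \Hyp, m)$, or lower bound $\CNum(\gamma/2, \Hyp, m)$ instead --- and, as you note, this costs only constants downstream: in Theorem~\ref{thm:fat-shattering-joint-measurability} the exponent would involve $\fat_{\gamma/8}$ rather than $\fat_{\gamma/4}$, and Corollary~\ref{corollary:matching-conditions} is unaffected since only finiteness matters there. Your handling of witness existence (strict separation in the packing guarantees witness states even when the supremum defining $d_{TV}$ is not attained) is likewise more careful than the paper's, which tacitly assumes attainment. In short: same approach, but the ``obstacle'' you flag is a genuine gap in the paper's own argument, and your proposed resolution is the correct one.
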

\begin{proof}
    To lower bound the $m$-sample covering number of $\Hyp$ by some number $k$, we must exhibit a set $S$ of $m$ points $\{\hat{\rho}_j\}_{j=1}^m$ such that $\CNum(\gamma, \Hyp, S) \geq k$.

    An important consequence of Lemma~\ref{lemma:dTV-to-jm-covering} is that if there exists a $\gamma$-jm partition of $\Hyp$ with cardinality at least $k$, then there exist hypotheses $h_1, ..., h_k \in \Hyp$ such that for every $i \neq j \in [k]$, we have $d_{TV}(h_i, h_j) \geq \gamma$.  This is a direct consequence of the packing number bound in the lemma.

    The fact that $d_{TV}(h_i, h_j) \geq \gamma$ means that there exists a state
    $\rho_{i,j} \in \X$ such that $d_{TV}(h_i(\rho_{i,j}), h_j(\rho_{i,j})) \geq \gamma$.  Thus, we choose our set $S$ to be $\{\rho_{i,j}\}_{i\neq j \in [k]}$, along with an arbitrary collection of $m-{k\choose 2}$ other states.  It is then easily checked that the covering number $\CNum(\gamma, \Hyp, S) \geq k$,
    and we can set $k = \PNum(\gamma, \Hyp, d_{TV})$.  This completes the proof.
\end{proof}

The proof of the theorem is then a direct application of Lemmas~\ref{lemma:dTV-to-jm-covering}, ~\ref{lemma:packing-to-m-sample-covering}, and ~\ref{lemma:covering-fat-shattering}.

\section{Definitions from quantum mechanics}
\label{sec:quantum-for-learning}

We give below a brief introduction to relevant definitions and notation from quantum information.
This is meant only to highlight the bare minimum necessary concepts for this paper.  The reader is encouraged
to consult~\cite{BkWilde_2017} for more extensive discussions of quantum information.  We note that the reader does not need any physics background at all, and the required mathematics is not beyond the training of most learning theorists.

To describe quantum states, we fix a Hilbert space $\Hilb$ over the complex numbers $\Cplx$.  A \emph{pure state} is a unit vector
in $\Hilb$, which, in the bra-ket notation of quantum mechanics, is denoted by
$\ket{v}$.  The dual space to $\Hilb$ is the vector space of linear functionals $\bra{v}:\Hilb\to \Cplx$,
where $\bra{v}\ket{w}$ is defined to be the inner product of $\ket{v}$ with $\ket{w}$.
It is frequently convenient to identify pure states $\ket{v}$ with their outer product
forms $\ketbra{v}{v}$, which are operators from $\Hilb\to\Hilb$.  The reason for this is the outer product forms fit nicely into the density matrix formalism, which we discuss next.

Mixed states (which we just call states in this paper) are more general: they are convex combinations of pure states and are also called \emph{density matrices}.  They capture statistics resulting from drawing pure states
from a probability distribution.  However, it should be noted that a single density matrix
can arise from multiple distinct convex combinations of pure states.

A quantum measurement is specified by a positive operator-valued measure (POVM), defined as follows.

\begin{definition}[POVM]
    \label{def:povm}
    A POVM with $k$ outcomes, defined on a Hilbert space $\Hilb$, is a $k$-tuple $\Pi = (\Pi_1, ..., \Pi_k)$ of 
    positive semidefinite Hermitian operators on $\Hilb$ that sum to the identity operator.
\end{definition}
We note that each operator $\Pi_j$, by virtue of being positive semidefinite Hermitian,
has a unique decomposition as $\Pi_j = M_j^{*}M_j$, where $M_j^*$ denotes the adjoint operator.

Measurement of a mixed state $\rho$ by a POVM $\Pi$ works as follows: it produces an \emph{outcome} $\Out(\Pi, \rho)$ in $\{1, ..., k\}$, which is observed by the measurer and a post-measurement state $\rho'$, which is not.  The outcome is drawn from the following distribution:
\begin{align}
    \Pr[ \Out(\Pi, \rho) = j] = \Tr{\rho \Pi_j},
\end{align}
where $\Tr{\cdot}$ denotes the trace.

The post-measurement state $\rho'$ is dependent on $\Out(\Pi, \rho)$.  If the outcome is $j$, then
$\rho'$ is given by
\begin{align}
    \rho' = \frac{M_j \rho M_j^*}{\Tr{\Pi_j \rho}}.
\end{align}

These measurement rules are collectively called the \emph{Born rule}.  One can thus think of a POVM as a particular type of stochastic map from density matrices to ordered pairs whose first component is an outcome index and whose second component is a post-measurement density matrix.  The particulars of the Born rule become important when one tries to define specific hypothesis classes and study their learning-theoretic measures of complexity (e.g., fat-shattering dimension).  It is also worth emphasizing a few phenomena that differentiate the quantum learning setting from the classical case:
\begin{itemize}
    \item 
        Unknown states cannot be copied.  That is, there is no general procedure that takes as input
        a register prepared in some state $\rho$ and produces two registers, both in state $\rho$.
        Thus, for example, a learner cannot make a ``backup copy'' of a state in the training set.
    \item
        States cannot be directly observed by a learner.  The only thing that can be observed is the outcome index of measurement of a state.
\end{itemize}

\section{Aspects of learning theory for those only familiar with quantum information}    
\label{sec:learning-for-quantum}

Here we describe the basics of classical statistical learning theory for an audience that may not be familiar with it.  Our goal is to avoid common confusions, such as the distinction between state estimation and learning.  This distinction is important in Section~\ref{sec:prior-work-discussion}.

In classical statistical learning theory, supervised learning is formulated as follows: a domain $\X$
and a co-domain $\Y$ (which we think of as the label set in a classification problem) are fixed and known to the learner.  There is an unknown joint distribution $\Dist$ on $\X\times \Y$.  A known hypothesis class $\Hyp$ consisting of deterministic functions $h:\X\to\Y$ is fixed.  These hypotheses are meant to approximate the statistical association between inputs $x \in \X$ and labels $y\in \Y$.  To measure the quality of approximation, a \emph{loss function} $\ell:\Y\times \Y$ is fixed, and the loss of a hypothesis on a pair in $\X\times \Y$ is defined by $\ell(h, x, y) = \ell(h(x), y)$.  The \emph{risk} of a hypothesis is its expected loss on a pair $(X, Y) \sim \Dist$: $R(h) = \E_{(X, Y) \sim \Dist}[\ell(h, X, Y)]$.

The learner sees a training set consisting of independent samples from $\Dist$, and the goal of the learner is to choose a hypothesis $h$ from $\Hyp$ with risk as close as possible to the worst risk of any hypothesis in the class.  Formally, a hypothesis class is $(\epsilon, \delta)$-PAC learnable if there exists a learning rule (i.e., a function from datasets to $\Hyp$) $A$ and a number of samples $m(\epsilon, \delta)$ such that, for \emph{every} distribution $\Dist$, $A$ outputs a hypothesis $h$ such that with probability at least $1-\delta$,
\begin{align}
    \label{expr:risk-bound}
    R(h) \leq \inf_{h_* \in \Hyp} R(h_*) + \epsilon.
\end{align}
Note that $\Hyp$ may be uncountably infinite, and so the infimum may not be achievable.  The number of samples required for the risk bound (\ref{expr:risk-bound}) to hold with probability $\geq 1-\delta$ is the \emph{sample complexity} of learning the hypothesis class. 

We emphasize a few things about this:
\begin{itemize}
    \item
        The above framework is \emph{distribution-free}, in the sense that the number of samples and the learner must not depend a priori on any assumption about the form that $\Dist$ takes.  However, the learner is assumed to have full knowledge of $\X$, $\Y$, $\ell(\cdot, \cdot, \cdot)$, and $\Hyp$.
    \item
        Statistical learning theory does not deal with computational efficiency, as learning rules are not algorithms.  Indeed, a hypothesis class may be PAC learnable but not efficiently so.
    \item
        The goal is to choose a hypothesis that captures the statistical association between $X$ and $Y$ as well as possible \emph{compared to any other hypothesis in the class}.  This is a distinct approach from estimating the distribution $\Dist$ of the data.  The reason that the theory is formulated this way is that the problem of selecting a hypothesis from a well-designed class $\Hyp$ can have dramatically smaller sample complexity than that of estimating $\Dist$.  This is a key difference between PAC learning and estimation.
\end{itemize}
Further philosophical grounding for statistical learning theory can be found in any of a number of textbooks on the subject (e.g.,~\cite{UnderstandingMachineLearning}).

\subsection{Empirical risk minimization}
The quintessential learning rule in classical learning theory is \emph{empirical risk minimization}.  Given a dataset $S = ((X_j, Y_j))_{j=1}^m$ and a hypothesis $h \in \Hyp$, the empirical risk of $h$ is given by
\begin{align}
    \hat{R}(h, S)
    = \frac{1}{m}\sum_{j=1}^m \ell(h(X_j), Y_j).
\end{align}
Then the empirical risk minimization (ERM) learning rule outputs the following:
\begin{align}
    h_* = \argmin_{h \in \Hyp} \hat{R}(h, S).
\end{align}
This is a central learning rule in the classical theory, as explained in Section~\ref{sec:fundamental-theorem}.

\subsection{The fundamental theorem of concept learning}
\label{sec:fundamental-theorem}

One of the fundamental results in classical statistical learning theory is the \emph{fundamental theorem of concept learning},
sometimes called the fundamental theorem of PAC learning or of statistical learning (see~\cite{UnderstandingMachineLearning}, Theorem 6.7).  It gives matching necessary and sufficient conditions for a hypothesis class to be learnable, under certain assumptions on the codomain $\Y$ and the loss function.  Specifically, there is a combinatorial notion of complexity of the hypothesis class, known as the Vapnik-Chervonenkis (VC) dimension of $\Hyp$.  The fundamental theorem of concept learning relates the VC dimension of $\Hyp$ to its learnability.  We summarize it below.

\begin{theorem}[Fundamental theorem of concept learning~\cite{UnderstandingMachineLearning}]
    Let $\Hyp$ be a hypothesis class of functions from a domain $\X$ to $\{0, 1\}$,
    and let the loss function be the misclassification loss.  Then the following are equivalent:
    \begin{enumerate}
        \item 
            The ERM rule is a successful PAC learner for $\Hyp$.
        \item 
            $\Hyp$ is PAC learnable.
        \item    
            $\Hyp$ has finite VC dimension.
    \end{enumerate}
\end{theorem}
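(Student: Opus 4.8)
The plan is to establish the cycle of implications $(3) \Rightarrow (1) \Rightarrow (2) \Rightarrow (3)$, where I number the three conditions as in the statement.  The implication $(1) \Rightarrow (2)$ is essentially definitional: the ERM rule is a particular learning rule (a function from datasets to $\Hyp$), so if it is a successful PAC learner for $\Hyp$, then $\Hyp$ is PAC learnable by the very definition of learnability.  The substantive content lies in $(3) \Rightarrow (1)$ and $(2) \Rightarrow (3)$, and these are the two directions I would develop in detail.

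For $(3) \Rightarrow (1)$, suppose $\Hyp$ has finite VC dimension $d$.  I would first introduce the growth function $\tau_{\Hyp}(m)$, defined as the maximum over all $m$-element subsets of $\X$ of the number of distinct labelings of that subset realizable by hypotheses in $\Hyp$.  The Sauer--Shelah lemma then yields the polynomial bound $\tau_{\Hyp}(m) \leq \sum_{i=0}^{d} \binom{m}{i} \leq (em/d)^{d}$ for $m \geq d$.  Next I would run the standard symmetrization (ghost-sample) argument to bound $\E_{S}\big[ \sup_{h \in \Hyp} | R(h) - \hat{R}(h, S) | \big]$ by twice the expected Rademacher complexity of the induced loss class, and then apply Massart's finite-class lemma together with the Sauer--Shelah bound to obtain a uniform deviation of order $\sqrt{ (d \log(m/d)) / m }$.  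Since this tends to $0$ as $m \to \infty$, the uniform convergence property holds; a short argument (the empirical and true risks of the ERM output and of any true-risk minimizer all lie within the uniform deviation of one another) then shows ERM is a successful PAC learner.  This is precisely the classical route, and it mirrors the Rademacher- and covering-number machinery already assembled in the proof of Theorem~\ref{thm:pac-povm-necessary-and-sufficient}.

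For $(2) \Rightarrow (3)$, I would argue the contrapositive via a no-free-lunch construction.  Assume the VC dimension is infinite, so that for every $m$ there is a set $C$ of $2m$ points shattered by $\Hyp$.  Fix an arbitrary learner, let $\Dist$ be uniform on $C$, and let the labels be governed by a target labeling chosen uniformly at random among all $2^{2m}$ labelings of $C$; by shattering each such labeling is realizable in $\Hyp$, so the best in-class risk is $0$.  A sample of size $m$ touches at most $m$ distinct points, leaving at least half of $C$ unseen, and on the unseen points the learner's prediction is independent of the random target bit there; averaging over the random target forces the expected misclassification probability on the unseen half to be $1/2$, hence the expected risk to be at least $1/4$.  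A Fubini/averaging step produces a fixed target with the same lower bound, and a reverse-Markov conversion turns the expectation bound into a constant-probability failure (with probability at least a constant, the risk exceeds a constant), contradicting $(\epsilon,\delta)$-PAC learnability for small enough $\epsilon,\delta$.  This shows learnability forces finite VC dimension.

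The main obstacle I anticipate is the quantitative bookkeeping in $(3) \Rightarrow (1)$: carrying out the symmetrization and applying Massart's lemma so that the Sauer--Shelah polynomial bound on $\tau_{\Hyp}$ genuinely yields a deviation vanishing in $m$.  The $(2) \Rightarrow (3)$ direction is conceptually cleaner but requires care in defining the random target and in lower-bounding the error on the unobserved portion of the shattered set, the crux being that shattering guarantees every labeling is realizable, so the adversary can hide the labels of points the learner has not seen.
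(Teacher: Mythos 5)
The paper offers no proof of this statement at all---it is quoted as background material, citing Theorem 6.7 of the referenced textbook---and your argument is precisely the canonical proof from that source: the trivial implication $(1) \Rightarrow (2)$, Sauer--Shelah plus symmetrization and Massart's lemma for $(3) \Rightarrow (1)$, and the no-free-lunch construction with the $1/4$ expected-risk bound and reverse-Markov step for $(2) \Rightarrow (3)$. Your proof is correct and takes essentially the same (standard) route as the paper's cited reference, with the quantitative details (the $\sqrt{d \log(m/d)/m}$ uniform deviation, the realizability of every labeling of a shattered set) handled properly.
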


\section{Further discussion of prior work and relationship to tomography problems}
\label{sec:prior-work-discussion}

Here we contrast our work with works on state and channel/process tomography.  Our main messages are as follows:
\begin{enumerate}
    \item
        It is not obvious how channel or state tomography can be used to construct a learning rule;
    \item
        In particular, none of our results seem to follow from PAC learning results for state or channel tomography.
\end{enumerate}
Our motivation for emphasizing these points is that readers sometimes confuse PAC frameworks for estimation/tomography with PAC learning, despite the fact that these are distinct problems.

We start by defining the basic versions of both state and channel tomography.

In state tomography, the problem is as follows: given $m$ copies of an unknown mixed state $\rho$,
produce an estimate $\hat{\rho}$ such that with probability $> 1-\delta$, $\|\hat{\rho} - \rho\|_F^2 \leq \epsilon$,
where $\|\cdot\|_{F}$ is the Frobenius norm. 

In channel tomography, one is given access to a quantum channel (formalized by a completely positive trace-preserving (CPTP) map) $\Phi$, and one is allowed to query it $m$ times by preparing input states, passing them through the channel, and measuring the output.  The goal is to produce an estimate
$\hat{\Phi}$ that is within $\epsilon$ of $\Phi$ in some metric, with probability $> 1-\delta$. 

In both state and channel tomography, it is known that in the finite-dimensional case, only finitely many samples are needed.  One may be tempted to try to use a solution to either problem to perform PAC learning with respect to a hypothesis class $\Hyp$.  We give a few examples to show the flaws in such approaches.

\begin{enumerate}
    \item
        One can imagine viewing the POVM that we want to learn as a quantum channel and using channel
        tomography to estimate it.  There are multiple problems with this approach: the result of channel tomography need not be a POVM in the hypothesis class.  More seriously, channel tomography requires that we be allowed to prepare registers in arbitrary states (which we know) and feed them into the channel.  Such power is not given to the learner in the PAC learning setting: in fact, input states are drawn from an unknown distribution, and they
        are not known to us.
        
    \item
        One might suppose that if we give a little bit more power to channel tomography, then it might become relevant to PAC learning.  In particular,
        suppose that, by any method whatsoever, we could produce a POVM $\Pi$ (not necessarily in the hypothesis class!) such that $R(\Pi) \leq \epsilon + \inf_{\Pi_*} R(\Pi_*)$, where the infimum is taken over all POVMs.
        A learning rule that uses this ability must still produce an $h_*$ that lies in the hypothesis class $\Hyp$ and is within $\epsilon$ of $\inf_{h \in \mathcal{H}} R(h)$ with probability at least $1-\delta$.
        One natural idea would be to choose $h_* = \argmin_{h \in \Hyp} d_{TV}(h, \Pi)$.  But this does not solve the problem: there exist learning scenarios in which there are multiple,     well-separated $\Pi_*$ that achieve nearly the minimum possible risk (which is not necessarily $0$, since the distribution on input states may place positive probability on states that are close together) over all possible POVMs (not just the ones in the hypothesis class).  In this case, channel tomography may output a $\Pi$ that is far from every hypothesis in the class, while there may exist a hypothesis $h \in \mathcal{H}$ that is very close to some \emph{other} POVM $\Pi_*$ with nearly minimal risk.  This would violate the agnostic PAC learning condition.  
        
    \item
        One might instead think to use state tomography.  In particular, the Choi-Jamio\l{}kowski isomorphism result states that, given a CPTP map, if we input a suitably defined maximally mixed state in a larger space and have access to the extended CPTP map that acts via the identity on the environment, then the resulting state completely characterizes the CPTP map.
        We should note that the learner cannot construct this output state, because the learner cannot provide arbitrary inputs to the CPTP map (the inputs are decided strictly by the data-generating distribution $\Dist$, not the learner, as is the situation in classical statistical learning theory).  Thus, quantum state tomography \textbf{cannot} be brought to bear to recover this state, and so we cannot even estimate it, let alone attack the learning
        problem.
\end{enumerate}

\section{A formalization of quantum learning rules}
\label{sec:pac-formalization}

There is some mathematical subtlety in defining a learning rule in the quantum
setting.  In particular, what it means, informally, for a learning rule to only
be able to interact with a quantum register by measurement may be clear, but
the formalization in terms of mathematical objects is less straightforward.  For completeness, we give such a formalization in this section, in terms of Markov decision processes.

We define the following Markov decision process for a given dataset $S = ( (X_1, Y_1), ..., (X_m, Y_m))$: the \emph{state} $Z_0$ is initialized to $\tensor_{j=1}^m X_j$.  At any timestep $t$, the set of possible actions consists of POVMs operating on the state $Z_t$, producing an observable outcome $\omega_t$ via the Born rule.  The state $Z_{t+1}$ is then derived from $S_j$ again via the Born rule. 
    
A POVM learning rule specifies a policy for this MDP, where, at each timestep $t \geq 0$, the action $A_t$ at time $t$ is conditionally independent of $Z_{j}$ for any $j$, given the outcomes $\omega_0, ..., \omega_{t-1}$.  Finally, the learning rule specifies a conditional distribution from outcome sequences to hypotheses $h \in \Hyp$.

\section{More details on sample complexity bounds for variational quantum circuits}
\label{sec:qnn-discussion}
Here we give more details for our sample complexity bounds for hypothesis classes corresponding to variational quantum circuits.

In great generality, one can define the following hypothesis class.

\begin{definition}[General variational quantum circuit]
    \label{def:general-variational}
    Let $\Hilb$ be a Hilbert space with dimension $d$.  We fix a POVM $\Pi$
    on $\Hilb$ with two outcomes.  We define the hypothesis class
    $\GeneralVar$ to consist of hypotheses parametrized by an arbitrary unitary
    operator $U$ from $\Hilb$ to $\Hilb$ that apply $U$ to the input state
    $\rho$, then measure the resulting state with $\Pi$.
\end{definition}

We have the following theorem giving the sample complexity of $\GeneralVar$.
\begin{theorem}[Sample complexity bound for $\GeneralVar$]
    The class $\GeneralVar$ is $(\epsilon, \delta)$-PAC learnable with sample complexity
    \begin{align}
        n_{\GeneralVar}(\epsilon, \delta)
        \leq 
        O(  \frac{8N}{\epsilon^2}\log\frac{2N}{\delta})
        \leq (C/\epsilon)^{d+2} \log(1/\delta).
    \end{align}
\end{theorem}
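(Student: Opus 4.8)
The plan is to obtain the first inequality as an immediate instance of Theorem~\ref{thm:finite-dimensional-learnable} and then to reduce the second inequality to a covering-number estimate for the conjugation orbit of the fixed measurement $\Pi$. Since $\dim\Hilb = d < \infty$, the span of any domain $\X$ is finite-dimensional, so Theorem~\ref{thm:finite-dimensional-learnable} applies and yields
\begin{align}
    n_{\GeneralVar}(\epsilon, \delta)
    \leq \frac{8N}{\epsilon^2}\log\frac{2N}{\delta},
    \qquad N = \CNum_{d_{TV}}(\epsilon/4, \GeneralVar),
\end{align}
which is exactly the first inequality. It therefore remains to bound the $\epsilon/4$-total-variation covering number $N$.

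Writing $\Pi_U$ for the hypothesis that applies $U$ and then measures with $\Pi$, the outcome-$0$ probability on a state $\rho$ is $\Tr{(U^*\Pi_0 U)\rho}$, so for any two unitaries
\begin{align}
    d_{TV}(\Pi_{U_1}, \Pi_{U_2})
    = \sup_{\rho} \abs{\Tr{(U_1^*\Pi_0 U_1 - U_2^*\Pi_0 U_2)\rho}}
    = \norm{U_1^*\Pi_0 U_1 - U_2^*\Pi_0 U_2}_{op}.
\end{align}
The telescoping identity $U_1^*\Pi_0 U_1 - U_2^*\Pi_0 U_2 = U_1^*\Pi_0(U_1 - U_2) + (U_1 - U_2)^*\Pi_0 U_2$, together with $\norm{\Pi_0}_{op}\le 1$ and unitarity, shows that $U\mapsto \Pi_U$ is $2$-Lipschitz: $d_{TV}(\Pi_{U_1}, \Pi_{U_2}) \le 2\norm{U_1 - U_2}_{op}$. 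Hence it suffices to cover the effective operators $\{U^*\Pi_0 U\}$ -- equivalently, the unitary orbit of $\Pi_0$ -- to operator-norm accuracy $\epsilon/8$, and any such cover induces an $\epsilon/4$-cover of $\GeneralVar$ in $d_{TV}$.

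The effective operators form a bounded subset of the real vector space of Hermitian operators on $\Hilb$, so a standard volumetric packing argument bounds its $\epsilon/8$-covering number by $(C'/\epsilon)^{D}$, where $D$ is the real dimension of the set being covered. Substituting $N \le (C'/\epsilon)^{D}$ into the sample-complexity bound gives
\begin{align}
    \frac{8N}{\epsilon^2}\log\frac{2N}{\delta}
    \leq \frac{8(C'/\epsilon)^{D}}{\epsilon^2}\left(\log 2 + D\log(C'/\epsilon) + \log\tfrac{1}{\delta}\right),
\end{align}
and, for $\delta$ small, the additive logarithmic term is dominated by a factor $\log(1/\delta)$ while the polylogarithmic $\log(C'/\epsilon)$ is absorbed into the polynomial rate at the cost of enlarging $C$; together with the two surplus factors of $1/\epsilon$ this collapses to the stated form $(C/\epsilon)^{d+2}\log(1/\delta)$.

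The main obstacle is the covering count in the third step, and specifically identifying the exponent $D$. The effective operators $U^*\Pi_0 U$ sweep out a coadjoint-type orbit whose dimension is governed by the eigenvalue multiplicities of $\Pi_0$ and is at most $d^2$; matching the exponent in the statement therefore requires a careful accounting of this orbit's metric entropy rather than the crude bound obtained by covering the entire Hermitian operator space. The Lipschitz reduction of the second step and the absorption of logarithmic factors in the third are routine, so the crux of the argument is entirely the geometry of the orbit of effect operators and its covering number.
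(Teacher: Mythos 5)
Your first two steps track the paper's proof: the paper likewise invokes Theorem~\ref{thm:finite-dimensional-learnable} to reduce everything to bounding the $\epsilon/4$-TV covering number $N$ of $\GeneralVar$, and its reduction from TV distance to operator-norm distance on unitaries is the same Lipschitz observation you make explicitly (your passage to the orbit $\{U^*\Pi_0 U\}$ is, if anything, a slightly sharper formulation). The problem is your third step, which you yourself flag as the crux and then leave unproved: you bound $N$ by $(C'/\epsilon)^{D}$ with $D$ the real dimension of the set of effect operators, note only that $D \le d^2$, and defer the identification of $D$ to ``a careful accounting of the orbit's metric entropy.'' This is a genuine gap, not a routine detail: with $D = d^2$ your argument proves only $n \le (C/\epsilon)^{d^2+2}\log(1/\delta)$, and orbit geometry cannot repair it, because for a generic $\Pi_0$ (distinct eigenvalues) the unitary orbit $\{U^*\Pi_0 U\}$ is $U(d)$ modulo a maximal torus and has real dimension $d^2 - d$, still quadratic in $d$. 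So the route you propose cannot, as described, yield the stated exponent $d+2$.

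The paper closes this step differently: it bounds the TV covering number of $\GeneralVar$ by the $L_1$-operator-norm covering number of the set of $d\times d$ unitary matrices and then quotes Theorem~7 of~\cite{SzarekCoveringNumbers}, asserting the bound $N \le d(C/\epsilon)^{d}$; substituting this into the generic sample-complexity bound and absorbing factors polynomial in $d$ and $\epsilon$ gives $(C/\epsilon)^{d+2}\log(1/\delta)$. In other words, where you reduce to the geometry of the orbit and stop, the paper's proof rests entirely on that quoted metric-entropy estimate for the unitary group. Your dimension count is in fact in tension with that estimate: a volumetric lower bound for $U(d)$, or for a generic orbit, scales like $(c/\epsilon)^{\Theta(d^2)}$ with a universal constant $c$, so the linear-in-$d$ exponent is exactly the point where the cited external bound does all the work, and your skepticism that the orbit dimension alone gives $d+2$ is well founded. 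To complete your proof in the paper's spirit you would need to either reproduce that unitary-group entropy bound or accept the weaker exponent your volumetric argument actually delivers.
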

\begin{proof}
    By Theorem~\ref{thm:finite-dimensional-learnable}, it is sufficient to upper bound the
    $\epsilon/4$-TV-covering number of $\GeneralVar$.  We first note that for any $\gamma > 0$,
    the $\gamma$-TV covering number of $\GeneralVar$ is upper bounded by the $\gamma$-$L_1$-operator norm covering
    number of the set of $d\times d$ unitary matrices, which in turn is upper bounded by the $\gamma$-$L_1$ norm covering number of the set of $d\times d$ unitary matrices, viewed as vectors.

    The $\gamma$-$L_1$-operator norm covering number of the set of $d\times d$ unitary matrices
    is
    \begin{align}
        \leq d(C/\gamma)^{d},
    \end{align}
    for some positive constant $C$, by upper bounding by the $\gamma$-$L_2$-operator norm covering number and using Theorem~7 of~\cite{SzarekCoveringNumbers}.
    
    This implies that $N \leq d(C/\gamma)^d$, and setting $\gamma = \epsilon/4$, we get
    \begin{align}
        N \leq d(C/\epsilon)^d,
    \end{align}
    resulting in the bound
    \begin{align}
        n(\epsilon, \delta)
        \leq O(  \frac{d(C/\epsilon)^d}{\epsilon^2}\log\frac{2d(C/\epsilon)^d}{\delta}).
    \end{align}
    Upper bounding factors that are polynomial in $d$ and $\epsilon$ by $(C/\epsilon)^d$, this can be simplified to
    \begin{align}
        n(\epsilon, \delta)
        \leq (C/\epsilon)^{d+2} \log(1/\delta),
    \end{align}
    which completes the proof.
\end{proof}

This sample complexity scales exponentially with the dimension -- as one might expect, since the hypotheses may be thought of as applying an arbitrary circuit to the input, then measuring by a fixed POVM, and so such a hypothesis class is
not used in practice.  Our purpose in spelling out this example is to illustrate the type of analysis that one would undertake in applying our results to a specific hypothesis class.
Instead, the approach taken in works on variational quantum circuits is to
constrain the form of the unitary operator.  Specifically, ~\cite{MohsenQNN} constrains $U$ to
be \emph{bandlimited} in terms of its Fourier spectrum, which reduces the number of trainable parameters and, hence, the TV-covering number of the hypothesis space to a polynomial value in the dimension.

\end{document}